\title{Approximation Bounds for Recurrent Neural Networks with Application to Regression}
\author{
Yuling Jiao\thanks{School of Artificial Intelligence and Hubei Key Laboratory of Computational Science, Wuhan University, Wuhan, China. Email: yulingjiaomath@whu.edu.cn}
\and
Yang Wang\thanks{Department of Mathematics, University of Hong Kong, Hong Kong, China. Email: yang.wang@hku.hk}
\and
Bokai Yan\thanks{Department of Mathematics, The Hong Kong University of Science and Technology, Hong Kong, China. Email: byanac@connect.ust.hk}
}
\date{}
\begin{document}
\maketitle

\begin{abstract}
We study the approximation capacity of deep ReLU recurrent neural networks (RNNs) and explore the convergence properties of nonparametric least squares regression using RNNs. We derive upper bounds on the approximation error of RNNs for H\"older smooth functions, in the sense that the output at each time step of an RNN can approximate a H\"older function that depends only on past and current information, termed a past-dependent function. This allows a carefully constructed RNN to simultaneously approximate a sequence of past-dependent H\"older functions. We apply these approximation results to derive non-asymptotic upper bounds for the prediction error of the empirical risk minimizer in regression problem. Our error bounds achieve minimax optimal rate under both exponentially $\beta$-mixing and i.i.d. data assumptions, improving upon existing ones. Our results provide statistical guarantees on the performance of RNNs.
\end{abstract}

\section{Introduction}

Recurrent neural networks (RNNs; \cite{rumelhart1986learning}) have attracted much attention in machine learning and artificial intelligence communities in the past few decades. As one of the earliest attempts, RNNs are designed to capture temporal dependencies and process sequential data by retaining information from previous inputs in the current hidden state. RNNs have achieved remarkable successes in many machine learning tasks such as machine translation \cite{cho2014learning}, stock price prediction \cite{selvin2017stock}, weather forecasting \cite{salman2015weather}, and speech recognition \cite{graves2014towards, bahdanau2016end}. However, theoretical explanations for their empirical success are not well established. Many fundamental questions on the theory and training dynamics of RNNs remain to be answered.

One of the fundamental questions concerning RNNs is their approximation capacity, or expressiveness: what kinds of sequence-to-sequence relationships can RNNs model? As early as the 1990s, the universal approximation theorem for shallow feedforward neural networks (FNNs) was widely discussed \cite{cybenko1989approximation, hornik1991approximation, pinkus1999approximation}, stating that a neural network with a single hidden layer can approximate any continuous function to any specified precision. Universal approximation results for shallow RNNs were subsequently proposed, for example, by \cite{matthews1993approximating, Doya1993UniversalityOF, schafer2007recurrent} in a discrete-time setting and \cite{funahashi1993approximation, chow2000modeling, li2005approximation, maass2007computational, nakamura2009approximation} in a continuous-time setting. These early studies often focused on scenarios where the target relationship is generated by some underlying dynamical system, typically in the form of difference or differential equations. The recent advancements in deep learning have sparked considerable research into the approximation theory of deep neural networks. The approximation rates of deep ReLU FNNs have been well studied for various function classes, such as continuous functions \cite{yarotsky2017error, yarotsky2018optimal, shen2020deep}, smooth functions \cite{yarotsky2020phase, lu2021deep}, piecewise smooth functions \cite{petersen2018optimal}, and shift-invariant spaces \cite{yang2022approximation}. In the context of RNNs, \cite{li2022approximation} demonstrated that temporal relationships can be effectively approximated by linear continuous-time dynamics, and \cite{hoon2023minimal} proved the universal approximation theorem for deep narrow RNNs. However, the approximation rates and the precise sense in which they apply to general RNNs remain unclear.

Another fundamental question concerns the effectiveness of RNNs: can RNNs perform well in statistical and machine learning problems, and more specifically, can RNN-based estimators achieve the optimal rate of convergence? In recent years, Transformers \cite{vaswani2017attention} have gained significant popularity and now dominate many areas in sequence modeling due to their superior experimental performance and ease of parallel training. It is often argued that Transformers outperform RNNs because of the perceived limitations of RNNs in modeling long-term dependencies \cite{bengio1994learning, hochreiter2001gradient, wen2024rnns}. Although recent variants of RNNs have shown effectiveness in long-sequence modeling through experimental results \cite{lai2018modeling, qin2023hierarchically, lin2023segrnn}, these discussions are primarily based on empirical evidence and rarely grounded in rigorous mathematical or statistical analysis. An important problem in statistics and machine learning is nonparametric regression, which seeks to estimate an unknown target regression function $f^*$ from finite observations. Recent studies have provided valuable insights into the convergence properties of nonparametric regression estimators based on deep FNNs \cite{suzuki2019adaptivity, bauer2019deep, schmidt2019deep, schmidt2020nonparametric, nakada2020adaptive, farrell2021deep, chen2022nonparametric, kohler2022estimation}. However, these analyses and convergence bounds typically assume that the observations are independently and identically distributed (i.i.d.). In many machine learning applications, this assumption does not hold. For example, in time series prediction and natural language processing, observations often exhibit temporal dependence, making the i.i.d. assumption too restrictive. Thus, it is essential to explore scenarios where this condition is relaxed. Various relaxations of the i.i.d. assumption have been proposed in the machine learning and statistics literature. A frequently used alternative is to assume that observations are drawn from a stationary mixing distribution, where the dependence between observations weakens over time. This assumption has become standard and is widely adopted in previous studies \cite{yu1994rates, mohri2008rademacher, steinwart2009fast, mohri2010stability, ralaivola2010chromatic, agarwal2012generalization, shalizi2013predictive, alquier2013prediction, kuznetsov2017generalization, ren2024statistical}. Under certain mixing conditions, carefully constructed wavelet-based and kernel-based estimators have been shown to achieve the optimal minimax rate \cite{yu1993density, viennet1997inequalities}. It remains unclear whether RNNs, which are specifically designed to handle sequential data with temporal dependencies, perform well in regression problem with dependent data.


In this paper, we study the approximation ability of recurrent neural networks and their performance in nonparametric regression.

\subsection{Contributions}

The main contributions of this work are to pose and address the following questions:

\begin{enumerate}
\item[Q1.] \textit{What kinds of sequence-to-sequence mappings can recurrent neural networks approximate, and how large must the network be to achieve a given approximation accuracy?}

\item[Q2.] \textit{In the context of nonparametric regression, can an RNN-based estimator attain the optimal rate of convergence?}
\end{enumerate}

We provide a comprehensive discussion of our setting and theoretical results in Section \ref{sec: 2}, and here only touch on some of the main aspects.



\begin{itemize}
\item We theoretically prove the equivalence between RNNs and FNNs. Specifically, we show that any RNN can be represented by a slightly larger FNN, and conversely, any FNN can be represented by a slightly larger RNN, illustrating that the function classes of RNNs and FNNs are comparable in size.

\item We propose, to the best of our knowledge, the first approximation rate for deep ReLU RNNs. We show that the output at each time step of an RNN can approximate a H\"older function that depends only on past and current tokens, referred to as a past-dependent function, and a carefully constructed RNN can simultaneously approximate a sequence of such past-dependent H\"older functions. Specifically, given a sequence of functions $\{f^{(t)}\}_{t=1}^N$ where each $f^{(t)}: [0,1]^{d_x \times t} \rightarrow \mathbb{R}^{d_y}$ depends on the $t$-prefix of the input sequence $X$ and is H\"older continuous with smoothness index $\gamma$, there exists a recurrent neural network $\mathcal{N}: \mathbb{R}^{d_x \times N} \rightarrow \mathbb{R}^{d_y \times N}$ described by width $W \asymp J \log J$ and depth $L \asymp I \log I$ for some $I, J \in \mathbb{N}$, such that for each time step $t \in \{1, \ldots, N\}$,
\begin{align*}
\sup_{X \in [0,1]^{d_x \times N}} \|\mathcal{N}(X)[t] - f^{(t)}(x[1:t])\|_{\infty} \lesssim (J I)^{-2 \gamma / (d_x t)}.
\end{align*}
We also give lower bounds on the best possible approximation error achievable by RNNs with such width and depth for H\"older smooth functions. The upper and lower bounds match up to constants and logarithmic factors, demonstrating the tightness of our bounds and the effectiveness of RNNs in approximating past-dependent relationship. These approximation results are of independent interest and may be useful in other problems.

\item We present a comprehensive error analysis of the nonparametric regression problem with weakly dependent data using RNNs. Specifically, to estimate an unknown H\"older continuous target function $f^*: [0,1]^{d_x \times N} \rightarrow \mathbb{R}$ with smoothness index $\gamma$, given an exponentially $\beta$-mixing sequence of $n$ observation pairs, we show that the RNN-based empirical risk minimizer converges in expectation at a rate of $n^{-\frac{2 \gamma}{d_x N+2 \gamma}}$ up to logarithmic factors, matching the optimal minimax rate established by \cite{stone1982optimal, viennet1997inequalities}. For algebraically $\beta$-mixing and i.i.d. settings, we obtain convergence rates of $n^{-\frac{2 r \gamma}{(r+1) d_x N + (2r+4) \gamma}}$ and $n^{-\frac{2 \gamma}{d_x N + 2 \gamma}}$, respectively, where $r$ depends on the polynomial decay rate of the $\beta$-mixing coefficients. These results are derived from a new error decomposition that bounds the excess risk by the sum of the approximation error, generalization error and dependence error. This decomposition extends the classical result for the i.i.d. data to the $\beta$-mixing setting.


\end{itemize}

\subsection{Organization}
The rest of the paper is organized as follows. In Section \ref{sec: 2}, we introduce the ReLU recurrent neural networks and the setup of the regression problem, and present our main results. Section \ref{sec: 3} is devoted to proving Proposition \ref{proposition: 3}, which establishes the equivalence between RNNs and FNNs. In Section \ref{sec: 4}, we prove Theorem \ref{theorem: 1}, which provides a novel approximation error bound for deep RNNs. In Section \ref{sec: 5}, we prove Theorem \ref{theorem: 4} by establishing a new oracle inequality involving dependent data. Finally, concluding remarks are given in Section \ref{sec: 6}.

\section{Summary of Results}\label{sec: 2}

In this section, we present our main results. For two sequences $\{a_n\}$ and $\{b_n\}$, we use the notation $a_n \lesssim b_n$ and $a_n \gtrsim b_n$ to indicate $a_n \leq C b_n$ and $a_n \geq C b_n$, respectively, for some constant $C > 0$ independent of $n$. Additionally, $a_n \asymp b_n$ signifies that both $a_n \lesssim b_n$ and $a_n \gtrsim b_n$ hold. For $\alpha \in \mathbb{R}$, the floor and ceiling functions are denoted by $\lfloor \alpha \rfloor$ and $\lceil \alpha \rceil$, which round $\alpha$ down to the nearest integer and up to the nearest integer, respectively. The set of positive integers is denoted by $\mathbb{N}=\{1,2, \ldots\}$. We also denote $\mathbb{N}_0=\mathbb{N} \cup\{0\}$ for convenience. For a set $\mathcal{S}$, $|\mathcal{S}|$ represents its cardinality. We write $\|f\|_{L^p(\lambda)} = (\int |f|^p \dd \lambda)^{1/p}$ for $p \in [1, \infty)$, where $\lambda$ is a measure on the domain of $f$, and $\|f\|_{L^\infty(\Omega)} = \sup_{x \in \Omega} |f(x)|$ for a domain $\Omega$.

\subsection{Recurrent neural networks}

\begin{figure}[t]
\centering
\begin{tikzpicture}
    \node (image) at (0,0) {\includegraphics[width=0.7\linewidth]{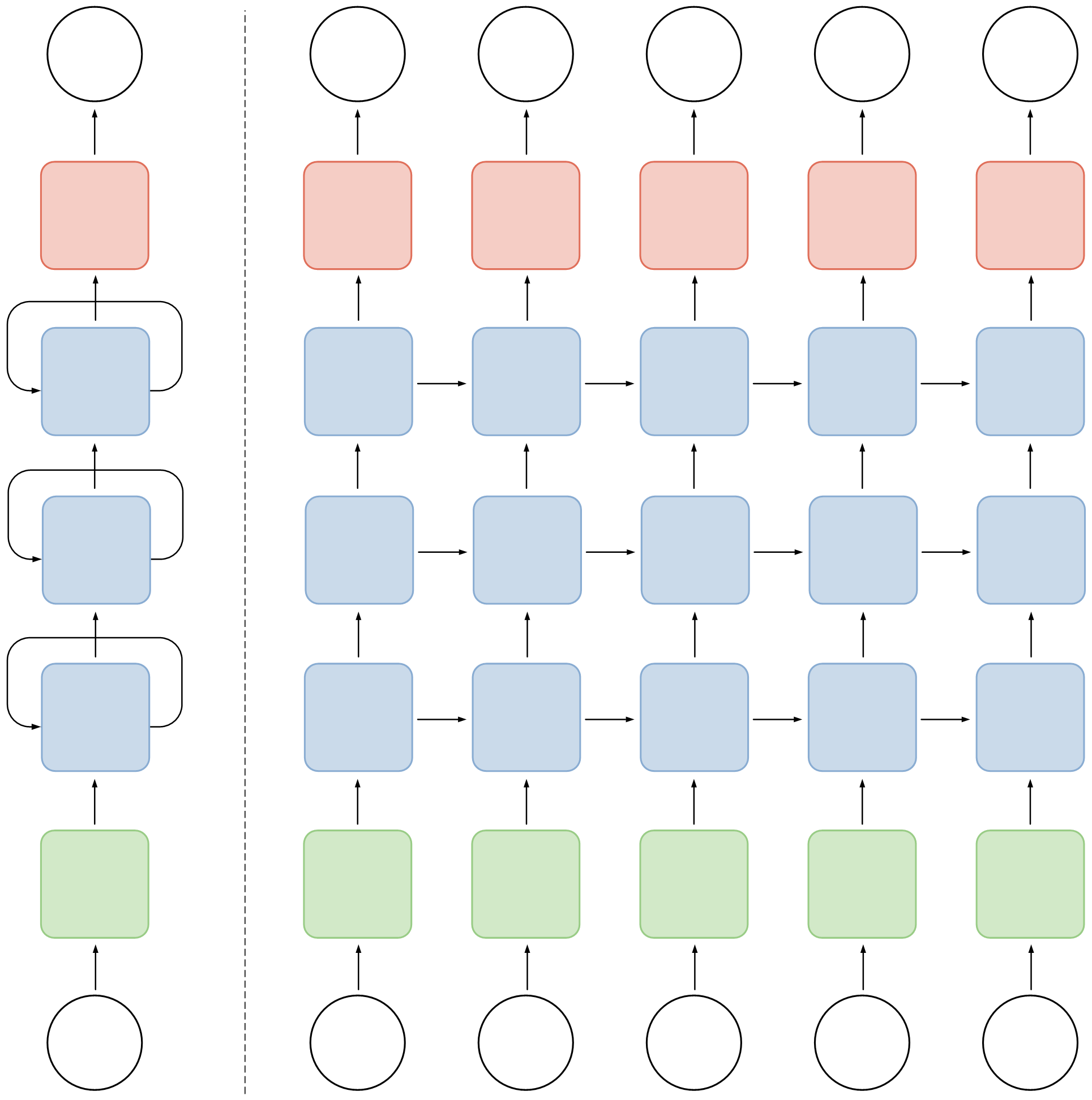}};
    \node at (-4.6, -5) {$X$};
    \node at (-4.6, -3.4) {$\mathcal{P}$};
    \node at (-4.6, -1.7) {$\mathcal{R}_1$};
    \node at (-4.6, 0) {$\mathcal{R}_2$};
    \node at (-4.6, 1.7) {$\mathcal{R}_3$};
    \node at (-4.6, 3.4) {$\mathcal{Q}$};
    \node at (-4.6, 5.05) {$Y$};

    \node at (-1.95, -5) {$x[1]$};
    \node at (-0.2, -5) {$x[2]$};
    \node at (1.5, -5) {$x[3]$};
    \node at (3.2, -5) {$x[4]$};
    \node at (4.95, -5) {$x[5]$};

    \node at (-1.95, 5.05) {$y[1]$};
    \node at (-0.2, 5.05) {$y[2]$};
    \node at (1.5, 5.05) {$y[3]$};
    \node at (3.2, 5.05) {$y[4]$};
    \node at (4.95, 5.05) {$y[5]$};
\end{tikzpicture}
\caption{An illustration of the network architecture of $Y = \mathcal{N}(X) = \mathcal{Q} \circ \mathcal{R}_3 \circ \mathcal{R}_2 \circ \mathcal{R}_1 \circ \mathcal{P}(X)$, where $\mathcal{P}$ denotes the embedding map, $\mathcal{R}_1$, $\mathcal{R}_2$, and $\mathcal{R}_3$ are the recurrent layers, and $\mathcal{Q}$ represents the projection map. In this network, the depth is $L = 3$ and the length of the input sequence is $N=5$.}
\label{figure: 1}
\end{figure}

This subsection introduces the definition of network architecture. Let $d_x$ and $d_y$ represent the dimensions of the input and output spaces, respectively, which depend only on the underlying problem. For a sequence of length $N$, $X = (x[1], x[2], \ldots, x[N])$, we refer to the sequential index $t$ as time step and each $x[t]$ as a token. The notation $x[t]_j$ denotes the $j$-th component of the $t$-th token. We use continuous indices to denote parts of the original sequence or vector, for instance, $x[a:b] = (x[a], \ldots, x[b])$ or $x[t]_{a:b} = (x[t]_a, x[t]_{a+1}, \ldots, x[t]_b)^\top \in \mathbb{R}^{b-a+1}$. In general, an RNN maps a sequence of length $N$, $X = (x[1], x[2], \ldots, x[N])$ with each $x[t] \in \mathbb{R}^{d_x}$, to another sequence of the same length, $Y = (y[1], y[2], \ldots, y[N])$ with each $y[t] \in \mathbb{R}^{d_y}$, through a series of operations. We proceed to define the necessary components.

We define the token-wise linear maps $\mathcal{P}: \mathbb{R}^{d_x \times N} \rightarrow \mathbb{R}^{W \times N}$ and $\mathcal{Q}: \mathbb{R}^{W \times N} \rightarrow \mathbb{R}^{d_y \times N}$ to connect the input, hidden state, and output space. For a given matrix $P \in \mathbb{R}^{W \times d_x}$, an input embedding map $\mathcal{P}(X)[t] := Px[t]$ transforms the input vector to the hidden state space. Similarly, for a given matrix $Q \in \mathbb{R}^{d_y \times W}$, a projection map $\mathcal{Q}(Y)[t] := Qy[t]$ projects a hidden state onto the output space.

A recurrent layer $\mathcal{R}$ maps an input sequence $X=(x[1], x[2], \ldots, x[N]) \in \mathbb{R}^{W \times N}$ to an output sequence $Y=(y[1], \ldots, y[N]) \in \mathbb{R}^{W \times N}$ by operating within the hidden state space. Here, the input $X$ is not the raw input sequence, but rather an intermediate representation, such as the output from an input embedding layer or a preceding recurrent layer. The layer computes the output at each time step $t$ as
\begin{align*}
y[t]=\mathcal{R}(X)[t]=\sigma(A \mathcal{R}(X)[t-1] + B x[t] + c), \quad t = 1, \ldots, N,
\end{align*}
where $\sigma(x) = \max\{x, 0\}$ is the element-wise ReLU activation function, $A, B \in \mathbb{R}^{W \times W}$ are the weight matrices, and $c \in \mathbb{R}^{W}$ is the bias vector. The initial state $\mathcal{R}(X)[0]$ can be an arbitrary constant vector, which, for simplicity, is the zero vector $0 \in \mathbb{R}^{W}$ in our setting.

A deep recurrent neural network $\mathcal{N}$ is a composition of an input embedding map $\mathcal{P}$, an output embedding map $\mathcal{Q}$, and $L$ recurrent layers $\mathcal{R}_1, \ldots, \mathcal{R}_L$ (see Figure \ref{figure: 1} for an illustration), defined as
\begin{align}
\label{eq: 1}
\mathcal{N} := \mathcal{Q} \circ \mathcal{R}_L \circ \cdots \circ \mathcal{R}_1 \circ \mathcal{P}.
\end{align}
We denote $\mathcal{RNN}$ as a class of recurrent neural networks:
\begin{align*}
\mathcal{RNN}_{d_x, d_y}(W, L) = \{\mathcal{N}: \mathcal{N}(X) \text{ is of the form (\ref{eq: 1})}\}.
\end{align*}
The numbers $W$ and $L$ are called the width and depth of the network, respectively. We often omit the subscripts and simply denote it by $\mathcal{RNN}(W, L)$, when the input dimension $d_x$ and output dimension $d_y$ are clear from contexts.

In the following proposition, we summarize some basic operations on recurrent neural networks. These operations will be useful for construction, when we study the approximation capacity of RNNs. The proof is provided in Appendix \ref{sec: A.1}.

\begin{proposition}
\label{proposition: 1}
Let $\mathcal{N}_1 \in \mathcal{RNN}_{d_{x,1}, d_{y,1}}(W_1, L_1)$ and $\mathcal{N}_2 \in \mathcal{RNN}_{d_{x,2}, d_{y,2}}(W_2, L_2)$.
\begin{enumerate}[label=(\roman*)]
\item Inclusion: If $d_{x,1}=d_{x,2}, d_{y,1}=d_{y,2}, W_1 \leq W_2$ and $L_1 \leq L_2$, then $\mathcal{RNN}_{d_{x,1}, d_{y,1}}(W_1, L_1) \subseteq \mathcal{RNN}_{d_{x,2}, d_{y,2}}(W_2, L_2)$. 

\item Composition: If $d_{y,1}=d_{x,2}$, then $\mathcal{N}_2 \circ \mathcal{N}_1 \in \mathcal{RNN}_{d_{x,1}, d_{y,2}}(\max\{W_1, W_2\}, L_1+L_2)$. 

\item Concatenation: If $d_{x,1}=d_{x,2}$, define $\mathcal{N}(X):=(\mathcal{N}_1(X), \mathcal{N}_2(X))$, then $\mathcal{N} \in \mathcal{RNN}_{d_{x,1}, d_{y,1}+d_{y,2}}(W_1+W_2, \max\{L_1, L_2\})$.

\item Linear combination: If $d_{x,1}=d_{x,2}$ and $d_{y,1}=d_{y,2}$, then for any $c_1, c_2 \in \mathbb{R}$, $c_1 \mathcal{N}_1 + c_2 \mathcal{N}_2 \in \mathcal{RNN}_{d_{x,1},d_{y,1}} (W_1+W_2, \max\{L_1, L_2\})$.

\end{enumerate}
\end{proposition}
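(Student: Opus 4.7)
The plan is to handle the four items by explicit construction, leaning on two observations: (a) any token-wise linear map $M$ applied before a recurrent layer $\sigma(A y[t-1] + B x[t] + c)$ can be absorbed by replacing $B$ with $BM$; and (b) since ReLU acts as the identity on nonnegative inputs, a recurrent layer with $A = 0$, $B = I$, $c = 0$ implements the identity on the output of any earlier recurrent layer.

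For (i), I would pad the weight matrices $P, A_\ell, B_\ell, c_\ell, Q$ of $\mathcal{N}_1$ with zero rows and columns so that the extra hidden coordinates stay at zero throughout the forward pass, giving width $W_2$; to increase depth, insert $L_2 - L_1$ identity recurrent layers between the last recurrent layer and the projection map, using observation (b). For (ii), I would first apply (i) to enlarge both networks to width $W = \max\{W_1, W_2\}$, then write
\begin{align*}
\mathcal{N}_2 \circ \mathcal{N}_1 = \mathcal{Q}_2 \circ \mathcal{R}_{L_2,2} \circ \cdots \circ \mathcal{R}_{1,2} \circ (\mathcal{P}_2 \circ \mathcal{Q}_1) \circ \mathcal{R}_{L_1,1} \circ \cdots \circ \mathcal{R}_{1,1} \circ \mathcal{P}_1,
\end{align*}
and absorb the middle token-wise linear map $\mathcal{P}_2 \circ \mathcal{Q}_1$ into the input weight of $\mathcal{R}_{1,2}$ via observation (a). The result is a network of width $W$ and depth $L_1 + L_2$.

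For (iii), I would define the stacked embedding so that $\mathcal{P}(X)[t] = (P_1 x[t], P_2 x[t]) \in \mathbb{R}^{W_1+W_2}$, take block-diagonal recurrent layers with blocks $(A_{\ell,i}, B_{\ell,i}, c_{\ell,i})$ from $\mathcal{N}_i$, and combine the projections into a block-diagonal matrix mapping to $\mathbb{R}^{d_{y,1}+d_{y,2}}$. The two subcomputations then proceed in parallel without interaction. If $L_1 \neq L_2$, I apply (i) to the shallower network first to equalize depths. For (iv), the map $(y_1, y_2) \mapsto c_1 y_1 + c_2 y_2$ is token-wise linear; starting from the concatenation in (iii), I absorb it into the final projection by replacing the block-diagonal $Q$ with the single matrix $[c_1 Q_1,\, c_2 Q_2]$.

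The one subtlety is observation (b): the identity-layer trick requires the input to each inserted layer to be nonnegative, which holds immediately after any recurrent layer but \emph{not} immediately after the embedding $\mathcal{P}$. I therefore always place the padding layers after (not before) the existing recurrent block. Beyond this bookkeeping, each part reduces to verifying that the constructed network fits the template $\mathcal{Q} \circ \mathcal{R}_L \circ \cdots \circ \mathcal{R}_1 \circ \mathcal{P}$ with the claimed width and depth, so I do not expect any essential difficulty.
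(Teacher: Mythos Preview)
Your proposal is correct and follows essentially the same approach as the paper: zero-padding for width, identity recurrent layers (exploiting nonnegativity after ReLU) for depth, absorbing $\mathcal{P}_2 \circ \mathcal{Q}_1$ into the input weight for composition, block-diagonal parameters for concatenation, and replacing the output projection by $[c_1 Q_1,\ c_2 Q_2]$ for linear combination. Your remark that the padding layers must sit after an existing recurrent layer (so the input is nonnegative) is exactly the point that makes the construction work.
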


\subsection{Provable equivalence of deep RNNs and FNNs}

This subsection introduces the inclusion relationship between the function classes of RNNs and FNNs. A feedforward neural network $\mathcal{N}: \mathbb{R}^{d_x} \rightarrow \mathbb{R}^{d_y}$ can be expressed as
\begin{align}
\label{eq: 5}
\mathcal{N}(x) := \mathcal{F}_L \circ \cdots \circ \mathcal{F}_1 (x), \quad x \in \mathbb{R}^{d_x},
\end{align}
where each feedforward layer $\mathcal{F}_l (x)$ is defined as
\begin{align*}
\mathcal{F}_l (x) = \begin{cases}
\sigma(A_l x + b_l) & l = 1,\ldots,L-1  \\
A_l x + b_l & l = L
\end{cases},
\end{align*}
$A_l \in \mathbb{R}^{d_l \times d_{l-1}} (d_0 = d_x, d_L = d_y)$ is the weight matrix, and $b_l \in \mathbb{R}^{d_l}$ is the bias vector. The width $W$ of the network is defined as the maximum width of the hidden layers, that is, $W = \max\{d_1, \ldots, d_{L-1}\}$. Similarly, we denote $\mathcal{FNN}$ as a class of feedforward neural networks:
\begin{align*}
\mathcal{FNN}_{d_x,d_y} (W, L) = \{\mathcal{N}: \mathcal{N}(x) \text{ is of the form (\ref{eq: 5})}\}.
\end{align*}
It is noted that, unlike an RNN, the input to an FNN is typically a vector rather than a sequence (i.e., a matrix). To ensure consistency in the input format, if the input to an FNN is a sequence $X \in \mathbb{R}^{d_x \times N}$, we first stack its columns into a vector and then use this vector as the input to the neural network, that is, $\mathcal{N}(X) = \mathcal{N} \circ \operatorname{vec}(X)$, where $\operatorname{vec}(X) = (x[1]; \ldots; x[N]) \in \mathbb{R}^{d_x N}$. In contexts where it is clear, we do not distinguish between using a vector or a sequence as the network's input.

The following proposition illustrates that any RNN can be represented by a slightly larger FNN, and conversely, any FNN can be represented by a slightly larger RNN. The proof is provided in Section \ref{sec: 3}.

\begin{proposition}
\label{proposition: 3}
Let $t_0 \in \{1,\ldots,N\}$ and $W, L \in \mathbb{N}$.
\begin{enumerate}[label=(\roman*)]
\item For any FNN $\widetilde{\mathcal{N}} \in \mathcal{FNN}_{d_x \times t_0, d_y}(W, L)$, there exists an RNN $\mathcal{N} \in \mathcal{RNN}_{d_x, d_y}((d_x+1)W+1, 2L+2N)$ such that 
\begin{align*}
\widetilde{\mathcal{N}}(x[1:t_0]) = \mathcal{N}(X)[t_0], \quad X \in [0,1]^{d_x \times N}.
\end{align*}

\item For any RNN $\mathcal{N} \in \mathcal{RNN}_{d_x, d_y}(W, L)$, there exists an FNN $\widetilde{\mathcal{N}} \in \mathcal{FNN}_{d_x \times t_0, d_y}((2t_0-1)W, t_0 L)$ such that 
\begin{align*}
\mathcal{N}(X)[t_0] = \widetilde{\mathcal{N}}(x[1:t_0]), \quad X \in \mathbb{R}^{d_x \times N}.
\end{align*}
\end{enumerate}
\end{proposition}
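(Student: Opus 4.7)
The two inclusions are proved separately.

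\emph{Part (ii): embedding an RNN into an FNN.} The plan is to unroll the recurrence in time. Writing $y_0[t] = P x[t]$ and $y_l[t] = \sigma(A_l y_l[t-1] + B_l y_{l-1}[t] + c_l)$ for $l = 1, \ldots, L$, the output $\mathcal{N}(X)[t_0] = Q y_L[t_0]$ depends only on the triangular grid of activations $\{y_l[t] : 1 \le l \le L,\ 1 \le t \le t_0\}$. I would construct an FNN that computes this grid row by row: within each row $l$, the entries $y_l[1], \ldots, y_l[t_0]$ are produced sequentially (since $y_l[t]$ depends on $y_l[t-1]$), using $t_0$ FNN layers per row and $t_0 L$ layers in total. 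At each intermediate FNN layer the hidden state carries the already-computed entries of rows $l$ and $l-1$ that are still needed later; being ReLU outputs and hence non-negative, these propagate through subsequent ReLU layers as the identity. A careful choice of which items to retain versus discard, together with the observation that $d_x \le W$ can be assumed without loss (the linear map $\mathcal{P}$ is absorbed into the first affine map), gives the width bound $(2 t_0 - 1) W$; the projection $\mathcal{Q}$ is absorbed into the last affine map.

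\emph{Part (i): embedding an FNN into an RNN.} This direction is subtler because the width budget $(d_x + 1) W + 1$ does not grow with $N$, so the RNN cannot store the entire history $x[1:t_0]$. The plan is to stream the first FNN layer. Writing its pre-activation as $\sum_{i=1}^{t_0} A_1^{(i)} x[i]$, where $A_1^{(i)} \in \mathbb{R}^{W \times d_x}$ is the $i$-th $d_x$-column block of the first weight matrix, this sum will be built up token-by-token in the RNN hidden state. I would organize the state into a $d_x W$-dimensional workspace holding the partial sums $\sum_{i \le t} A_1^{(i)}_{j, k}\, x[i]_k$ for each pair $(j, k)$, a $W$-dimensional slot reused later for FNN layers $2, \ldots, L$, and one neuron used as a time counter $c[t] = t$; the total width matches $(d_x + 1) W + 1$. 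Of the $2L + 2N$ recurrent layers, the first $2N$ are used for the streaming update, with one pair of layers dedicated to each index $i \in \{1, \ldots, N\}$: the first layer of the pair computes the supporting ReLU pieces $\sigma(c - i)$ and $\sigma(i - c)$ (so that $|c - i| = \sigma(c - i) + \sigma(i - c)$), and the second uses the exact gating identity $\sigma(x[t]_k - \sigma(c - i) - \sigma(i - c)) = \mathbf{1}[t = i]\, x[t]_k$, valid for $x[t]_k \in [0, 1]$ and integer $c[t]$, to add $A_1^{(i)}_{j, k}\, \mathbf{1}[t = i]\, x[t]_k$ into workspace slot $(j, k)$. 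A sufficiently large constant bias, baked into the recurrent updates, keeps all intermediate values non-negative so that the ReLU applied at each step acts as the identity on the accumulating workspace. At time $t_0 \le N$, after these $2N$ layers the workspace holds $\sum_{i=1}^{t_0} A_1^{(i)}_{j, k} x[i]_k$ for every $(j, k)$; one final summation over $k$ and application of $\sigma(\cdot + b_1)$ realizes the first FNN layer's output $z_1 \in \mathbb{R}^W$. The remaining $2L$ recurrent layers implement the time-invariant FNN layers $2, \ldots, L$ by setting the recurrent weight $A = 0$, so they behave as ordinary feedforward layers on the hidden state at time $t_0$; finally $\mathcal{Q}$ extracts the output.

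\emph{Main obstacle.} The hard part is part (i). The technical crux is satisfying three constraints simultaneously: the RNN's layer weights are shared across time whereas the FNN's block $A_1^{(i)}$ genuinely varies with $i$; the construction must fit in width $(d_x + 1) W + 1$ \emph{independently of} $N$, ruling out any static shift-register of the history or simultaneous storage of all $N$ indicators or weight blocks; and the ReLU applied at every layer must not destroy the running sums. Resolving these jointly is what forces the streaming decomposition of the first FNN layer by input coordinate (giving $d_x W$ workspace slots plus the $W$-slot accumulator), the depth budget of $2 N$ (one pair of layers per time index, using the boundedness of the inputs for exact ReLU gating), and the positivity-preserving bias that lets the per-step ReLU behave as the identity on the accumulator.
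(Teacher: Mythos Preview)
Your unrolling argument is essentially the paper's proof (its Lemma~7). One remark: you write that the stored entries of rows $l$ and $l-1$ ``being ReLU outputs and hence non-negative'' pass through later ReLUs as the identity. For $l\ge 2$ this is fine, but for the first row the entries of ``row $0$'' are $P x[t]$, which need not be non-negative since the domain in (ii) is all of $\mathbb{R}^{d_x\times N}$. The paper handles this with the split $x=\sigma(x)-\sigma(-x)$, and indeed this is exactly what forces the width up to $(2t_0-1)W$; your final bound is right, so presumably you had this in mind.

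\textbf{Part (i).} Here your route is genuinely different from the paper's. The paper does not build the RNN directly: it first realizes the FNN by a \emph{modified} RNN (one allowed to skip the ReLU on selected coordinates), using $N$ linear recurrent layers whose composition produces $\sum_{j}A[N-t_0+j]x[j]$ at time $t_0$ via an algebraic identity for binomial coefficients and the invertibility of the Pascal-type matrix $\bigl\{\binom{2N-i-j}{N-i}\bigr\}_{i,j}$; the remaining $L$ FNN layers are then applied token-wise. A second lemma converts the modified RNN into a genuine RNN by adding a large constant so that the omitted ReLUs act as the identity on the relevant compact range, which costs one extra unit of width and doubles the depth. Your counter-plus-gating idea is more elementary and avoids the binomial algebra, which is appealing.

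\textbf{The gap.} The part of your plan that does not yet close is the time accumulation. Your layer pair $i$ computes $g_k^{(i)}[t]:=\sigma(x[t]_k-|c-i|)=\mathbf{1}[t=i]\,x[t]_k$ as a ReLU \emph{output} at layer $2i$; any use of $g_k^{(i)}[t]$ --- in particular multiplying by $A^{(i)}_{1,j,k}$ and adding into the workspace --- can only happen at the \emph{next} layer's affine map. If the workspace is carried forward in depth (no recurrence on it), then after the $2N$ layers the workspace at time $t$ holds $\sum_i A^{(i)}_{1,j,k}\mathbf{1}[t=i]x[t]_k=A^{(t)}_{1,j,k}x[t]_k$, a single term, not $\sum_{i\le t}$. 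To get the cumulative sum you still need one recurrent accumulation layer \emph{after} all the gating, and that layer cannot simultaneously compute the pair-$N$ gate (since that gate is its own ReLU output). If instead you put recurrence on the workspace at every layer, you double-count. Either way, the claim ``after these $2N$ layers the workspace holds $\sum_{i\le t_0}\dots$'' is not yet justified; at minimum you need to explain where the sum over past times is formed and check that the layer count still matches $2L+2N$. A second, smaller worry: the hidden state must simultaneously carry the counter, the current token $x[t]$ (to feed the gates at every depth), the two gate pieces, and the $d_xW$ workspace; for small $W$ relative to $d_x$ this can exceed $(d_x{+}1)W+1$, so you should verify the slot bookkeeping.
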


If we denote $\mathcal{V}_{t_0}(W, L) = \{\mathcal{N}(X)[t_0]: \mathcal{N} \in \mathcal{RNN}_{d_x, d_y}(W, L)\}$ and restrict $X$ to $[0,1]^{d_x \times N}$, then for sufficiently large integers $\widebar{W} \geq d_x + 2$ and $\widebar{L} \geq 2N + 2$, Proposition \ref{proposition: 3} states that
\begin{align*}
\mathcal{FNN}_{d_x \times t_0, d_y}(\lfloor \tfrac{\widebar{W} - 1}{d_x+1} \rfloor, \lfloor \tfrac{\widebar{L}}{2} - N \rfloor) \,\subseteq\, \mathcal{V}_{t_0}(\widebar{W}, \widebar{L}) \,\subseteq\, \mathcal{FNN}_{d_x \times t_0, d_y}((2t_0-1)\widebar{W}, t_0 \widebar{L}).
\end{align*}
Intuitively speaking, the RNN and FNN classes are of comparable size. To the best of our knowledge, Proposition \ref{proposition: 3} provides the first equivalence result between RNNs and FNNs. This intuition further guides our analysis of the approximation and generalization properties of RNNs in the subsequent subsections. On the one hand, if an FNN can effectively approximate a target function, then any RNN containing this FNN can also approximate it. On the other hand, the complexity of the RNN class is controlled by that of the larger FNN class that contains it. In many applications, RNNs are used to simulate discrete-time dynamical systems because they are specifically designed to capture temporal dependencies between tokens. Proposition \ref{proposition: 3} demonstrates that RNNs and FNNs can be converted into each other, which helps in understanding how RNNs model sequences.

\subsection{Approximation of sequence-to-sequence functions}

This subsection introduces approximating a sequence of past-dependent H\"older functions using RNNs. The main idea is to utilize existing results on function approximation by FNNs. H\"older classes are widely studied in nonparametric statistics and are also the focus of this paper.

\begin{definition}[H\"older classes]
Let $\Omega \subseteq \mathbb{R}^{d_x}$ and $\gamma>0$ with $\gamma=r+\omega$, where $r \in \mathbb{N}_0$ and $\omega \in (0,1]$. A function is said to be $\gamma$-smooth if all its partial derivatives up to order $r$ exist and are bounded, and the partial derivatives of order $r$ are $\omega$-H\"{o}lder continuous. For $d_x,d_y \in \mathbb{N}$, the H\"{o}lder class with smoothness index $\gamma$ is then defined as
\begin{align}
\label{eq: 15}
\begin{aligned}
& \mathcal{H}_{d_x,d_y}^\gamma (\Omega, K) = \Bigg\{ f = (f_1, \ldots, f_{d_y})^\top: \Omega \rightarrow \mathbb{R}^{d_y}, \\ 
& \sum_{n: \|n\|_1 < \gamma} \|\partial^{n} f_k\|_{L^\infty(\Omega)} + \sum_{n: \|n\|_1 = r} \sup_{x, y \in \Omega,
x \neq y} \frac{|\partial^{n} f_k(x)-\partial^{n} f_k(y)|}{\|x - y\|^{\omega}} \leq K, \quad k=1, \ldots, d_y \Bigg\},
\end{aligned}
\end{align}
where $\partial^{n} = \partial^{n_1} \ldots \partial^{n_{d_x}}$ with $n = (n_1, \ldots, n_{d_x})^{\top} \in \mathbb{N}_0^{d_x}$ and $\|n\|_1 = \sum_{i=1}^{d_x} n_i$. 
\end{definition}

We note that, given an RNN $\mathcal{N}$, each token $y[t]$ of the output sequence $Y = \mathcal{N}(X)$ depends only on the prefix $x[1:t] = (x[1], x[2], \ldots, x[t])$ of the input sequence $X = (x[1], \ldots, x[N])$, owing to the recurrent structure. We refer to this property as past dependency.

\begin{definition}[Past-dependency]
Let $d_x, d_y \in \mathbb{N}$ and $\Omega \subseteq \mathbb{R}^{d_x}$. Given a sequence $X = (x[1], \ldots, x[N])$ of length $N$, a sequence of functions $\{f^{(t)}\}_{t=1}^N$ is said to be past-dependent if each $f^{(t)}: \Omega^{t} \rightarrow \mathbb{R}^{d_y}$ depends only on the first $t$ tokens of $X$, i.e., $x[1:t] = (x[1], \ldots, x[t])$.
\end{definition}

\begin{figure}[t]
\centering
\begin{tikzpicture}
    \node (image) at (0,0) {\includegraphics[width=0.9\linewidth]{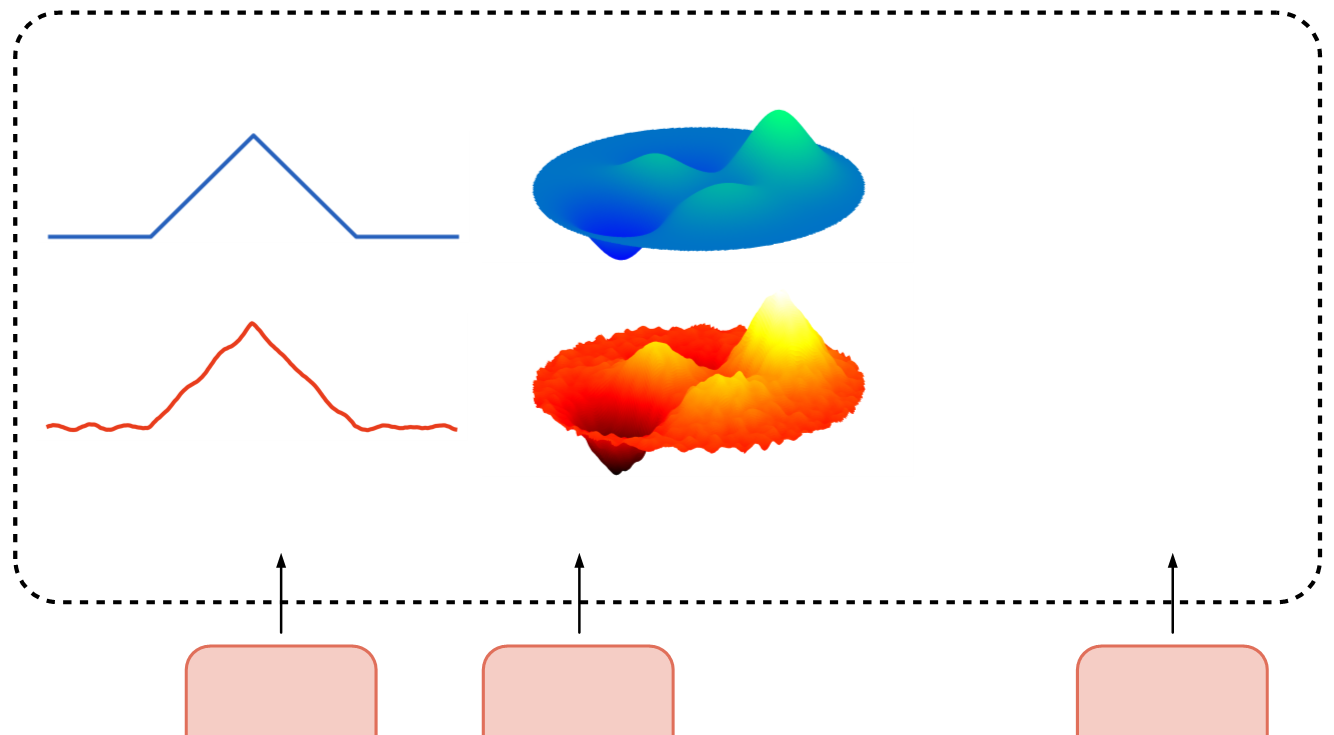}};
    \node at (-4.2, 3.2) {\footnotesize $f^{(1)} = f^{(1)}(x[1])$};
    \node at (0, 3.2) {\footnotesize $f^{(2)} = f^{(2)}(x[1], x[2])$};
    \node at (2.2, 3.2) {\footnotesize $\cdots$};
    \node at (4.7, 3.2) {\footnotesize $f^{(N)} = f^{(N)}(x[1], \ldots, x[N])$};

    \node at (-4.2, -1.6) {\footnotesize $\mathcal{N}(X)[1]$};
    \node at (-1, -1.6) {\footnotesize $\mathcal{N}(X)[2]$};
    \node at (2.2, -1.6) {\footnotesize $\cdots$};
    \node at (5.4, -1.6) {\footnotesize $\mathcal{N}(X)[N]$};
\end{tikzpicture}
\caption{An illustration of Theorem \ref{theorem: 1}. It holds simultaneously that $\mathcal{N}(X)[1] \approx f^{(1)}(x[1])$, $\mathcal{N}(X)[2] \approx f^{(2)}(x[1], x[2])$, and similarly, $\mathcal{N}(X)[N] \approx f^{(N)}(x[1], \ldots, x[N])$.}
\label{figure: 2}
\end{figure}

The following theorem shows that RNNs can simultaneously approximate a sequence of past-dependent H\"older functions. The proof is provided in Section \ref{sec: 4}.

\begin{theorem}\label{theorem: 1}
Given a sequence of past-dependent H\"older functions $\{f^{(t)}\}_{t=1}^N$ with each $f^{(t)}: [0,1]^{d_x \times t} \rightarrow \mathbb{R}^{d_y} \in \mathcal{H}_{d_x \times t,d_y}^\gamma ([0,1]^{d_x \times t}, K)$, for any $I,J \in \mathbb{N}$ with $I \geq \lceil\frac{4\gamma}{d_x}\rceil$ and $J \geq \lceil\exp(\frac{2\gamma}{d_x})\rceil$, there exists a recurrent neural network $\mathcal{N} \in \mathcal{RNN}_{d_x, d_y}(W,L)$ with width 
\begin{align*}
W = 153 (\lfloor\gamma\rfloor+1)^2 3^{d_x N} d_x^{\lfloor\gamma\rfloor+2} d_y N^{\lfloor\gamma\rfloor+2} J \lceil\log_2(8 J)\rceil
\end{align*}
and depth 
\begin{align*}
L = 43(\lfloor\gamma\rfloor + 1)^2 I \lceil\log_2(8 I)\rceil + 6 d_x N
\end{align*}
such that for each $t \in \{1,\ldots,N\}$,
\begin{align*}
\sup_{X \in [0,1]^{d_x \times N}} \|\mathcal{N}(X)[t] - f^{(t)}(x[1:t])\|_{\infty} \leq \left(24 N K^2 + 19 K(\lfloor\gamma\rfloor+1)^2 (d_x N)^{\lfloor\gamma\rfloor+(\gamma \vee 1)/2}\right)  (J I)^{-2 \gamma / (d_x t)}.
\end{align*}
\end{theorem}

Omitting constant factors in $d_x$, $d_y$, $\gamma$, $K$ and $N$, and focusing on the tunable parameters $I$ and $J$, Theorem \ref{theorem: 1} states that, given a sequence of past-dependent H\"older functions $\{f^{(t)}\}_{t=1}^N$, there exists a recurrent neural network $\mathcal{N} \in \mathcal{RNN}_{d_x, d_y}(W,L)$ with width $W \asymp J \log J$ and depth $L \asymp I \log I$ such that for each time step $t \in \{1, \ldots, N\}$,
\begin{align*}
\sup_{X \in [0,1]^{d_x \times N}} \|\mathcal{N}(X)[t] - f^{(t)}(x[1:t])\|_{\infty} \lesssim (J I)^{-2 \gamma / (d_x t)}.
\end{align*}
See Figure \ref{figure: 2} for an illustration. To the best of our knowledge, Theorem \ref{theorem: 1} is the first to propose the approximation rate of deep ReLU RNNs. There are many universal approximation results for RNNs, such as \cite{matthews1993approximating, Doya1993UniversalityOF, schafer2007recurrent, hoon2023minimal} in the discrete-time setting and \cite{funahashi1993approximation, chow2000modeling, li2005approximation, maass2007computational, nakamura2009approximation, li2022approximation} in the continuous-time setting. Our result provides a more detailed characterization of the approximation capacity of RNNs, addressing the question of how large a network needs to be in order to ensure that the approximation error is smaller than a specified precision.

A key concern is whether the exponential dependence on $d_x$ and $t$ (or $N$) in both the approximation rate and the prefactor is necessary. First, we claim that the prefactor $3^{d_x N}$ in the width $W$ can be removed at the expense of weakening the approximation from the $L^\infty$-norm to the $L^p$-norm for $p \in [1,\infty)$, following the techniques in \cite[Theorem 3.3]{jiao2023deep} and \cite[Theorem 1.1]{shen2022optimal}. Second, we claim that the approximation rate $(JI)^{-2 \gamma /(d_x t)}$ cannot be improved in general, implying that the curse of dimensionality is unavoidable without further assumptions; see Section \ref{sec: 2.4} for a detailed discussion.

We have established nearly optimal approximation rates for H\"older continuous functions approximated by deep ReLU RNNs. There are two possible directions to improve the approximation rates or reduce the effect of the curse of dimensionality. First, one may impose additional intrinsic structures on the target functions, such as the Barron integral representation \cite{barron1993universal, weinan2019priori, wojtowytsch2020representation}, the variation integral representation \cite{yang2024nonparametric, yang2024optimal, siegel2025optimal}, low-dimensional manifold assumptions \cite{chen2022nonparametric, shen2020deep, jiao2023deep}, low Minkowski dimension assumptions \cite{huang2022error, jiao2023deep}, or hierarchical structures \cite{bauer2019deep, schmidt2020nonparametric, kohler2023rate}. Under these assumptions, the approximation rates typically depend on the intrinsic rather than the ambient dimension. Second, one may consider neural networks with specially designed architectures. For instance, \cite{yarotsky2020phase, shen2021deep, shen2021neural} constructed FNNs with alternative activation functions including the floor function ($\lfloor x \rfloor$), sine function ($\sin x$), step function ($1_{x \geq 0}$) and exponential function ($2^x$), and showed that these FNNs can achieve exponentially small approximation errors for Lipschitz functions. Despite their remarkable approximation power, these networks typically have infinite VC-dimension and pose significant challenges for training in practice.

We also remark that, since our results rely on explicit constructions, we require $I \geq \lceil 4\gamma / d_x \rceil$ and $J \geq \lceil \exp(2\gamma / d_x) \rceil$, so that the minimal achievable width $W$ and depth $L$ are bounded away from some constants. Recent studies have shown that shallow ReLU FNNs have optimal approximation abilities \cite{mao2023rates, yang2024nonparametric, yang2024optimal}. Whether shallow RNNs also have strong approximation abilities remains an open and intriguing question, which is not covered by our current results.

\subsection{Optimality of Theorem \ref{theorem: 1}}\label{sec: 2.4}

In this subsection, we argue that the approximation rate established in Theorem \ref{theorem: 1} cannot, in general, be improved. Before proceeding to the proofs, we introduce the concept of covering number, which quantifies the complexity of the RNN class.

\begin{definition}[Covering number]
Let $\rho$ be a pseudo-metric on $\mathcal{M}$ and $S \subseteq \mathcal{M}$. For any $\delta>0$, a set $A \subseteq \mathcal{M}$ is called a $\delta$-covering of $S$ if for any $x \in S$ there exists $y \in A$ such that $\rho(x, y) \leq \delta$. The $\delta$-covering number of $S$, denoted by $\mathcal{N}(\delta, S, \rho)$, is the minimum cardinality of any $\delta$-covering of $S$.
\end{definition}

The following lemma provides an estimate of the covering number of the RNN class. Since covering numbers are defined for classes of real-valued functions, we restrict our attention to the case $d_y = 1$ and consider the output at a fixed time step $t_0$. The proof builds on Proposition \ref{proposition: 3}, which shows that any RNN can be represented as a particular FNN. Consequently, the RNN class can be embedded into a larger FNN class, for which covering number bounds are well established and sharp upper estimates are available (see, e.g., \cite{bartlett2019nearly, anthony2009neural}). This observation allows us to derive the covering number bounds for the RNN class. The proof is provided in Appendix \ref{sec: A.6}.

\begin{lemma}
\label{lemma: 8}
Let $K \geq 1$, $\delta \in (0,1)$, and $t_0 \in \{1, \ldots, N\}$. Define $\mathcal{V}_{t_0} = \{\mathcal{N}(X)[t_0]: \mathcal{N} \in \mathcal{RNN}_{d_x,1}(W,L)\}$. 

\begin{enumerate}[label=(\roman*)]
\item For fixed sequences $\mathcal{Z}=\{Z_i\}_{i=1}^n$ with each $Z_i \in [0,1]^{d_x \times N}$, define the pseudo-metric
\begin{align}
\label{eq: 12}
d_{\mathcal{Z}, \infty}(f,g) = \max_{1 \leq i \leq n} |f(Z_i)-g(Z_i)|.
\end{align}
Then, 
\begin{align*}
\sup_{\mathcal{Z}: |\mathcal{Z}|=n} \log \mathcal{N} (\delta, \mathcal{T}_K \mathcal{V}_{t_0}, d_{\mathcal{Z}, \infty}) \leq c_1 t_0^4 W^2 L^2 \log (t_0 W L) \log (K n / \delta),
\end{align*}
where $c_1>0$ is a universal constant.

\item Let $p \in [1,\infty)$, and $\lambda$ denote the uniform distribution on $[0,1]^{d_x \times N}$. Define the pseudo-metric
\begin{align}\label{eq: 25}
d_p(f, g) = \|f - g\|_{L^p(\lambda)}.
\end{align}
Then,
\begin{align*}
\log \mathcal{N} (\delta, \mathcal{T}_K \mathcal{V}_{t_0}, d_p) \leq c_2 p t_0^4 W^2 L^2 \log (t_0 W L) \log (K / \delta),
\end{align*}
where $c_2>0$ is a universal constant.
\end{enumerate}
\end{lemma}



Lemma \ref{lemma: 8} provides a new approach to estimate the covering number of the RNN class under different metrics (and, potentially, other sample complexity measures such as the Rademacher complexity) via function class inclusion. Compared with the existing bound in \cite[Lemma 3]{chen2020generalization}, our result differs in several aspects. First, we do not require an explicit norm constraint on the parameter matrices, and thus do not restrict the expressive power of RNNs. Second, our upper bound grows polynomially in $t_0$, $W$ and $L$, whereas the bound in \cite[Lemma 3]{chen2020generalization} exhibits exponential growth in $t_0$. Third, our estimate applies to both shallow ($L=1$) and deep ($L \geq 2$) RNNs, while the calculation in \cite[Lemma 3]{chen2020generalization} becomes considerably more involved for deeper architectures. It is worth noting, however, that the dependence on $t_0$ in our upper bound is of order $t_0^4 \leq N^4$. Hence, for very long sequences (i.e., large $N$), the estimate may be unsatisfactory. It remains unclear whether this dependence on $N$ is tight.

The following lemma shows that if the approximation error of a function class $\mathcal{F}$ with respect to the H\"older class $\mathcal{H}_{d,1}^\gamma$ is sufficiently small, then $\mathcal{F}$ must have a large covering number. In other words, a function class that is capable of uniformly approximating a rich class such as $\mathcal{H}_{d,1}^\gamma$ cannot be simple, and its metric entropy is necessarily lower bounded by that of the approximated class. This idea was used in \cite[Theorem 2.4]{lu2021deep} and \cite{yarotsky2017error} to establish the tightness of approximation bounds for smooth functions. The proof is provided in Appendix \ref{sec: A.7}.

\begin{lemma}\label{lemma: 10}
Let $\delta \in (0,1)$, $p \in [1,\infty)$, $d \in \mathbb{N}$, and $\gamma > 0$. Let $\mathcal{F}$ be a class of real-valued functions defined on $[0,1]^d$, and let $d_p$ denote the $L^p(\lambda)$ pseudo-metric defined in (\ref{eq: 25}). If for any $h \in \mathcal{H}_{d,1}^\gamma ([0,1]^{d}, 1)$, there exists $f \in \mathcal{F}$ such that $\|f - h\|_{L^p(\lambda)} \leq \delta$, then
\begin{align*}
\log \mathcal{N}(\delta, \mathcal{F}, d_p) \geq c \delta^{-d/\gamma},
\end{align*}
where $c$ is a constant depending only on $d$ and $\gamma$.
\end{lemma}

Combining Lemmas \ref{lemma: 8} and \ref{lemma: 10}, we obtain a lower bound for the approximation error. Let $p \in [1,\infty)$ and $t \in \{1,\ldots,N\}$. Choosing $d = d_x \times t$ and $\delta = (W L \log(W L))^{-2 \gamma / (d_x t)}$, by Lemma \ref{lemma: 8} we have
\begin{align*}
\log \mathcal{N} (\delta, \mathcal{T}_1 \mathcal{V}_{t}, d_p) \lesssim W^2 L^2 \log (W L) \log (1 / \delta) \lesssim (W L \log (W L))^2 = \delta^{-(d_x t) / \gamma}.
\end{align*}
Applying the contrapositive of Lemma \ref{lemma: 10}, it follows that there exists a function $f \in \mathcal{H}_{d_x \times t,1}^\gamma([0,1]^{d_x \times t}, 1)$ such that for all $\mathcal{N} \in \mathcal{RNN}_{d_x,1}(W,L)$,
\begin{align*}
\|\mathcal{N}(X)[t] - f\|_{L^p(\lambda)} \geq \|\mathcal{T}_1 \mathcal{N}(X)[t] - f\|_{L^p(\lambda)} \gtrsim \delta = (W L \log(W L))^{-2 \gamma / (d_x t)}.
\end{align*}
Hence, the best possible approximation error of functions in $\mathcal{H}_{d_x \times t,1}^\gamma ([0,1]^{d_x \times t}, 1)$, approximated by ReLU RNNs with width $W$ and depth $L$ at time step $t$, is of order
\begin{align*}
(W L \log(W L))^{-2 \gamma / (d_x t)}.
\end{align*}
If we set width $W \asymp J \log J$ and depth $L \asymp I \log I$, then the lower bound of the approximation error becomes
\begin{align*}
\left(J I (\log J)^2 (\log I)^2\right)^{-2 \gamma / (d_x t)}.
\end{align*}
Comparing this lower bound with the upper bound established in Theorem \ref{theorem: 1}, we conclude that our approximation results are nearly optimal in the $L^p$-norm at each time step $t$.

\subsection{Regression with dependent data}

We consider the following $N$-step prediction model
\begin{align}
\label{eq: 10}
Y = f^*(X_1, X_2, \ldots, X_N) + \varepsilon, 
\end{align}
where $Y \in \mathbb{R}$ is a response, $f^*(x_1, \ldots, x_N) = \mathbb{E}[Y | X_1=x_1, \ldots, X_N=x_N]: [0,1]^{d_x \times N} \rightarrow \mathbb{R}$ is an unknown regression function and $\varepsilon$ is a sub-Gaussian noise, independent of $X_i, i=1,\ldots,N$, with $\mathbb{E}[\varepsilon] = 0$ and 
\begin{align*}
\mathbb{E} [\exp(s\varepsilon)] \leq \exp\left(\frac{\sigma^2 s^2}{2}\right), \quad s \in \mathbb{R}.
\end{align*}
Our purpose is to estimate the unknown target regression function $f^*$ given observations $\mathcal{D}_n = \{(x_1, y_1), \ldots, (x_n, y_n)\}$ which may not be i.i.d.

The dependence arises from two aspects. First, we assume $\mathcal{X} = \{x_t\}_{t=1}^n$ is a stationary $\beta$-mixing sequence, which is a relaxation of the classical i.i.d. setting. The following definitions are required.

\begin{definition}[Stationarity]
\label{definition: 1}
A sequence of random variables $\{x_t\}_{t=-\infty}^{\infty}$ is said to be stationary if for any $t$ and non-negative integers $m$ and $k$, the random vectors $(x_t, \ldots, x_{t+m})$ and $(x_{t+k}, \ldots, x_{t+m+k})$ have the same distribution.
\end{definition}

\begin{definition}[$\beta$-mixing]
\label{definition: 2}
Let $\{x_t\}_{t=-\infty}^{\infty}$ be a stationary sequence of random variables. For any $i, j \in \mathbb{Z} \cup \{-\infty,+\infty\}$, let $\sigma_i^j$ denote the $\sigma$-algebra generated by the random variables $x_k, i \leq k \leq j$. Then, for any positive integer $k$, the $\beta$-mixing coefficient of the stochastic process $\{x_t\}_{t=-\infty}^{\infty}$ is defined as
\begin{align*}
\beta(k) = \sup_n \mathbb{E}_{B \in \sigma_{-\infty}^n} \left[\sup_{A \in \sigma_{n+k}^{\infty}} |\mathbb{P}(A \mid B)-\mathbb{P}(A)|\right].
\end{align*}
$\{x_t\}_{t=-\infty}^{\infty}$ is said to be $\beta$-mixing if $\beta(k) \rightarrow 0$ as $k \rightarrow \infty$. It is said to be algebraically $\beta$-mixing if there exist real numbers $\beta_0>0$ and $r>0$ such that $\beta(k) \leq \beta_0 / k^r$ for all $k$, and exponentially $\beta$-mixing if there exist real numbers $\beta_0, \beta_1>0$ and $r>0$ such that $\beta(k) \leq \beta_0 \exp \left(-\beta_1 k^r\right)$ for all $k$.
\end{definition}

Thus, by Definition \ref{definition: 1}, the time index $t$ does not affect the distribution of $x_t$ in a stationary sequence. Additionally, any $N$ consecutive variables $(x_{t-N+1}, \ldots, x_t)$ have the same joint distribution. Definition \ref{definition: 2} provides a standard measure of the dependence among the random variables $x_t$ within a stationary sequence. The scenario where observations are drawn from a stationary mixing distribution has become standard and has been discussed by many previous studies \cite{yu1994rates, meir2000nonparametric, mohri2008rademacher, steinwart2009fast, mohri2010stability, agarwal2012generalization, shalizi2013predictive, kuznetsov2017generalization, ren2024statistical}.

Second, we consider the sliding window training approach. A fundamental method for estimating $f^*$ is to minimize the mean squared error, or the $L^2$ risk, defined as
\begin{align*}
\mathcal{R}(f) = \mathbb{E}_{((X_1, \ldots, X_N) \sim \Pi, Y)} [(f(X_1, \ldots, X_N) - Y)^2],
\end{align*}
where $\Pi$ denotes the joint distribution of the predictors $(X_1, \ldots, X_N)$. To ease our analysis, we assume that $\Pi$ is supported on $[0,1]^{d_x \times N}$ and is absolutely continuous with respect to the Lebesgue measure. It can be shown that the underlying regression function $f^*$ is the global minimizer of the $L^2$ risk, that is,
\begin{align*}
f^* = \argmin{f}\, \mathcal{R}(f).
\end{align*}
However, in practical applications, the distribution of $((X_1, \ldots, X_N), Y)$ is typically unknown, and only a random sample $\mathcal{D}_n = \{(x_i, y_i)\}_{i=1}^n$ is available. Since each prediction of an RNN requires an input sequence of length $N$, we construct overlapping subsequences of length $N$ as follows:
\begin{align*}
\{((x_1, \ldots, x_N), y_N), ((x_2, \ldots, x_{N+1}), y_{N+1}), \ldots, ((x_{n-N+1}, \ldots, x_n), y_n)\}.
\end{align*}
Here, each subsequence $(x_{t-N+1}, \ldots, x_t)$ is assumed to follow the distribution $\Pi$ by the assumption of stationarity, and each pair $((x_{t-N+1}, \ldots, x_t), y_t)$ is assumed to follow the model in (\ref{eq: 10}). Based on these subsequences, we estimate the unknown target function $f^*$ by the constrained empirical risk minimizer (ERM),
\begin{align}
\label{eq: 20}
\hat{f} \in \argmin{f \in \mathcal{V}}\, \mathcal{R}_n(\mathcal{T}_K f) := \frac{1}{n-N+1} \sum_{t=N}^n (\mathcal{T}_K f(x_{t-N+1}, \ldots, x_t) - y_t)^2,
\end{align}
where the truncation operator $\mathcal{T}_K$ at level $K$ acts on any real-valued function $f$ as
\begin{align*}
\mathcal{T}_K f(X) := \begin{cases}
f(X) & \text{if } |f(X)| \leq K,\\
\operatorname{sgn}(f(X)) \cdot K & \text{if } |f(X)| > K.
\end{cases}
\end{align*}
For a class of real-valued functions $\mathcal{F}$, we write $\mathcal{T}_K \mathcal{F} := \{\mathcal{T}_K f: f \in \mathcal{F}\}$. We focus on the case where $\mathcal{V} = \{\mathcal{N}(X)[N]: \mathcal{N} \in \mathcal{RNN}_{d_x,1}(W,L)\}$, namely the set of outputs of RNNs at the last time step.

The quality of the estimator is evaluated by the excess risk, defined as the difference in the $L^2$ risk between $\mathcal{T}_K \hat{f}$ and $f^*$,
\begin{align*}
\mathcal{R}(\mathcal{T}_K \hat{f}) - \mathcal{R}(f^*) = \|\mathcal{T}_K \hat{f} - f^*\|_{L^2(\Pi)}^2.
\end{align*}
We can now present our main result, which characterizes the convergence rates for the estimation of $f^*$ using recurrent neural networks and dependent data. The proof is provided in Section \ref{sec: 5}.

\begin{theorem}
\label{theorem: 4}
Assume that $f^* \in \mathcal{H}_{d_x \times N, 1}^\gamma ([0,1]^{d_x \times N}, K)$. We choose the hypothesis class $\mathcal{V} = \{\mathcal{N}(X)[N]: \mathcal{N} \in \mathcal{RNN}_{d_x,1}(W,L)\}$ in (\ref{eq: 20}).

\begin{enumerate}[label=(\roman*)]
\item Assume that $\{x_t\}_{t=1}^n$ is an exponentially $\beta$-mixing sequence, i.e., $\beta(k) \leq \beta_0 \exp (-\beta_1 k^r)$ for all $k$. We set the parameters as
\begin{align*}
W \asymp n^\alpha \log n, \quad L \asymp n^{\frac{d_x N}{2 d_x N + 4 \gamma} - \alpha} \log n
\end{align*}
for fixed $0 \leq \alpha \leq \frac{d_x N}{2 d_x N + 4 \gamma}$. Then the ERM $\hat{f}$ satisfies
\begin{align*}
\mathbb{E}_{\mathcal{D}_n} [\|\mathcal{T}_K \hat{f} - f^*\|_{L^2(\Pi)}^2] & \lesssim n^{-\frac{2 \gamma}{d_x N + 2 \gamma}} (\log n)^{6+\frac{1}{r}}.
\end{align*}

\item Assume that $\{x_t\}_{t=1}^n$ is an algebraically $\beta$-mixing sequence, i.e., $\beta(k) \leq \beta_0 / k^r$ for all $k$. We set the parameters as
\begin{align*}
W \asymp n^\alpha \log n, \quad L \asymp n^{\frac{r d_x N}{(2r+2) d_x N + (4r+8) \gamma} - \alpha} \log n
\end{align*}
for fixed $0 \leq \alpha \leq \frac{r d_x N}{(2r+2) d_x N + (4r+8) \gamma}$. Then the ERM $\hat{f}$ satisfies
\begin{align*}
\mathbb{E}_{\mathcal{D}_n} [\|\mathcal{T}_K \hat{f} - f^*\|_{L^2(\Pi)}^2] & \lesssim n^{-\frac{2 r \gamma}{(r+1) d_x N + (2r+4) \gamma}} (\log n)^{6}.
\end{align*}

\item Assume that $\{x_t\}_{t=1}^n$ is a sequence of i.i.d. random variables. We set the parameters as
\begin{align*}
W \asymp n^\alpha \log n, \quad L \asymp n^{\frac{d_x N}{2 d_x N + 4 \gamma} - \alpha} \log n
\end{align*}
for fixed $0 \leq \alpha \leq \frac{d_x N}{2 d_x N + 4 \gamma}$. Then the ERM $\hat{f}$ satisfies
\begin{align*}
\mathbb{E}_{\mathcal{D}_n} [\|\mathcal{T}_K \hat{f} - f^*\|_{L^2(\Pi)}^2] \lesssim n^{-\frac{2 \gamma}{d_x N + 2 \gamma}} (\log n)^{6}.
\end{align*}
\end{enumerate}
\end{theorem}

Theorem \ref{theorem: 4} shows that the ERM $\hat{f}$ achieves the optimal minimax rate $n^{-\frac{2 \gamma}{d_x N + 2 \gamma}}$, up to logarithmic factors, in both exponentially $\beta$-mixing and i.i.d. settings. It has been established that the optimal minimax rates in the $L^2$-norm for i.i.d. data remain optimal for dependent sequences that satisfy certain $\beta$-mixing conditions \cite{yu1993density, viennet1997inequalities}. Our findings differ from prior studies in two important aspects. First, while similar rates of convergence under $\beta$-mixing conditions have been studied in \cite{feng2023over, ren2024statistical}, our focus is on RNNs rather than FNNs, and the proof techniques differ largely. In particular, \cite{feng2023over} obtained a suboptimal rate $n^{-\frac{\gamma}{2 \gamma + 2 d + 2}}$, where $d$ denotes the input dimension (so $d = d_x N$ in our setting), whereas our refined error decomposition yields the sharp rate $n^{-\frac{2 \gamma}{d_x N + 2 \gamma}}$. While \cite{ren2024statistical} also established this rate using localization techniques, their bounds in expectation only hold with high probability. By contrast, our analysis based on offset Rademacher complexity provides bounds in expectation without such restrictions. Moreover, \cite{feng2023over, ren2024statistical} did not consider the case where the $\beta$-mixing coefficients decay polynomially. Second, \cite{kohler2023rate} also studied rates of convergence using RNN models and showed that RNN-based estimators can overcome the curse of dimensionality. However, their results crucially rely on the assumption that the target function has a hierarchical structure. The RNN model defined therein, unlike ours, is tailored to this assumption. The role of hierarchical structure in circumventing the curse of dimensionality has also been discussed in \cite{bauer2019deep, schmidt2020nonparametric}. In general, the rate $n^{-\frac{2 \gamma}{d_x N + 2 \gamma}}$ cannot be improved without extra assumptions \cite{stone1982optimal, yu1993density, viennet1997inequalities}. Additionally, our findings offer greater flexibility in adjusting the width and depth of RNNs through the parameter $\alpha$. Our results can be easily extended to the fixed-width or fixed-depth cases, following \cite{jiao2023deep}.


In the algebraically $\beta$-mixing setting, our rate $n^{-\frac{2 r \gamma}{(r+1) d_x N + (2r+4) \gamma}}$ is suboptimal. Note that $\frac{2 r \gamma}{(r+1) d_x N + (2r+4) \gamma} \rightarrow \frac{2 \gamma}{d_x N + 2 \gamma}$ as $r \rightarrow \infty$. The rate of convergence in this case is ``close'' to the optimal rate $n^{-\frac{2 \gamma}{d_x N + 2 \gamma}}$, provided $r$ is sufficiently large.

\section{Proof of Proposition \ref{proposition: 3}}\label{sec: 3}

This section contains the proof of Proposition \ref{proposition: 3}. The proof techniques employed in this section are adapted from \cite{hoon2023minimal}. To begin with, we define a new activation function that acts on only certain components, rather than all components. With the ReLU activation function $\sigma$ and an index set $I \subseteq \mathbb{N}$, the modified activation $\sigma_I$ is defined as
\begin{align*}
\sigma_I(s)_i = \begin{cases} 
\sigma(s_i) & \text { if } i \in I, \\ 
s_i & \text { otherwise}.
\end{cases}
\end{align*}
Using the modified activation function $\sigma_I$, a modified recurrent layer is defined as
\begin{align*}
\begin{aligned}
\mathcal{R}(X)[t]_i & =\sigma_I(A \mathcal{R}(X)[t-1]+B x[t]+c)_i \\
& =\left\{\begin{array}{ll}
\sigma(A \mathcal{R}(X)[t-1]+B x[t]+c)_i & \text { if } i \in I, \\
(A \mathcal{R}(X)[t-1]+B x[t]+c)_i & \text { otherwise}.
\end{array}\right.
\end{aligned}
\end{align*}
We denote by $\mathcal{MRNN}_{d_x, d_y}(W,L)$ the class of neural network functions composed of modified recurrent layers, rather than original recurrent layers. It is clear that $\mathcal{RNN}_{d_x, d_y}(W,L) \subseteq \mathcal{MRNN}_{d_x, d_y}(W,L)$. The following lemma reveals the reverse inclusion relationship when the input $X$ is restricted to a compact set. 

\begin{lemma}
\label{lemma: 5}
For any MRNN $\widebar{\mathcal{N}} \in \mathcal{MRNN}_{d_x, d_y}(W, L)$, there exists an RNN $\mathcal{N} \in \mathcal{RNN}_{d_x, d_y}(W+1, 2L)$ such that
\begin{align*}
\widebar{\mathcal{N}}(X)[t] = \mathcal{N}(X)[t], \quad X \in [0,1]^{d_x \times N}, t = 1, \ldots, N.
\end{align*}
\end{lemma}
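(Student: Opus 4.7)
The plan is to simulate each modified recurrent layer, whose coordinates in $I$ act via ReLU and whose coordinates in $I^c$ act via identity, by (at most) two standard ReLU recurrent layers. The underlying mechanism is the identity $z = \sigma(z+M) - M$, valid whenever $z \geq -M$. Because $X$ ranges over the compact set $[0,1]^{d_x \times N}$ and $\widebar{\mathcal{N}}$ has fixed weights, every pre-activation produced inside $\widebar{\mathcal{N}}$ is uniformly bounded in absolute value by some constant $M_0$; fix any $M \geq M_0$. Augmenting the width by one, I would reserve coordinate $W+1$ as a dedicated ``constant neuron'' that always outputs $M$. For the first recurrent layer this is arranged by setting the bias of coordinate $W+1$ to $M$ with all associated weights zero, so that $\sigma(M) = M$; in later layers the value $M$ is simply copied forward from the preceding layer through the feedforward weight matrix, since $\sigma(M) = M$ again.

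The simulation then maintains the invariant $h_{\ell'}[t]_i = y_\ell[t]_i + M\,\mathbf{1}_{i \notin I_\ell}$ for $i \leq W$ and $h_{\ell'}[t]_{W+1} = M$. To advance one modified layer, one first recovers $y_\ell[t-1]$ and $y_{\ell-1}[t]$ from the shifted encodings by subtracting $M$ on the identity coordinates, which can be written as linear functionals of the previous RNN state because $M$ is accessible through the extra neuron. The recurrent matrix $A'$, feedforward matrix $B'$, and bias $c'$ are chosen so that the ReLU argument for coordinate $i$ equals $u_\ell[t]_i$ when $i \in I_\ell$ and equals $u_\ell[t]_i + M$ when $i \notin I_\ell$. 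For $i \in I_\ell$ the ReLU produces $\sigma(u_\ell[t]_i) = y_\ell[t]_i$, and for $i \notin I_\ell$ the choice of $M$ guarantees $u_\ell[t]_i + M \geq 0$, so the ReLU acts as the identity and outputs $u_\ell[t]_i + M = y_\ell[t]_i + M$, matching the invariant. A short induction on $(\ell, t)$ then gives $h_{\ell'}[t]_{1:W} = y_\ell[t]$ up to the known shift, and composing with the output projection (which can absorb the $-M$ on identity coordinates) yields $\widebar{\mathcal{N}}(X)[t]$ exactly.

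The delicate point I expect to handle most carefully is the boundary case $t = 1$: the recurrent input $h_{\ell'}[0]$ is the zero vector, so at time $1$ the constant $M$ cannot be harvested from the self-recurrent term, and every $M$-dependent contribution must instead come either from the bias or from the feedforward input. Because $y_\ell[0] = 0$ in the modified layer as well, no shift-correction on the recurrent term is actually required at $t=1$, so routing the $M$ needed for the identity coordinates through the bias (plus through the feedforward copy from the already-initialized layer $\ell'-1$) is enough. Inserting one auxiliary layer in front of each ``compute'' layer to ensure the constant neuron is fully populated before use is a clean way to organize this book-keeping, and it is precisely this doubling that gives the claimed width $W+1$ and depth $2L$ for the constructed RNN $\mathcal{N}$.
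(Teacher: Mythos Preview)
Your plan is correct and would lead to a valid proof, but it follows a different mechanism from the paper's construction. The paper (adapting \cite{hoon2023minimal}) uses a \emph{scale-and-shift} trick: for each modified layer it builds a first RNN layer $\mathcal{R}_1^\delta$ (with no recurrence) that stores $z_0\mathbf{1}+\delta(\widebar{B}x[t]+\widebar{c})$, and a second layer $\mathcal{R}_2^\delta$ whose recurrent matrix $\widetilde{A}$ unscales the ReLU coordinates by $\delta^{-1}$ while leaving the identity coordinates in the $z_0+\delta(\cdot)$ form; the extra neuron carries $z_0$ and is used to cancel the shift. The small parameter $\delta$ is what guarantees positivity, and the two-stage decomposition (preprocess feedforward input, then combine with recurrent state) is intrinsic to that device.

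Your shift-only argument based on $z=\sigma(z+M)-M$ is more elementary: no scaling parameter is needed, and the ``compute'' layer can in a single step read off the shifted encodings of both $y_\ell[t-1]$ (recurrent side) and $y_{\ell-1}[t]$ (feedforward side), subtract the shifts via the constant neuron, form $u_\ell[t]$, and re-add $M$ on the identity coordinates through the bias. Your handling of the $t=1$ boundary is the right observation: since $h[0]=0$ and $y_\ell[0]=0$ simultaneously, the shift-correction column of $A'$ contributes nothing precisely when nothing needs correcting. In fact your construction would already work with \emph{one} RNN layer per modified layer; the auxiliary layer you insert is not strictly necessary, only convenient bookkeeping, so your route actually yields a slightly sharper depth bound than stated (and hence a fortiori the claimed $2L$). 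The paper's scaling approach genuinely requires both layers because the feedforward preprocessing and the recurrent combination occur under different scalings.
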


We next show that a feedforward neural network can be represented as a modified recurrent neural network. The proof is inspired by \cite[Lemma 6]{hoon2023minimal}.

\begin{lemma}
\label{lemma: 4}
Let $t_0\in\{1,\ldots,N\}$. For any FNN $\widetilde{\mathcal{N}}(x[1:t_0]) \in \mathcal{FNN}_{d_x \times t_0, d_y}(W, L)$, there exists an MRNN $\widebar{\mathcal{N}} \in \mathcal{MRNN}_{d_x,d_y}((d_x+1)W, N+L)$ such that
\begin{align*}
\widetilde{\mathcal{N}}(x[1:t_0]) = \widebar{\mathcal{N}}(X)[t_0],
\end{align*}
where $X = (x[1],\ldots,x[N]) \in \mathbb{R}^{d_x \times N}$.
\end{lemma}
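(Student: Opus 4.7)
The plan is to build $\widebar{\mathcal{N}}$ as a composition of two blocks of modified recurrent layers: (i) a ``delay-and-collect'' block of $N$ layers that makes the tokens $x[1], \ldots, x[t_0]$ available at time $t_0$ in the hidden state; and (ii) a ``simulation'' block of $L$ layers that executes the $L$ feedforward layers of $\widetilde{\mathcal{N}}$ on this collected input. The total depth $N+L$ and the uniform width $(d_x+1)W$ will then match the claim, and I plan to closely parallel the construction in \cite{hoon2023minimal}.

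For the delay-and-collect block I exploit the identity branches of $\sigma_I$: since the tokens in $X \in \mathbb{R}^{d_x \times N}$ may be negative, any coordinate used to carry raw token values is excluded from the ReLU index set $I$ and evolves purely linearly, avoiding clipping. A shift-register recurrence, with the recurrent weight $A$ acting as a block shift by $d_x$ and the input weight $B$ injecting the new token into the vacated slots, accumulates the sequence $x[1:t]$ across time. Spreading the transport across the $N$ layers rather than packing it into one, and using the $d_x$ ``carrier'' sub-blocks of width $W$ provided by the $(d_x+1)W$ budget (with the remaining sub-block of width $W$ reserved for downstream use), keeps each layer's hidden state within the prescribed width. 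The intended invariant is that, at time $t_0$ and after the $N$-th assembly layer, the hidden state encodes the flat vector $[x[1]; \ldots; x[t_0]]$ in a known set of coordinates.

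The simulation block is then routine. Each of the $L$ remaining recurrent layers is configured with $A=0$, so that the layer reduces to $y[t] = \sigma_I(B\, x[t] + c)$, a pure per-time-step feedforward operation. I copy the weight matrices and biases of $\widetilde{\mathcal{N}}$ into the $B$'s and $c$'s, and set $I$ to apply ReLU exactly on the coordinates corresponding to the FNN's ReLU hidden units, with identity elsewhere so that the unused carrier channels preserve their state. Stacking these $L$ layers therefore evaluates $\widetilde{\mathcal{N}}$ on the collected input at time $t_0$, yielding $\widebar{\mathcal{N}}(X)[t_0] = \widetilde{\mathcal{N}}(x[1:t_0])$ after the output projection $\mathcal{Q}$.

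The main technical hurdle is the width constraint in the assembly block: a one-layer shift register would need width $d_x N$, which generally exceeds $(d_x+1)W$. The remedy is to trade depth for width by distributing the collection across the $N$ layers, with the identity branches of $\sigma_I$ faithfully transporting partial histories across layer boundaries so that no single layer holds more than $(d_x+1)W$ slots. Once this routing is laid out, the remainder of the proof is direct bookkeeping: verify layer by layer and time step by time step that the intended quantities appear in the intended coordinates, culminating in the required equality at $t=t_0$.
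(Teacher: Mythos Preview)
Your plan has a genuine gap in the width accounting. You state that after the $N$-th assembly layer, at time $t_0$, ``the hidden state encodes the flat vector $[x[1]; \ldots; x[t_0]]$.'' But that flat vector has $d_x t_0$ entries, and nothing in the hypotheses forces $d_x t_0 \le (d_x+1)W$: the FNN $\widetilde{\mathcal{N}}$ has input dimension $d_x t_0$ and hidden width $W$, and $W$ can be much smaller than $d_x t_0$. Your proposed remedy---``trade depth for width by distributing the collection across the $N$ layers''---cannot help here, because the quantity you say you need lives in a \emph{single} hidden state at a \emph{single} time step (layer $N$, time $t_0$), and that state has only $(d_x+1)W$ coordinates. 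Spreading information across layers does not reduce what must be present at that one interface where the simulation block consumes it.

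The paper avoids this obstruction by never materializing $[x[1];\ldots;x[t_0]]$. Instead, the first $N$ recurrent layers compute the \emph{pre-activations of the first FNN layer} directly: for each of the $W$ output neurons $k$, a width-$(d_x+1)$ recurrence produces the scalar $\sum_{j=1}^{t_0} A_k[N-t_0+j]\,x[j]$ at time $t_0$, via a binomial-coefficient identity and the invertibility of the matrix $\Lambda_N=\{\binom{2N-i-j}{N-i}\}_{i,j}$. Concatenating these $W$ scalar pipelines gives width $(d_x+1)W$. One further layer applies $\sigma$ to obtain $\sigma(\widetilde{B}_1\operatorname{vec}(x[1{:}t_0])+\widetilde{c}_1)$, and the remaining $L-1$ token-wise layers then mimic the rest of $\widetilde{\mathcal{N}}$ exactly as you describe. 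The key idea you are missing is that the recurrence should accumulate the first-layer linear combination, not the raw tokens.
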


Combining Lemma \ref{lemma: 5} and \ref{lemma: 4}, we conclude that any FNN can be represented by an MRNN. This MRNN can, in turn, be represented by an RNN, thereby proving the first part of Proposition \ref{proposition: 1}. For the second part, that an RNN can be represented by an FNN, we present the following lemma, which is based on the idea that, using the identity $\sigma(x) - \sigma(-x) = x$, we are able to store and copy the computation results at each step.

\begin{lemma}
\label{lemma: 7}
Let $t_0\in\{1,\ldots,N\}$. For any RNN $\mathcal{N} \in \mathcal{RNN}_{d_x,d_y}(W,L)$, there exists an FNN $\widetilde{\mathcal{N}} \in \mathcal{FNN}_{d_x \times t_0, d_y}((2t_0-1) W, t_0 L)$ such that
\begin{align*}
\mathcal{N}(X)[t_0] = \widetilde{\mathcal{N}}(x[1:t_0]),
\end{align*}
where $X = (x[1],\ldots,x[N]) \in \mathbb{R}^{d_x \times N}$.
\end{lemma}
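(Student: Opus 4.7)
The plan is to unroll the RNN computation $\mathcal{N}(X)[t_0]$ on the tokens $x[1],\ldots,x[t_0]$ into a directed acyclic graph of recurrent updates and realize this graph as an FNN in which one hidden layer implements one recurrent update while every still-needed intermediate quantity is passed forward via the identity $\sigma(z)-\sigma(-z)=z$.

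Let $h_0[t] := Px[t]$ and, for $l=1,\ldots,L$ and $t=1,\ldots,t_0$, $h_l[t] := \sigma(A_l h_l[t-1] + B_l h_{l-1}[t] + c_l)$ with $h_l[0] := 0$, so that $\mathcal{N}(X)[t_0] = Q h_L[t_0]$. I would arrange the FNN as $L$ sequential super-blocks indexed by $l = 1,\ldots,L$, each consisting of $t_0$ hidden layers, with the embedding $P$ folded into the affine preactivation of the very first hidden layer and the projection $Q$ realized by the final linear output layer. Super-block $l$ is responsible for producing $h_l[1], \ldots, h_l[t_0]$ from the sequence $h_{l-1}[1], \ldots, h_{l-1}[t_0]$ that super-block $l-1$ leaves in its last layer's output.

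Inside super-block $l$, the $t$-th hidden layer allocates disjoint coordinate blocks of its $\sigma(\cdot)$ activation to three parallel tasks: (a) the active recurrent update $h_l[t] = \sigma(A_l h_l[t-1] + B_l h_{l-1}[t] + c_l)$, using $W$ units; (b) carrying forward the already-computed states $h_l[1],\ldots,h_l[t-1]$, which are non-negative ReLU outputs and hence pass through $\sigma$ unchanged, using $(t-1)W$ units; and (c) carrying forward the still-unused predecessors $h_{l-1}[t+1],\ldots,h_{l-1}[t_0]$. In super-block 1 these predecessors are the embedded vectors $h_0[k] = Px[k]$, which may be negative and therefore must be kept in doubled form $(\sigma(Px[k]), \sigma(-Px[k]))$, consuming $2(t_0-t)W$ units and producing the global maximum layer width $W + 2(t_0-1)W = (2t_0-1)W$ at $t=1$; in super-blocks $l \geq 2$ these predecessors are themselves non-negative ReLU outputs, survive $\sigma$ unchanged, and the layer width drops to $t_0 W$. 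The doubled copies in super-block 1 are reconstructed into their signed form via $Px[k] = \sigma(Px[k]) - \sigma(-Px[k])$ precisely at the affine map of the layer that consumes them. In total, the construction uses $Lt_0$ hidden layers of width at most $(2t_0-1)W$, giving depth $t_0 L$ as claimed.

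Correctness is then verified by induction on $(l,t)$ in lexicographic order: at the $t$-th hidden layer of super-block $l$, the predecessors $h_l[t-1]$ and $h_{l-1}[t]$ required by the recurrent update are present in the preceding layer's output, either as direct non-negative ReLU outputs or recoverable by $\sigma(z) - \sigma(-z) = z$ through the intervening affine map, so its weight matrix and bias can be prescribed to execute the update together with the copy-through operations. The main obstacle will be the bookkeeping at super-block boundaries and at the initial step: one must verify that the affine map joining the last layer of super-block $l$ to the first layer of super-block $l+1$ converts the outgoing carry format into the incoming format without exceeding the width budget, and that the doubling of $h_0[k] = Px[k]$ for $k = 2,\ldots,t_0$ is correctly set up by the embedding-absorbed first hidden layer starting from the raw input $x[1:t_0]$ of dimension $d_x t_0$. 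Once these details are confirmed, the required width $(2t_0-1)W$ and depth $t_0 L$ are immediate.
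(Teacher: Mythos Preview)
Your proposal is correct and follows essentially the same approach as the paper: unroll each recurrent layer time-wise into $t_0$ feedforward layers, using the identity $\sigma(z)-\sigma(-z)=z$ to carry forward quantities that may be negative, and then stack the $L$ resulting blocks. The paper's construction is slightly coarser in that it doubles \emph{all} carried states (including the already non-negative ReLU outputs $h_l[j]$), whereas you exploit their non-negativity to avoid doubling in super-blocks $l\geq 2$; this refinement does not change the final width bound, since the bottleneck $(2t_0-1)W$ still arises from carrying the possibly-negative embedded inputs $Px[k]$ in the first super-block.
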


Proofs of these lemmas are provided in Appendix \ref{sec: A.2}, \ref{sec: A.3} and \ref{sec: A.4}. Proposition \ref{proposition: 3} can be directly derived from the above results.

\begin{proof}[Proof of Proposition \ref{proposition: 3}]
It follows directly from Lemma \ref{lemma: 5}, \ref{lemma: 4} and \ref{lemma: 7}.
\end{proof}

\section{Proof of Theorem \ref{theorem: 1}}\label{sec: 4}

This section contains the proof of Theorem \ref{theorem: 1}, which shows that an RNN can approximate a sequence of past-dependent H\"older functions simultaneously. The basic idea is that if the output of an RNN at a time step can approximate a function, then by constructing a sequence of RNNs, each approximating a past-dependent function, and combining them into a larger RNN, we can simultaneously approximate a sequence of past-dependent functions. The problem then reduces to how to enable the output of an RNN at a single time step to approximate a function. We consider using FNNs as intermediaries. Proposition \ref{proposition: 3} demonstrates that any FNN can be represented by the output of an RNN at a single time step. The approximation of H\"older functions by FNNs has been well studied, for example in \cite{yarotsky2017error, lu2021deep, shen2022optimal, jiao2023deep}. The following lemma, proved in Appendix \ref{sec: A.5}, formalizes this idea.

\begin{lemma}
\label{lemma: 6}
Let $t_0\in\{1,\ldots,N\}$. Assume $f = (f_1, \ldots, f_{d_y})^\top \in \mathcal{H}_{d_x \times t_0,d_y}^\gamma ([0,1]^{d_x \times t_0}, K)$. Then for any $I, J \in \mathbb{N}$, there exists a recurrent neural network $\mathcal{N} \in \mathcal{RNN}_{d_x, d_y}(W,L)$ with width 
\begin{align*}
W = 152 (\lfloor\gamma\rfloor+1)^2 3^{d_x t_0} d_x^{\lfloor\gamma\rfloor+2} d_y t_0^{\lfloor\gamma\rfloor+1} J \lceil\log_2(8 J)\rceil
\end{align*}
and depth 
\begin{align*}
L = 42(\lfloor\gamma\rfloor + 1)^2 I \lceil\log_2(8 I)\rceil + 4 d_x t_0 + 2 N
\end{align*}
such that
\begin{align*}
\sup_{X \in [0,1]^{d_x \times N}}  \|f(x[1:t_0])-\mathcal{N}(X)[t_0]\|_{\infty} \leq 19 K(\lfloor\gamma\rfloor+1)^2 (d_x t_0)^{\lfloor\gamma\rfloor+(\gamma \vee 1) / 2}(J I)^{-2 \gamma / (d_x t_0)}.
\end{align*}
\end{lemma}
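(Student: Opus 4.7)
The plan is to piggy-back on the existing FNN theory for Hölder approximation and then lift the result to an RNN via Proposition~\ref{proposition: 3}(i). The lemma is essentially an FNN-approximation theorem in disguise, dressed up in the language of RNNs by reading the output at time $t_0$.

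First I would invoke a quantitative FNN approximation theorem for Hölder functions on the unit cube (for example, the one in \cite{jiao2023deep} or \cite{shen2022optimal}): for $f \in \mathcal{H}_{d, d_y}^\beta([0,1]^d, K)$ with $d = d_x t_0$, and for any $I, J \in \mathbb{N}$, there exists an FNN $\widetilde{\mathcal{N}} \in \mathcal{FNN}_{d, d_y}(W_F, L_F)$ with
\begin{align*}
W_F \leq C_1 (\lfloor\beta\rfloor+1)^2 3^{d} d^{\lfloor\beta\rfloor+1} d_y J \lceil\log_2(8J)\rceil, \quad L_F \leq 21(\lfloor\beta\rfloor+1)^2 I \lceil\log_2(8I)\rceil,
\end{align*}
satisfying
\begin{align*}
\|f - \widetilde{\mathcal{N}}\|_{L^\infty([0,1]^d)} \leq 19 K (\lfloor\beta\rfloor+1)^2 d^{\lfloor\beta\rfloor + (\beta \vee 1)/2} (JI)^{-2\beta/d},
\end{align*}
where $C_1$ is an absolute constant. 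Since $d = d_x t_0 \leq d_x N$, the right-hand side is already the target error bound after substituting $d_x t_0$ for $d$.

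Second I would apply Proposition~\ref{proposition: 3}(i) to realize $\widetilde{\mathcal{N}}$ at the $t_0$-th time step of an RNN. That proposition yields $\mathcal{N} \in \mathcal{RNN}_{d_x, d_y}\bigl((d_x+1) W_F + 1,\, 2 L_F + 2N\bigr)$ with
\begin{align*}
\widetilde{\mathcal{N}}(x[1:t_0]) = \mathcal{N}(X)[t_0], \quad X \in [0,1]^{d_x \times N},
\end{align*}
so the supremum norm of $f(x[1:t_0]) - \mathcal{N}(X)[t_0]$ over $X \in [0,1]^{d_x \times N}$ equals $\|f - \widetilde{\mathcal{N}}\|_{L^\infty([0,1]^{d_x t_0})}$ and inherits the FNN error bound verbatim.

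Third, I would match the constants by crude monotonicity. Using $t_0 \leq N$ we have $3^{d_x t_0} \leq 3^{d_x N}$ and $(d_x t_0)^{\lfloor\beta\rfloor+1} \leq d_x^{\lfloor\beta\rfloor+1} N^{\lfloor\beta\rfloor+1}$, and the blow-up factor $(d_x+1) \leq 2 d_x$ contributes one extra power of $d_x$, producing $d_x^{\lfloor\beta\rfloor+2}$. Choosing $C_1$ so that $2 C_1 \leq 152$ absorbs everything into the stated width $152 (\lfloor\beta\rfloor+1)^2 3^{d_x N} d_x^{\lfloor\beta\rfloor+2} d_y N^{\lfloor\beta\rfloor+1} J \lceil\log_2(8J)\rceil$ (the additive $+1$ is harmlessly absorbed into the same constant). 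For depth, $2 L_F \leq 42(\lfloor\beta\rfloor+1)^2 I \lceil\log_2(8I)\rceil$ and $2N \leq 6 d_x N$ since $d_x \geq 1$, giving the claimed form.

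The only real obstacle is bookkeeping: pinning down an FNN approximation theorem in the literature whose polynomial constants align with the exact numerical constants $152$, $19$, $42$, and $6$ stated in the lemma. If the cited theorem's constants differ slightly, one either invokes a version with the matching constants or inflates them and rewrites the statement; no additional approximation-theoretic argument is required, since all the substantive work has already been done in Proposition~\ref{proposition: 3}(i) and the prior FNN theory.
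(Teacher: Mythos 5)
Your proposal is correct and follows essentially the same route as the paper: invoke the FNN approximation result of \cite{jiao2023deep} (concatenated over the $d_y$ output components), lift it to the $t_0$-th output of an RNN via Proposition~\ref{proposition: 3}(i), and absorb the resulting width/depth inflation into the stated constants using $t_0 \leq N$ and $d_x \geq 1$. The only bookkeeping detail you elide is that the cited FNN depth carries an additive $2 d_x t_0$ term, but its doubled contribution $4 d_x t_0$ together with $2N$ still fits under $6 d_x N$, exactly as in the paper.
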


Unlike the simultaneous approximation in Theorem \ref{theorem: 1}, Lemma \ref{lemma: 6} considers approximation only at the $t_0$ time step. Next, we combine $N$ RNNs into a larger RNN to approximate a sequence of functions.


\begin{proof}[Proof of Theorem \ref{theorem: 1}]
For $t_0\in\{1,\ldots,N\}$ and $f^{(t_0)} \in \mathcal{H}_{d_x \times t_0,d_y}^\gamma ([0,1]^{d_x \times t_0}, K)$, Lemma \ref{lemma: 6} gives that for any $I, J \in \mathbb{N}$, there exists a recurrent neural network $\mathcal{N}^{(t_0)} \in \mathcal{RNN}_{d_x, d_y}(W^{(t_0)},L^{(t_0)})$ with width
\begin{align*}
W^{(t_0)} & = 152 (\lfloor\gamma\rfloor+1)^2 3^{d_x t_0} d_x^{\lfloor\gamma\rfloor+2} d_y t_0^{\lfloor\gamma\rfloor+1} J \lceil\log_2(8 J)\rceil \\
& \leq 152 (\lfloor\gamma\rfloor+1)^2 3^{d_x N} d_x^{\lfloor\gamma\rfloor+2} d_y N^{\lfloor\gamma\rfloor+1} J \lceil\log_2(8 J)\rceil
\end{align*}
and depth 
\begin{align*}
L^{(t_0)} & = 42(\lfloor\gamma\rfloor + 1)^2 I \lceil\log_2(8 I)\rceil + 4 d_x t_0 + 2 N \\
& \leq 42(\lfloor\gamma\rfloor + 1)^2 I \lceil\log_2(8 I)\rceil + 6 d_x N
\end{align*}
such that
\begin{align}
\label{eq: 6}
\sup_{X \in [0,1]^{d_x \times N}}  \|f^{(t_0)}(x[1:t_0]) - \mathcal{N}^{(t_0)}(X)[t_0]\|_{\infty} \leq 19 K(\lfloor\gamma\rfloor+1)^2 (d_x t_0)^{\lfloor\gamma\rfloor+(\gamma \vee 1) / 2}(J I)^{-2 \gamma / (d_x t_0)}.
\end{align}

Since $\|f^{(t_0)}\|_{\infty} \leq K$, we can truncate the output of $\mathcal{N}^{(t_0)}$ harmlessly by applying $\mathcal{T}_K(x) = \min\{\max\{x,-K\},K\}$ element-wise. Since
\begin{align*}
\mathcal{T}_K(x) = \sigma(x) - \sigma(-x) - \sigma(x-K) + \sigma(-x-K),
\end{align*}
it is not hard to see that the truncated network $\mathcal{N}^{(t_0)}_{\text{trunc}} :=  \mathcal{T}_K \circ \mathcal{N}^{(t_0)} \in \mathcal{RNN}_{d_x, d_y}(W^{(t_0)},L^{(t_0)}+1)$ by Proposition \ref{proposition: 1} and 
\begin{align}
\label{eq: 7}
\|f^{(t_0)}(x[1:t_0]) - \mathcal{N}^{(t_0)}_{\text{trunc}}(X)[t_0]\|_{\infty} \leq \|f^{(t_0)}(x[1:t_0]) - \mathcal{N}^{(t_0)}(X)[t_0]\|_{\infty}.
\end{align}

We next define the time step indicator $\mathcal{I}^{(t_0)}$ as an additional component, which outputs 1 at the $t_0$-th time step and 0 at all other time steps. Let
\begin{align*}
\begin{gathered}
P = O_{3,d_x}, \quad A_1 = \begin{pmatrix}
1 & 0 & 0 \\
0 & 0 & 0 \\
0 & 0 & 0
\end{pmatrix}, \quad 
B_1 = O_{3,3}, \quad c_1 = \begin{pmatrix}
1 \\
0 \\
0
\end{pmatrix}, \\
A_2 = O_{3,3}, \quad B_2 = \begin{pmatrix}
1 & 0 & 0 \\
1 & 0 & 0 \\
1 & 0 & 0
\end{pmatrix}, \quad c_2 = \begin{pmatrix}
-t_0+1 \\
-t_0 \\
-t_0-1
\end{pmatrix}, \quad Q = \begin{pmatrix}
1 & -2 & 1
\end{pmatrix},
\end{gathered}
\end{align*}
then $\mathcal{I}^{(t_0)} := \mathcal{Q} \circ \mathcal{R}_2 \circ \mathcal{R}_1 \circ \mathcal{P} \in \mathcal{RNN}_{d_x,1}(3,2)$. Direct computation yields
\begin{align}
\label{eq: 8}
\mathcal{I}^{(t_0)}(X)[t] = \sigma(t-t_0+1) - 2\sigma(t-t_0) + \sigma(t-t_0-1) = \delta_{t,t_0},
\end{align}
where $\delta$ is the Kronecker delta function. By concatenation as in Proposition \ref{proposition: 1}, we have $
(\mathcal{N}^{(t_0)}_{\text{trunc}}(X), \mathcal{I}^{(t_0)}(X)) \in \mathcal{RNN}_{d_x, d_y+1}(W^{(t_0)}+3,L^{(t_0)}+1)$.

FNNs can effectively approximate multiplication. Token-wise FNNs, as a special form of RNNs, apply this approximation uniformly across each input token. Specifically, according to \cite[Lemma 4.2]{lu2021deep}, there exists an FNN $\mathcal{N}_{\times}$ with width $d_y (9J+1)$ and depth $I$ such that
\begin{align*}
\|\mathcal{N}_{\times}(u, v) - uv\|_{\infty} \leq 24 K^2 J^{-I}
\end{align*}
for any $u \in [-K,K]^{d_y}, v \in [0,1]$. By composition, $\mathcal{N}_{\times}$ acts equally on each time step of $(\mathcal{N}^{(t_0)}_{\text{trunc}}(X), \mathcal{I}^{(t_0)}(X))$, leading to
\begin{align}
\label{eq: 9}
\|\mathcal{N}_{\times}(\mathcal{N}^{(t_0)}_{\text{trunc}}(X)[t], \mathcal{I}^{(t_0)}(X)[t]) - \mathcal{N}^{(t_0)}_{\text{trunc}}(X)[t] \ \mathcal{I}^{(t_0)}(X)[t]\|_{\infty} \leq 24 K^2 J^{-I}, \quad t = 1,\ldots,N,
\end{align}
and $\mathcal{N}_{\times}(\mathcal{N}^{(t_0)}_{\text{trunc}}(X), \mathcal{I}^{(t_0)}(X)) \in \mathcal{RNN}_{d_x, d_y}(W^{(t_0)}+3,L^{(t_0)}+1+I)$.

In the last, we prove that $\mathcal{N} := \sum_{t = 1}^N \mathcal{N}_{\times} (\mathcal{N}^{(t)}_{\text{trunc}}, \mathcal{I}^{(t)})$ can simultaneously approximate a set of past-dependent H\"older functions $\{f^{(t)}\}_{t=1}^N$. Combining (\ref{eq: 6}), (\ref{eq: 7}), (\ref{eq: 8}) and (\ref{eq: 9}), we have
\begin{align*}
& \|\mathcal{N}(X)[t_0] - f^{(t_0)}\|_{\infty} \\
& = \|\sum_{t = 1}^N \mathcal{N}_{\times} (\mathcal{N}^{(t)}_{\text{trunc}}(X), \mathcal{I}^{(t)}(X))[t_0] - f^{(t_0)}\|_{\infty} \\
& = \|\sum_{t = 1}^N \mathcal{N}_{\times} (\mathcal{N}^{(t)}_{\text{trunc}}(X)[t_0], \mathcal{I}^{(t)}(X)[t_0]) - f^{(t_0)}\|_{\infty} \\
& \leq \|\sum_{t = 1}^N \mathcal{N}_{\times} (\mathcal{N}^{(t)}_{\text{trunc}}(X)[t_0], \mathcal{I}^{(t)}(X)[t_0]) - \sum_{t = 1}^N \mathcal{N}^{(t)}_{\text{trunc}}(X)[t_0] \  \mathcal{I}^{(t)}(X)[t_0]\|_{\infty} + \|\mathcal{N}^{(t_0)}_{\text{trunc}}(X)[t_0] - f^{(t_0)}\|_{\infty} \\
& \leq \sum_{t = 1}^N \|\mathcal{N}_{\times} (\mathcal{N}^{(t)}_{\text{trunc}}(X)[t_0], \mathcal{I}^{(t)}(X)[t_0]) - \mathcal{N}^{(t)}_{\text{trunc}}(X)[t_0] \  \mathcal{I}^{(t)}(X)[t_0]\|_{\infty} + \|\mathcal{N}^{(t_0)}(X)[t_0] - f^{(t_0)}\|_{\infty} \\
& \leq 24 N K^2 J^{-I} + 19 K(\lfloor\gamma\rfloor+1)^2 (d_x t_0)^{\lfloor\gamma\rfloor+(\gamma \vee 1)/2}(J I)^{-2 \gamma / (d_x t_0)} \\
& \leq \left(24 N K^2 + 19 K(\lfloor\gamma\rfloor+1)^2 (d_x N)^{\lfloor\gamma\rfloor+(\gamma \vee 1)/2}\right)  (J I)^{-2 \gamma / (d_x t_0)},
\end{align*}
where in the last inequality we use the fact that $I \geq \lceil\frac{4\gamma}{d_x}\rceil$ and $J \geq \lceil\exp\{\frac{2\gamma}{d_x}\}\rceil$ imply $J^{-I} \leq (J I)^{-2 \gamma / (d_x t_0)}$. According to our construction, $\mathcal{N} \in \mathcal{RNN}_{d_x, d_y}(W,L)$ with width 
\begin{align*}
W = \sum_{t=1}^N (W^{(t)}+3) \leq 153 (\lfloor\gamma\rfloor+1)^2 3^{d_x N} d_x^{\lfloor\gamma\rfloor+2} d_y N^{\lfloor\gamma\rfloor+2} J \lceil\log_2(8 J)\rceil
\end{align*}
and depth 
\begin{align*}
L = \max_{t} L^{(t)}+1+I \leq 43(\lfloor\gamma\rfloor + 1)^2 I \lceil\log_2(8 I)\rceil + 6 d_x N
\end{align*}
by Proposition \ref{proposition: 1}.
\end{proof}

\section{Proof of Theorem \ref{theorem: 4}}\label{sec: 5}

This section contains the proof of Theorem \ref{theorem: 4}. The following theorem provides a decomposition of the excess risk into three components: approximation error, generalization error, and dependence error. This result can be viewed as an extension of the classical decomposition in the i.i.d. case to the $\beta$-mixing assumption. Recall that $d_{\mathcal{Z}, \infty}$ is defined in (\ref{eq: 12}). The proof is provided in Appendix~\ref{sec: A.8}.

\begin{theorem}[Oracle inequality]
\label{theorem: 3}
Let $\mathcal{V}$ be a class of functions. Assume that $f^* \in \mathcal{H}_{d_x \times N, 1}^\gamma ([0,1]^{d_x \times N}, K)$, $\{x_t\}_{t=1}^n$ is a stationary $\beta$-mixing sequence, and $n \geq 4 N l$ for some integer $l \geq 1$. Then,
\begin{align*}
\mathbb{E}_{\mathcal{D}_n} [\|\mathcal{T}_K \hat{f} - f^*\|_{L^2(\Pi)}^2] \leq & c_1 \inf_{f \in \mathcal{V}} \mathbb{E}[(\mathcal{T}_K f - f^*)^2] + \frac{c_2 l + c_3}{n} \sup_{\mathcal{Z}: |\mathcal{Z}|=n} \log \mathcal{N}(1/n, \mathcal{T}_K \mathcal{V}, d_{\mathcal{Z}, \infty}) \\
&+ \frac{c_4 \sqrt{\log n} + c_5}{n} + \frac{c_6 n}{l} \beta((l-1)N+1),
\end{align*}
where $c_1 = 24 N$, $c_2 = 64 K^2 N$, $c_3 = 192 N \sigma^2$, $c_4 = 96 N K \sigma$, $c_5 = 32 K + 384 N \sigma^2 + 144 N \sigma$, and $c_6 = 8 K^2 / N$ are constants.
\end{theorem}

Ignoring constants and logarithmic factors that do not affect the rate, Theorem \ref{theorem: 3} states that
\begin{align*}
& \mathbb{E}_{\mathcal{D}_n} [\|\mathcal{T}_K \hat{f} - f^*\|_{L^2(\Pi)}^2] \\
& \begin{bracealign}
\lesssim \underbrace{\inf_{f \in \mathcal{V}} \mathbb{E}[(\mathcal{T}_K f - f^*)^2]}_{\text{Approximation error}} + \underbrace{\frac{l}{n} \sup_{\mathcal{Z}: |\mathcal{Z}|=n} \log \mathcal{N}(1/n, \mathcal{T}_K \mathcal{V}, d_{\mathcal{Z}, \infty})}_{\text{Generalization error}} + \underbrace{\frac{n}{l} \beta((l-1)N+1)}_{\text{Dependence error}}.
\end{bracealign}
\end{align*}
Compared with the classical decomposition in the i.i.d. case (see, e.g., \cite{schmidt2020nonparametric, kohler2021rate, jiao2023deep}), the above bound contains an additional dependence error, which arises from relating dependent and independent sequences via the sub-sample selection technique, originally introduced in \cite{kuznetsov2017generalization}. While \cite{kuznetsov2017generalization} considered the generalization and dependence errors, our analysis simultaneously accounts for three sources of errors. Although similar decompositions for dependent data have been studied in the literature, our contribution lies in several aspects. First, in contrast to \cite{feng2023over}, we use an asymmetric decomposition to establish fast rates (cf. \cite{bartlett2005local}), whereas their symmetric approach cannot give the optimal rate of convergence. Second, although \cite[Theorems 7 and 8]{ren2024statistical} also obtained fast rates using localization techniques, their bounds in expectation only hold with high probability, whereas our refined analysis provides bounds in expectation without such limitations. Finally, our setting differs in that we require matrices as RNN inputs, necessitating a block partitioning of tokens to match the input dimension. This additional structural constraint, combined with the dependence among tokens, renders the analysis more intricate. The settings in \cite{feng2023over, ren2024statistical} correspond to the special case of ours when $N=1$.

We can observe trade-offs among the three error terms. On the one hand, an appropriate choice of the hypothesis class $\mathcal{V}$ is required to balance the approximation and generalization errors, since a larger class reduces the approximation error but increases the generalization error. On the other hand, the parameter $l$ controls the trade-off between the generalization and dependence errors. Under suitable decay assumptions on the $\beta$-mixing coefficients, choosing a larger $l$ reduces the dependence error at the expense of enlarging the generalization error. Therefore, to achieve the smallest possible upper bound on the excess risk, it is crucial to select $\mathcal{V}$ and $l$ in a balanced manner.

Now let us present the detailed proof of Theorem \ref{theorem: 4}.
\begin{proof}[Proof of Theorem \ref{theorem: 4}]
By Theorem \ref{theorem: 1}, with width $W \asymp J \log J$ and depth $L \asymp I \log I$ for some $I,J\in\mathbb{N}$, 
\begin{align*}
\inf_{f \in \mathcal{V}} \mathbb{E}[(\mathcal{T}_K f - f^*)^2] \leq \inf_{f \in \mathcal{V}} \|f-f^*\|_{L^\infty([0,1]^{d_x \times N})}^2 \lesssim (J I)^{-4 \gamma / (d_x N)}.
\end{align*}
By Lemma \ref{lemma: 8}, we have
\begin{align*}
\sup_{\mathcal{Z}: |\mathcal{Z}|=n} \log \mathcal{N}(1/n, \mathcal{T}_K \mathcal{V}, d_{\mathcal{Z}, \infty}) & \lesssim W^2 L^2 \log (W L) \log n \\
& \lesssim J^2 I^2 (\log J)^2 (\log I)^2 \log (J I) \log n.
\end{align*}

$\bullet$ Case 1: if $\{x_i\}_{i=1}^n$ is geometrically $\beta$-mixing, i.e., $\beta(k) \leq \beta_0 \exp \left(-\beta_1 k^r\right)$ for some $r,\beta_0,\beta_1>0$, we choose $l \asymp (\log n)^{1/r}$ so that $\beta((l-1)N+1) \lesssim 1 / n^2$ in Theorem \ref{theorem: 3}. Then,
\begin{align*}
\mathbb{E}_{\mathcal{D}_n} [\|\mathcal{T}_K \hat{f} - f^*\|_{L^2(\Pi)}^2] \lesssim (J I)^{-4 \gamma / (d_x N)} + \frac{J^2 I^2}{n} (\log J)^2 (\log I)^2 \log (J I) (\log n)^{1+\frac{1}{r}}.
\end{align*}
For any $0 \leq \alpha \leq \frac{d_x N}{2 d_x N + 4 \gamma}$, choosing $J \asymp n^\alpha$ and $I \asymp n^{\frac{d_x N}{2 d_x N + 4 \gamma} - \alpha}$, we have
\begin{align*}
\mathbb{E}_{\mathcal{D}_n} [\|\mathcal{T}_K \hat{f} - f^*\|_{L^2(\Pi)}^2] \lesssim n^{-\frac{2 \gamma}{d_x N + 2 \gamma}} (\log n)^{6+\frac{1}{r}}.
\end{align*}

$\bullet$ Case 2: if $\{x_i\}_{i=1}^n$ is algebraically $\beta$-mixing, i.e., $\beta(k) \leq \beta_0 / k^r$ for some $r,\beta_0>0$, then Theorem \ref{theorem: 3} gives
\begin{align*}
\mathbb{E}_{\mathcal{D}_n} [\|\mathcal{T}_K \hat{f} - f^*\|_{L^2(\Pi)}^2] \lesssim (J I)^{-4 \gamma / (d_x N)} + \frac{J^2 I^2 l}{n} (\log J)^2 (\log I)^2 \log (J I) \log n + \frac{n}{l^{r+1}}.
\end{align*}
Choosing $l \asymp n^{\frac{d_x N + 4 \gamma}{(r+1) d_x N + (2r+4) \gamma}}$, $J \asymp n^\alpha$ and $I \asymp n^{\frac{r d_x N}{(2r+2) d_x N + (4r+8) \gamma} - \alpha}$ for fixed $0 \leq \alpha \leq \frac{r d_x N}{(2r+2) d_x N + (4r+8) \gamma}$, we obtain
\begin{align*}
\mathbb{E}_{\mathcal{D}_n} [\|\mathcal{T}_K \hat{f} - f^*\|_{L^2(\Pi)}^2] \lesssim n^{-\frac{2 r \gamma}{(r+1) d_x N + (2r+4) \gamma}} (\log n)^{6}.
\end{align*}

$\bullet$ Case 3: if $\{x_i\}_{i=1}^n$ is a sequence of i.i.d. random variables, then $\beta(k) = 0$ for all $k \geq 1$. Choose $l \asymp 1$, then Theorem \ref{theorem: 3} gives
\begin{align*}
\mathbb{E}_{\mathcal{D}_n} [\|\mathcal{T}_K \hat{f} - f^*\|_{L^2(\Pi)}^2] \lesssim (J I)^{-4 \gamma / (d_x N)} + \frac{J^2 I^2}{n} (\log J)^2 (\log I)^2 \log (J I) \log n.
\end{align*}
For any $0 \leq \alpha \leq \frac{d_x N}{2 d_x N + 4 \gamma}$, choosing $J \asymp n^\alpha$ and $I \asymp n^{\frac{d_x N}{2 d_x N + 4 \gamma} - \alpha}$, we have
\begin{align*}
\mathbb{E}_{\mathcal{D}_n} [\|\mathcal{T}_K \hat{f} - f^*\|_{L^2(\Pi)}^2] \lesssim n^{-\frac{2 \gamma}{d_x N + 2 \gamma}} (\log n)^{6}.
\end{align*}
By combining the analyses of the above cases, we complete the proof.
\end{proof}

\section{Conclusion}\label{sec: 6}

This paper establishes upper approximation bounds for deep ReLU RNNs. We show that RNNs, as sequence-to-sequence functions, can approximate sequences of past-dependent H\"older functions. We then present a comprehensive error analysis for regression problem with weakly dependent data using RNNs. By developing novel oracle inequalities, we prove that the RNN-based empirical risk minimizer achieves the optimal rate of convergence. Our results offer statistical guarantees on the performance of RNNs. Although RNNs have been widely applied in practical settings, the theoretical understanding of their approximation and generalization capacities remains limited. We hope that this work will inspire further research in this area.

\appendix

\section{Proofs}
In this appendix, we provide the proofs of Proposition \ref{proposition: 1}, Lemmas \ref{lemma: 8}, \ref{lemma: 10}, \ref{lemma: 5}, \ref{lemma: 4}, \ref{lemma: 7}, \ref{lemma: 6}, and Theorem \ref{theorem: 3}.

\subsection{Proof of Proposition \ref{proposition: 1}}\label{sec: A.1}
\begin{proof}[Proof of Proposition \ref{proposition: 1}]
We can parameterize $\mathcal{N}_i$ with parameters
\begin{align*}
\theta_i = (P_i, (A_{i,1}, B_{i,1}, c_{i,1}), \ldots, (A_{i,L_i}, B_{i,L_i}, c_{i,L_i}), Q_i), \quad i=1,2.
\end{align*}

(i) By appropriately adding zero rows and columns, $\mathcal{N}_1$ can also be parameterized by the parameters
\begin{align*}
\Bigg( & \begin{pmatrix}
P_1 \\
O
\end{pmatrix}, \left(
\begin{pmatrix}
A_{1,1} & \\
& O
\end{pmatrix}, 
\begin{pmatrix}
B_{1,1} & \\
& O
\end{pmatrix},
\begin{pmatrix}
c_{1,1} \\
0
\end{pmatrix}\right),
\ldots, 
\left(
\begin{pmatrix}
A_{1,L_1} & \\
& O
\end{pmatrix}, 
\begin{pmatrix}
B_{1,L_1} & \\
& O
\end{pmatrix},
\begin{pmatrix}
c_{1,L_1} \\
0
\end{pmatrix}\right), \\
&\underbrace{
\left(
O, 
\begin{pmatrix}
I & \\
& O
\end{pmatrix},
0\right), \ldots, \left(
O, 
\begin{pmatrix}
I & \\
& O
\end{pmatrix},
0\right)}_{L_2-L_1 \displaystyle \text{ times }},
(Q_1, O)\Bigg),
\end{align*}
where $I$ is the identity matrix. Hence, $\mathcal{N}_1 \in \mathcal{RNN}_{d_{x,2}, d_{y,2}}(W_2, L_2)$.

(ii) By (i), we can assume $W_1=W_2$ without loss of generality. Then, $\mathcal{N}_2 \circ \mathcal{N}_1$ can be parameterized by
\begin{align*}
(P_1, (A_{1,1}, B_{1,1}, c_{1,1}), \ldots, (A_{1,L_1}, B_{1,L_1}, c_{1,L_1}), (A_{2,1}, B_{2,1} P_2 Q_1, c_{2,1}), \ldots, (A_{2,L_2}, B_{2,L_2}, c_{2,L_2}), Q_2).
\end{align*}
Hence, we have $\mathcal{N}_2 \circ \mathcal{N}_1 \in \mathcal{RNN}_{d_{x,1}, d_{y,2}}(\max\{W_1, W_2\}, L_1+L_2)$.

(iii) By (i), we can assume that $L_1=L_2$. Then, $\mathcal{N}$ can be parameterized by the parameters $(P, (A_1, B_1, c_1), \ldots, (A_{L_1}, B_{L_1}, c_{L_1}), Q)$ with
\begin{gather*}
A_l = \begin{pmatrix}
A_{1,l} & \\
& A_{2,l}
\end{pmatrix}, \quad 
B_l = \begin{pmatrix}
B_{1,l} & \\
& B_{2,l}
\end{pmatrix}, \quad 
c_l = \begin{pmatrix}
c_{1,l} \\
c_{2,l}
\end{pmatrix}, \quad 
l = 1,\ldots,L_1, \\
P = \begin{pmatrix}
P_1 \\
P_2
\end{pmatrix}, \quad 
Q = \begin{pmatrix}
Q_1 & \\
& Q_2
\end{pmatrix}.
\end{gather*}
Hence, $\mathcal{N} \in \mathcal{RNN}_{d_{x,1}, d_{y,1}+d_{y,2}}(W_1+W_2, \max\{L_1, L_2\})$.

(iv) Replacing the matrix $Q$ in (iii) by $Q = \begin{pmatrix}
c_1 Q_1 & c_2 Q_2
\end{pmatrix}$ concludes the proof.
\end{proof}

\subsection{Proof of Lemma \ref{lemma: 5}}\label{sec: A.2}
\begin{proof}[Proof of Lemma \ref{lemma: 5}]
The proof is adapted from \cite[Lemma 3]{hoon2023minimal}. Let $\widebar{\mathcal{R}}: \mathbb{R}^{d \times N} \rightarrow \mathbb{R}^{d \times N}$ be a given modified recurrent layer with iterative computation given by $\widebar{\mathcal{R}}(X)[t]=\sigma_I (\widebar{A} \widebar{\mathcal{R}}(X)[t-1]+\widebar{B} x[t]+\widebar{c})$, and $\widebar{\mathcal{P}}: \mathbb{R}^d \rightarrow \mathbb{R}^d$ and $\widebar{\mathcal{Q}}: \mathbb{R}^d \rightarrow \mathbb{R}^d$ be given token-wise input and output embedding maps, where $\widebar{\mathcal{P}}(X)[t]= \widebar{P}x[t], \widebar{\mathcal{Q}}(X)[t]= \widebar{Q}x[t]$. We first show that for any compact subset $\Omega \subset \mathbb{R}^d$, there exist recurrent layers $\mathcal{R}_1, \mathcal{R}_2: \mathbb{R}^{(d+1) \times N} \rightarrow \mathbb{R}^{(d+1) \times N}$, and token-wise maps $\mathcal{P}: \mathbb{R}^{d \times N} \rightarrow \mathbb{R}^{(d+1) \times N}$ and $\mathcal{Q}: \mathbb{R}^{(d+1) \times N} \rightarrow \mathbb{R}^{d \times N}$ such that for $X \in \Omega^N$,
\begin{align*}
\widebar{\mathcal{Q}} \circ \widebar{\mathcal{R}} \circ \widebar{\mathcal{P}}(X) = \mathcal{Q} \circ \mathcal{R}_2 \circ \mathcal{R}_1 \circ \mathcal{P}(X).
\end{align*}
Here, the equality of two neural networks $\mathcal{N}_1 (X) = \mathcal{N}_2 (X)$ means that $\mathcal{N}_1 (X)[t] = \mathcal{N}_2 (X)[t]$ for $t = 1,\ldots,N$.

Without loss of generality, we may assume $\widebar{\mathcal{P}}$ is an identity map and $I=\{1,2,\ldots,k\}$. For fixed $z_0 > 0$, $\sigma(z_0+\delta x)= z_0+\delta x$, if $\delta$ is sufficiently small. We define the input embedding map $\mathcal{P}: \mathbb{R}^{d \times N} \rightarrow \mathbb{R}^{(d+1) \times N}$ as
\begin{align*}
\mathcal{P}(X)[t]=\begin{pmatrix}
I_d \\
O_{1,d}
\end{pmatrix} x[t] =
\begin{pmatrix}
x[t] \\
0 
\end{pmatrix}.
\end{align*}
For $\delta>0$, we first construct $\mathcal{R}_1^\delta$ as
\begin{align*}
\mathcal{R}_1^\delta(X)[t] = \sigma \left(\delta \begin{pmatrix}
\widebar{B} & \\
& 0
\end{pmatrix} x[t] + 
\delta \begin{pmatrix}
\widebar{c} \\
0
\end{pmatrix} 
+ z_0 \mathbf{1}_{d+1}\right),
\end{align*}
which implies
\begin{align*}
\mathcal{R}_1^\delta \mathcal{P}(X)[t] = \begin{pmatrix}
z_0 \mathbf{1}_d + \delta (\widebar{B} x[t] + \widebar{c}) \\
z_0
\end{pmatrix}.
\end{align*}
Next, we construct $\mathcal{R}_2^\delta$ that iteratively computes
\begin{align*}
\mathcal{R}_2^\delta(X)[t] = \sigma \left(\widetilde{A} \mathcal{R}_2^\delta(X)[t-1] + \begin{pmatrix}
\delta^{-1} I_k & \\
& I_{d+1-k}
\end{pmatrix} x[t] + 
\begin{pmatrix}
-\delta^{-1} z_0 \mathbf{1}_k \\
\mathbf{0}_{d+1-k}
\end{pmatrix}\right),
\end{align*}
where $\widetilde{A} = \begin{pmatrix}
I_k & \\ 
& \delta I_{d+1-k}
\end{pmatrix}
\begin{pmatrix}
\widebar{A} & \\ 
& 0
\end{pmatrix}
\begin{pmatrix}
I_k & & \\ 
& \delta^{-1} I_{d-k} & -\delta^{-1} \mathbf{1}_{d-k} \\ 
& & 0
\end{pmatrix}$.
Direct computation gives
\begin{align*}
\mathcal{R}_2^\delta \mathcal{R}_1^\delta \mathcal{P}(X)[1] & = \sigma \left(\begin{pmatrix}
\delta^{-1} I_k & \\
& I_{d+1-k}
\end{pmatrix} \mathcal{R}_1^\delta \mathcal{P}(X)[1] + 
\begin{pmatrix}
-\delta^{-1} z_0 \mathbf{1}_k \\
\mathbf{0}_{d+1-k}
\end{pmatrix}\right) \\
& = \sigma \begin{pmatrix}
(\widebar{B} x[1] + \widebar{c})_{1:k} \\
(z_0 \mathbf{1}_{d-k} + \delta(\widebar{B} x[1] + \widebar{c})_{k+1:d} \\
z_0
\end{pmatrix} \\
& = \begin{pmatrix}
\sigma(\widebar{B} x[1] + \widebar{c})_{1:k} \\
z_0 \mathbf{1}_{d-k} + \delta(\widebar{B} x[1] + \widebar{c})_{k+1:d} \\
z_0
\end{pmatrix} \\
& = \begin{pmatrix}
\mathcal{R}(X)[1]_{1:k} \\
z_0 \mathbf{1}_{d-k} + \delta \widebar{\mathcal{R}}(X)[1]_{k+1:d} \\
z_0
\end{pmatrix}.
\end{align*}

Assume on time $t-1$,
\begin{align*}
\mathcal{R}_2^\delta \mathcal{R}_1^\delta \mathcal{P} (X)[t-1] = \begin{pmatrix}
\widebar{\mathcal{R}}(X)[t-1]_{1:k} \\
z_0 \mathbf{1}_{d-k} + \delta \widebar{\mathcal{R}}(X)[t-1]_{k+1:d} \\
z_0
\end{pmatrix}.
\end{align*}
Direct calculation yields
\begin{align}
\label{eq: 3}
\begin{aligned}
& \begin{pmatrix}
\delta^{-1} I_k & \\
& I_{d+1-k}
\end{pmatrix} \mathcal{R}_1^\delta \mathcal{P}(X)[t] + 
\begin{pmatrix}
-\delta^{-1} z_0 \mathbf{1}_k \\
\mathbf{0}_{d+1-k}
\end{pmatrix} 
= \begin{pmatrix}
(\widebar{B} x[t] + \widebar{c})_{1:k} \\
z_0 \mathbf{1}_{d-k} + \delta(\widebar{B} x[t] + \widebar{c})_{k+1:d} \\
z_0
\end{pmatrix},
\end{aligned}
\end{align}
and
\begin{align}
\label{eq: 4}
\begin{aligned}
& \widetilde{A} \mathcal{R}_2^\delta \mathcal{R}_1^\delta \mathcal{P}(X)[t-1] \\
& = \widetilde{A} \begin{pmatrix}
\widebar{\mathcal{R}}(X)[t-1]_{1:k} \\
z_0 \mathbf{1}_{d-k} + \delta \widebar{\mathcal{R}}(X)[t-1]_{k+1:d} \\
z_0
\end{pmatrix} \\
& = \begin{pmatrix}
I_k & \\ 
& \delta I_{d+1-k}
\end{pmatrix}
\begin{pmatrix}
\widebar{A} & \\ 
& 0
\end{pmatrix}
\begin{pmatrix}
I_k & & \\ 
& \delta^{-1} I_{d-k} & -\delta^{-1} \mathbf{1}_{d-k} \\ 
& & 0
\end{pmatrix} 
\begin{pmatrix}
\widebar{\mathcal{R}}(X)[t-1]_{1:k} \\
z_0 \mathbf{1}_{d-k} + \delta \widebar{\mathcal{R}}(X)[t-1]_{k+1:d} \\
z_0
\end{pmatrix} \\
& = \begin{pmatrix}
I_k & \\ 
& \delta I_{d+1-k}
\end{pmatrix}
\begin{pmatrix}
\widebar{A} & \\ 
& 0
\end{pmatrix}
\begin{pmatrix}
\widebar{\mathcal{R}}(X)[t-1]_{1:k} \\
\widebar{\mathcal{R}}(X)[t-1]_{k+1:d} \\
0
\end{pmatrix} \\
& = \begin{pmatrix}
(\widebar{A} \widebar{\mathcal{R}}(X)[t-1])_{1:k} \\
\delta(\widebar{A} \widebar{\mathcal{R}}(X)[t-1])_{k+1:d} \\
0
\end{pmatrix}.
\end{aligned}
\end{align}
Combining (\ref{eq: 3}) and (\ref{eq: 4}), we obtain for time $t$,
\begin{align*}
\mathcal{R}_2^\delta \mathcal{R}_1^\delta \mathcal{P}(X)[t]=
\begin{pmatrix}
\widebar{\mathcal{R}}(X)[t]_{1:k} \\
z_0 \mathbf{1}_{d-k} + \delta \widebar{\mathcal{R}}(X)[t]_{k+1:d} \\
z_0
\end{pmatrix}.
\end{align*}

In the last, we define $Q^\delta = \begin{pmatrix}
\widebar{Q} & 0
\end{pmatrix}
\begin{pmatrix}
I_k & & \\
& \delta^{-1} I_{d-k} & -\delta^{-1} \mathbf{1}_{d-k} \\ 
& & 0
\end{pmatrix}$ 
to derive
\begin{align*}
\mathcal{Q}^\delta \mathcal{R}_2^\delta \mathcal{R}_1^\delta \mathcal{P}(X)[t] = \begin{pmatrix}
\widebar{Q} & 0
\end{pmatrix}
\begin{pmatrix}
\widebar{\mathcal{R}}(X)[t]_{1:k} \\
\widebar{\mathcal{R}}(X)[t]_{k+1:d} \\
0
\end{pmatrix} 
= \widebar{\mathcal{Q}} \widebar{\mathcal{R}}(X)[t], \quad t = 1,\ldots,N.
\end{align*}
Since $z_0$ is arbitrary, we choose $z_0 = \max_{t=1,\ldots,N, k=1,\ldots,d} \sup_{X \in \Omega^N} |\widebar{\mathcal{R}}(X)[t]_{k}|$ and $\delta \leq 1$ to ensure the validity of the proof above.

For any $\widebar{\mathcal{N}} \in \mathcal{MRNN}_{d_x, d_y}(W,L)$, there exist $L$ modified recurrent layers $\widebar{\mathcal{R}}_1, \ldots, \widebar{\mathcal{R}}_L$, and token-wise input and output embedding maps $\widebar{\mathcal{P}}$ and $\widebar{\mathcal{Q}}$ such that
\begin{align*}
\widebar{\mathcal{N}} = \widebar{\mathcal{Q}} \circ \widebar{\mathcal{R}}_L \circ \cdots \circ \widebar{\mathcal{R}}_1 \circ \widebar{\mathcal{P}}.
\end{align*}
Based on the previous discussion, we can find $2L$ recurrent layers $\mathcal{R}_1, \ldots, \mathcal{R}_{2L}$ and linear maps $\mathcal{P}_1, \ldots, \mathcal{P}_L, \mathcal{Q}_1, \ldots, \mathcal{Q}_L$ such that
\begin{align*}
\widebar{\mathcal{N}}(X) & = \widebar{\mathcal{Q}} \circ \widebar{\mathcal{R}}_L \circ \cdots \circ \widebar{\mathcal{R}}_1 \circ \widebar{\mathcal{P}} (X) \\
& = \widebar{\mathcal{Q}} \circ (\mathcal{Q}_L \mathcal{R}_{2L} \mathcal{R}_{2L-1} \mathcal{P}_L) \circ \cdots \circ (\mathcal{Q}_1 \mathcal{R}_2 \mathcal{R}_1 \mathcal{P}_1) \circ \widebar{\mathcal{P}} (X) \\
& = (\widebar{\mathcal{Q}}\mathcal{Q}_L) \circ \mathcal{R}_{2L} \circ (\mathcal{R}_{2L-1} \mathcal{P}_L \mathcal{Q}_{L-1}) \circ \cdots \circ (\mathcal{R}_{3} \mathcal{P}_2 \mathcal{Q}_1) \circ \mathcal{R}_2 \circ \mathcal{R}_1 \circ (\mathcal{P}_1 \widebar{\mathcal{P}}) (X) \\
& =: \mathcal{N}(X)
\end{align*}
for $X \in [0,1]^{d_x \times N}$. It is clear that $\mathcal{N} \in \mathcal{RNN}_{d_x, d_y}(W+1,2L)$, which completes the proof.
\end{proof}

\subsection{Proof of Lemma \ref{lemma: 4}}\label{sec: A.3}
\begin{proof}[Proof of Lemma \ref{lemma: 4}]
$\bullet$ Step 1: suppose $A[N-t_0+1], A[N-t_0+2], \cdots, A[N] \in \mathbb{R}^{1 \times d_x}$ are the given matrices. Then there exists an MRNN $\mathcal{N} = \mathcal{R}_N \circ \mathcal{R}_{N-1} \circ \cdots \circ \mathcal{R}_1 \circ \mathcal{P}: \mathbb{R}^{d_x \times N} \rightarrow \mathbb{R}^{(d_x+1) \times N}$ of width $d_x+1$ and depth $N$ such that (the symbol $*$ indicates a value that is irrelevant to the proof.)
\begin{align*}
\mathcal{N}(X)[t] & = \begin{pmatrix}
x[t] \\
*
\end{pmatrix} \quad \text { for } t \neq t_0, \\
\mathcal{N}(X)[t_0] & = \begin{pmatrix}
x[t_0] \\
\sum_{j=1}^{t_0} A[N-t_0+j] x[j]
\end{pmatrix}.
\end{align*}

\textit{Proof of step 1.}
We follow the construction in \cite[Lemma 6]{hoon2023minimal}. We construct an MRNN 
\begin{align*}
\mathcal{N}_n = \mathcal{R}_n \circ \mathcal{R}_{n-1} \circ \cdots \circ \mathcal{R}_1 \circ \mathcal{P},
\end{align*}
where the $l$-th modified recurrent layer $\mathcal{R}_l$ is defined as
\begin{align*}
\mathcal{R}_l (X)[t] = A_l \mathcal{R}_l (X)[t-1] + B_l x[t]
\end{align*}
with $A_l = \begin{pmatrix} 
O_{d_x, d_x} & O_{d_x, 1} \\ 
O_{1, d_x} & 1
\end{pmatrix}, B_l = \begin{pmatrix}
I_{d_x} & O_{d_x, 1} \\ 
b_l & 1
\end{pmatrix}$ for some $b_l \in \mathbb{R}^{1 \times d_x}$ to be determined later, and the input embedding map $\mathcal{P}$ is defined as
\begin{align*}
\mathcal{P} (X)[t] = \begin{pmatrix}
I_{d_x} \\ 
O_{1, d_x}
\end{pmatrix} x[t] = 
\begin{pmatrix}
x[t] \\ 
0
\end{pmatrix}.
\end{align*}
For simplicity, we denote the output of $\mathcal{N}_n$ at each time $m$ for an input sequence $X$ as
\begin{align*}
T(n, m) := \mathcal{N}_n (X)[m].
\end{align*}
Then \cite[Lemma 24]{hoon2023minimal} gives that 
\begin{align*}
T(n, m) = \begin{pmatrix}
x[m] \\
\sum_{i=1}^{\infty} \sum_{j=1}^{\infty} \binom{n+m-i-j}{n-i} b_i x[j]
\end{pmatrix},
\end{align*}
where $\binom{n}{k}$ is the binomial coefficient $\frac{n!}{k!(n-k)!}$ for $n \geq k$, and $\binom{n}{k}=0$ for $k>n$ or $n<0$ for convenience. Taking $n=N$ and $m=t_0$, we have
\begin{align*}
\mathcal{N}_N (X)[t_0] = T(N, t_0) = \begin{pmatrix}
x[t_0] \\
\sum_{i=1}^{N} \sum_{j=1}^{t_0} \binom{N+t_0-i-j}{N-i} b_i x[j]
\end{pmatrix}.
\end{align*}
Since the matrix $\Lambda_N = \left\{\binom{2n-i-j}{n-i}\right\}_{1 \leq i, j \leq N}$ has inverse $\Lambda_N^{-1} = \left\{\lambda_{i, j}\right\}_{1 \leq i, j \leq N}$ (see Lemma \ref{lemma: 1}), we set $b_i = \sum_{k=1}^N \lambda_{k, i} A[k]$ and obtain
\begin{align*}
\begin{aligned}
\sum_{i=1}^{N} \sum_{j=1}^{t_0} \binom{N+t_0-i-j}{N-i} b_i x[j] & = \sum_{i=1}^{N} \sum_{j=1}^{t_0} \binom{N+t_0-i-j}{N-i} \left(\sum_{k=1}^N \lambda_{k, i} A[k]\right) x[j] \\
& = \sum_{j=1}^{t_0} \sum_{k=1}^N \left(\sum_{i=1}^{N} \binom{N+t_0-i-j}{N-i} \lambda_{k, i}\right) A[k] x[j] \\
& = \sum_{j=1}^{t_0} \sum_{k=1}^N \delta_{k,N-t_0+j} A[k] x[j] \\
& = \sum_{j=1}^{t_0} A[N-t_0+j] x[j],
\end{aligned}
\end{align*}
where $\delta$ is the Kronecker delta function, which concludes the proof.

$\bullet$ Step 2: suppose $A_k[N-t_0+1], A_k[N-t_0+2], \cdots, A_k[N] \in \mathbb{R}^{1 \times d_x} (k=1,\ldots,W)$ are the given matrices and $c \in \mathbb{R}^{W}$ is the given bias vector. Then there exists an MRNN $\mathcal{N} = \mathcal{R}_{N+1} \circ \mathcal{R}_{N} \circ \cdots \circ \mathcal{R}_1 \circ \mathcal{P}: \mathbb{R}^{d_x \times N} \rightarrow \mathbb{R}^{(d_x+1)W \times N}$ of width $(d_x+1)W$ and depth $N+1$ such that
\begin{align*}
\mathcal{N}(X)[t_0] = \begin{pmatrix}
\sigma\left(\sum_{j=1}^{t_0} A_1[N-t_0+j] x[j] + c_1\right) \\
\sigma\left(\sum_{j=1}^{t_0} A_2[N-t_0+j] x[j] + c_2\right) \\
\vdots \\
\sigma\left(\sum_{j=1}^{t_0} A_W[N-t_0+j] x[j] + c_W\right) \\
O_{d_xW,1}
\end{pmatrix}.
\end{align*}

\textit{Proof of step 2.}
By step 1 and concatenation, we can find an MRNN $\mathcal{N}_N$ of width $(d_x+1)W$ and depth $N$ such that
\begin{align*}
\mathcal{N}_N(X)[t_0] = \begin{pmatrix}
x[t_0] \\
\sum_{j=1}^{t_0} A_1[N-t_0+j] x[j] \\
\vdots \\
x[t_0] \\
\sum_{j=1}^{t_0} A_W[N-t_0+j] x[j]
\end{pmatrix} \in \mathbb{R}^{(d_x+1)W}.
\end{align*}
We then define a modified recurrent layer $\mathcal{R}$ by $\mathcal{R}(X)[t] = \sigma(Bx[t] + \widebar{c})$, where $\widebar{c} = \begin{pmatrix}
c \\
O_{d_xW,1}
\end{pmatrix}$ and 
$B_{i,j} = \begin{cases}
1 & j = (d_x+1)i,~ i = 1,\ldots,W \\
0 & \text{otherwise}
\end{cases}$.
Then $\mathcal{N}_{N+1} := \mathcal{R} \circ \mathcal{N}_N$ is an MRNN of width $(d_x+1)W$ and depth $N+1$ such that
\begin{align*}
\mathcal{N}_{N+1}(X)[t_0] = \begin{pmatrix}
\sigma\left(\sum_{j=1}^{t_0} A_1[N-t_0+j] x[j] + c_1\right) \\
\sigma\left(\sum_{j=1}^{t_0} A_2[N-t_0+j] x[j] + c_2\right) \\
\vdots \\
\sigma\left(\sum_{j=1}^{t_0} A_W[N-t_0+j] x[j] + c_W\right) \\
O_{d_xW,1}
\end{pmatrix},
\end{align*}
which concludes the proof.

$\bullet$ Step 3: for any FNN $\widetilde{\mathcal{N}}(x[1:t_0]) \in \mathcal{FNN}_{d_x \times t_0, d_y}(W, L)$, there exists an MRNN $\mathcal{N} \in \mathcal{MRNN}_{d_x,d_y}((d_x+1)W, N+L)$ such that
\begin{align*}
\widetilde{\mathcal{N}}(x[1:t_0]) = \mathcal{N}(X)[t_0],
\end{align*}
where $X = (x[1],\ldots,x[N]) \in \mathbb{R}^{d_x \times N}$.

\textit{Proof of step 3.}
Given a feedforward neural network $\widetilde{\mathcal{N}}(x[1:t_0]) \in \mathcal{FNN}_{d_x \times t_0, d_y}(W, L)$, it can be parameterized as
\begin{align*}
((\widetilde{B}_1, \widetilde{c}_1), \ldots, (\widetilde{B}_L, \widetilde{c}_L)).
\end{align*}
Without loss of generality, assume $L \geq 3$, $\widetilde{B}_1 \in \mathbb{R}^{W \times (d_x t_0)}$, $\widetilde{B}_l \in \mathbb{R}^{W \times W}$ for $l = 2,\ldots,L-1$, $\widetilde{B}_L \in \mathbb{R}^{d_y \times W}$, $\widetilde{c}_l \in \mathbb{R}^{W}$ for $l = 1,\ldots,L-1$, and $\widetilde{c}_L \in \mathbb{R}^{d_y}$. We now directly construct an MRNN to represent the FNN $\widetilde{\mathcal{N}}$.

By step 2, there exists an MRNN $\mathcal{N}_{N+1}$ of width $(d_x+1)W$ and depth $N+1$ such that
\begin{align*}
\mathcal{N}_{N+1}(X)[t_0] = \begin{pmatrix}
\sigma (\widetilde{B}_1 \operatorname{vec}(x[1:t_0]) + \widetilde{c}_1) \\
O_{d_xW,1}
\end{pmatrix},
\end{align*}
where the operator $\operatorname{vec}(X)$ stacks the columns of the sequence $X$ into a single vector. The layers from the $2$-nd to the $L$-th of the FNN can be applied directly by defining a token-wise FNN, which is a special type of recurrent neural network that does not depend on the values from the previous time steps. Define 
\begin{align*}
\mathcal{R}_{N+l}(X)[t] := \begin{cases}
\sigma (B_l x[t] + c_l) & l = 2,\ldots,L-1 \\
B_l x[t] + c_l & l = L
\end{cases},
\end{align*}
where $B_l = \begin{pmatrix}
\widetilde{B}_l & O_{W,d_xW} \\
O_{d_xW,W} & O_{d_xW,d_xW}
\end{pmatrix}$ for $l = 2,\ldots,L-1$, $B_L = \begin{pmatrix}
\widetilde{B}_L & O_{d_y,d_xW} \\
O_{(d_x+1)W-d_y,W} & O_{(d_x+1)W-d_y,d_xW}
\end{pmatrix}$, $c_l = \begin{pmatrix}
\widetilde{c}_l \\
O_{d_xW,1}
\end{pmatrix}$ for $l = 2,\ldots,L-1$, $c_L = \begin{pmatrix}
\widetilde{c}_L \\
O_{(d_x+1)W-d_y,1}
\end{pmatrix}$, and $Q = \begin{pmatrix}
I_{d_y} & O_{d_y,(d_x+1)W-d_y}
\end{pmatrix}$. Then $\mathcal{N}_{N+L} := \mathcal{Q} \circ \mathcal{R}_{N+L} \circ \ldots \circ \mathcal{R}_{N+2} \circ \mathcal{N}_{N+1} \in \mathcal{MRNN}_{d_x,d_y}((d_x+1)W, N+L)$ satisfies
\begin{align*}
\mathcal{N}_{N+L}(X)[t_0] = \widetilde{\mathcal{N}}(x[1:t_0]) \in \mathbb{R}^{d_y},
\end{align*}
which completes the proof.
\end{proof}

\begin{lemma}
\label{lemma: 1}
The matrix $\Lambda_n = \left\{\binom{2n-i-j}{n-i}\right\}_{1 \leq i, j \leq n} \in \mathbb{R}^{n \times n}$ is invertible. Furthermore, $\Lambda_n^{-1} = \left\{\sum_{k=1}^{\min\{i,j\}} (-1)^{i+j} \binom{n-k}{n-i} \binom{n-k}{n-j} \right\}_{1 \leq i, j \leq n}$.
\end{lemma}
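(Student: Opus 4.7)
The plan is to identify $\Lambda_n$ as a reversed Pascal-type matrix and invert it via a factorization. Substituting $a = n-i$ and $b = n-j$, so that $a, b \in \{0, 1, \ldots, n-1\}$, the entries become $\binom{a+b}{a}$; equivalently, $\Lambda_n = J M J$ where $M = [\binom{a+b}{a}]_{0 \le a, b \le n-1}$ and $J$ is the order-reversal permutation matrix (so $J^2 = I$). Thus it suffices to show $M$ is invertible and to identify $M^{-1}$, after which $\Lambda_n^{-1} = J M^{-1} J$ can be read off by reversing indices.

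The key observation is the matrix factorization $M = P P^\top$, where $P$ is the lower-triangular Pascal matrix defined by $P_{ab} = \binom{a}{b}$ for $0 \le a, b \le n-1$. This follows from the Chu--Vandermonde identity
\begin{align*}
\binom{a+b}{a} = \sum_{k=0}^{\min\{a,b\}} \binom{a}{k}\binom{b}{k} = \sum_{k=0}^{n-1} P_{ak} P_{bk}.
\end{align*}
Since $P$ is lower triangular with unit diagonal, it is invertible, and hence so are $M$ and $\Lambda_n$. I would then invoke the classical inverse Pascal matrix identity $(P^{-1})_{ab} = (-1)^{a+b}\binom{a}{b}$, verified directly by $\sum_k (-1)^{k-b}\binom{a}{k}\binom{k}{b} = \binom{a}{b}(1-1)^{a-b} = \delta_{ab}$ using $\binom{a}{k}\binom{k}{b} = \binom{a}{b}\binom{a-b}{k-b}$ and the binomial theorem.

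Combining these, $M^{-1} = (P^\top)^{-1} P^{-1} = (P^{-1})^\top P^{-1}$, whose entries are
\begin{align*}
(M^{-1})_{ab} = (-1)^{a+b} \sum_{k=\max\{a,b\}}^{n-1} \binom{k}{a}\binom{k}{b}.
\end{align*}
Transferring back via $(\Lambda_n^{-1})_{ij} = (M^{-1})_{n-i, n-j}$ and substituting $k' = n - k$ (so $k$ running from $n - \min\{i,j\}$ to $n-1$ corresponds to $k'$ running from $1$ to $\min\{i,j\}$) yields exactly the claimed formula, since $\binom{k}{n-i} = \binom{n-k'}{n-i}$. The main obstacle is bookkeeping the two indexing conventions consistently, namely 0-indexed $(a,b)$ for the Pascal world versus 1-indexed $(i,j)$ with reversed orientation for $\Lambda_n$, so I would carry out the reindexing explicitly at the end rather than work in the flipped coordinates throughout. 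A direct verification of $\Lambda_n \Lambda_n^{-1} = I$ without factorization is possible but would require the same underlying Chu--Vandermonde and alternating-sum identities hidden inside nested sums, so the Pascal-matrix route is preferable.
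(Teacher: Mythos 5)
Your proposal is correct and is essentially the paper's own argument in cleaner coordinates: the paper factors $\Lambda_n = U_n U_n^\top$ with $u_{i,j} = \binom{n-i}{j-1}$ via Chu--Vandermonde and verifies $(U_n^{-1})_{i,j} = (-1)^{n+1+i+j} u_{n+1-i,n+1-j}$ by the same alternating binomial cancellation you use for the classical Pascal inverse, and $U_n$ is exactly your lower-triangular Pascal matrix $P$ after the row reversal and index shift you introduce. Your reindexing to the standard symmetric Pascal matrix and conjugation by the reversal permutation at the end is a tidier presentation of the identical computation.
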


\begin{proof}
Note that for $i \geq j$, 
\begin{align*}
(\Lambda_n)_{i,j} & = \binom{2n-i-j}{n-i} \\
& = \sum_{k=0}^{n-i} \binom{n-i}{n-i-k} \binom{n-j}{k} \\
& = \sum_{k=0}^{n-i} \binom{n-i}{k} \binom{n-j}{k} \\
& = \sum_{k=1}^{n+1-i} \binom{n-i}{k-1} \binom{n-j}{k-1},
\end{align*}
where the second equality follows from the Chu-Vandermonde's identity. Then we have $\Lambda_n = U_n U_n^\top$ where $u_{i,j} = (U_n)_{i,j} = \begin{cases}
\binom{n-i}{j-1} & \text{ if } i+j \leq n+1 \\
0 & \text{otherwise}
\end{cases}$. Next, we prove $(U_n^{-1})_{i,j} = (-1)^{n+1+i+j} u_{n+1-i,n+1-j}$. 
\begin{itemize}[leftmargin = 12pt, labelsep = 3pt, itemsep = 0pt, topsep = 0pt]
\item If $i=j$, $\sum_{k=1}^{n} u_{i,k} (-1)^{n+1+k+j} u_{n+1-k,n+1-j} = u_{i,n+1-i} u_{i,n+1-j} = 1$.
\item If $i>j$, $\sum_{k=1}^{n} u_{i,k} (-1)^{n+1+k+j} u_{n+1-k,n+1-j} = 0$ since $u_{i,k}=0$ or $u_{n+1-k,n+1-j} = 0$.
\item If $i<j$, we have
\begin{align*}
\sum_{k=1}^{n} u_{i,k} (-1)^{n+1+k+j} u_{n+1-k,n+1-j} & = \sum_{k=n+1-j}^{n+1-i}(-1)^{n+1+k+j} \binom{n-i}{k-1} \binom{k-1}{n-j} \\
& \overset{l=k-n-1+j}{=} \sum_{l=0}^{j-i} (-1)^l \binom{n-i}{l+n-j} \binom{l+n-j}{n-j} \\
& = \sum_{l=0}^{j-i} (-1)^l \binom{j-i}{l} \binom{n-i}{n-j} \\
& = 0.
\end{align*}
\end{itemize}
It follows that 
\begin{align*}
(\Lambda_n^{-1})_{i,j} & = ((U_n^\top)^{-1} U_n^{-1})_{i,j} \\
& = \sum_{k=1}^{n} (-1)^{i+j} u_{n+1-k,n+1-i} u_{n+1-k,n+1-j} \\
& = \sum_{k=1}^{\min\{i,j\}} (-1)^{i+j} \binom{n-k}{n-i} \binom{n-k}{n-j},
\end{align*}
which concludes the proof.
\end{proof}

\subsection{Proof of Lemma \ref{lemma: 7}}\label{sec: A.4}
\begin{proof}[Proof of Lemma \ref{lemma: 7}]
By unfolding the recurrent layer $\mathcal{R}$ in time, we first prove that there exists an FNN $\widetilde{\mathcal{N}} \in \mathcal{FNN}_{t_0 W, t_0 W} ((2t_0-1) W, t_0+1)$ such that
\begin{align*}
\widetilde{\mathcal{N}} \begin{pmatrix}
x[1] \\
x[2] \\
\vdots \\
x[t_0]
\end{pmatrix} =
\begin{pmatrix}
\mathcal{R}(X)[1] \\
\mathcal{R}(X)[2] \\
\vdots \\
\mathcal{R}(X)[t_0] \\
\end{pmatrix},
\end{align*}
where $x[i] \in \mathbb{R}^W, i = 1,\ldots,t_0$.

Let $\mathcal{R}(X)[t]=\sigma(A \mathcal{R}(X)[t-1] + B x[t] + c)$ for $t = 1, \ldots, N$. We construct an FNN $\widetilde{\mathcal{N}} = \mathcal{F}_{t_0+1} \circ \mathcal{F}_{t_0} \circ \cdots \circ \mathcal{F}_1$ with parameters
\begin{align*}
\begin{gathered}
\widetilde{A}_1 = \left( \begin{array}{cccc}
B & O & \cdots & O \\
O & I & \cdots & O \\
O & -I & \cdots & O \\
\vdots & \vdots & \ddots & \vdots \\
O & O & \cdots & I \\
O & O & \cdots & -I \\
\end{array} \right), \quad 
\widetilde{b}_1 = \begin{pmatrix}
c \\
0 \\
\vdots \\
0
\end{pmatrix}, \\
\widetilde{A}_i = \left( \begin{array}{ccc:ccc:ccc}
I &  &  &  &  &  &  &  & \\
 & \ddots &  &  &  &  &  &  & \\ 
 &  & I &  &  &  &  &  & \\ \hdashline
 &  &  & I & O & O &  &  & \\
 &  &  & -I & O & O &  &  & \\ 
 &  &  & A & B & -B &  &  & \\ \hdashline
 &  &  &  &  &  & I &  & \\
 &  &  &  &  &  &  & \ddots & \\
 &  &  &  &  &  &  &  & I \\
\end{array} \right), \quad 
\widetilde{b}_i = \begin{pmatrix}
0 \\
\vdots \\
0 \\
c \\
0\\
\vdots \\
0
\end{pmatrix}, \quad i = 2, \ldots, t_0, \\
\widetilde{A}_{t_0+1} = \left( \begin{array}{cccccccc}
I & -I &  &  &  &  &  & \\
 &  & I & -I &  &  &  & \\
 &  &  &  & \ddots &  &  &  \\
 &  &  &  &  & I & -I & \\
 &  &  &  &  &  &  & I \\
\end{array} \right), \quad 
\widetilde{b}_{t_0+1} = 0,
\end{gathered}
\end{align*}
where in the block matrices each block is of size $W \times W$, $\widetilde{A}_1$ has $2t_0 - 1$ rows and $t_0$ columns of blocks, $\widetilde{A}_i$ has $2i - 4$ identity matrices in the upper left corner and $2t_0 - 2i$ identity matrices in the lower right corner, and $\widetilde{A}_{t_0+1}$ has $t_0$ rows and $2t_0 - 1$ columns of blocks. Direct calculation yields
\begin{align*}
\mathcal{F}_i \circ \cdots \circ \mathcal{F}_1 \begin{pmatrix}
x[1] \\
x[2] \\
\vdots \\
x[t_0]
\end{pmatrix} = \begin{pmatrix}
\sigma(\mathcal{R}(X)[1]) \\
\sigma(-\mathcal{R}(X)[1]) \\
\vdots \\
\sigma(\mathcal{R}(X)[i-1]) \\
\sigma(-\mathcal{R}(X)[i-1]) \\
\mathcal{R}(X)[i] \\
\sigma(x[i+1]) \\
\sigma(-x[i+1]) \\
\vdots \\
\sigma(x[t_0]) \\
\sigma(-x[t_0])
\end{pmatrix}, \quad i = 1,\ldots,t_0.
\end{align*}
Hence,
\begin{align*}
\widetilde{\mathcal{N}} \begin{pmatrix}
x[1] \\
x[2] \\
\vdots \\
x[t_0]
\end{pmatrix} & = \mathcal{F}_{t_0+1} \circ \mathcal{F}_{t_0} \circ \cdots \circ \mathcal{F}_1 \begin{pmatrix}
x[1] \\
x[2] \\
\vdots \\
x[t_0]
\end{pmatrix} \\
& = 
\mathcal{F}_{t_0+1} \begin{pmatrix}
\sigma(\mathcal{R}(X)[1]) \\
\sigma(-\mathcal{R}(X)[1]) \\
\vdots \\
\sigma(\mathcal{R}(X)[t_0-1]) \\
\sigma(-\mathcal{R}(X)[t_0-1]) \\
\mathcal{R}(X)[t_0]
\end{pmatrix} \\
& = \widetilde{A}_{t_0+1} \begin{pmatrix}
\sigma(\mathcal{R}(X)[1]) \\
\sigma(-\mathcal{R}(X)[1]) \\
\vdots \\
\sigma(\mathcal{R}(X)[t_0-1]) \\
\sigma(-\mathcal{R}(X)[t_0-1]) \\
\mathcal{R}(X)[t_0]
\end{pmatrix} + \widetilde{b}_{t_0+1} \\
& = \begin{pmatrix}
\mathcal{R}(X)[1] \\
\mathcal{R}(X)[2] \\
\vdots \\
\mathcal{R}(X)[t_0] \\
\end{pmatrix},
\end{align*}
where we use the identity $\sigma(x)-\sigma(-x) = x$.

Then for any $\mathcal{N} = \mathcal{Q} \circ \mathcal{R}_L \circ \cdots \circ \mathcal{R}_1 \circ \mathcal{P} \in \mathcal{RNN}_{d_x,d_y}(W,L)$, we can find $\widetilde{\mathcal{N}}_1, \cdots, \widetilde{\mathcal{N}}_L \in \mathcal{FNN}_{t_0 W, t_0 W} ((2t_0-1) W, t_0+1)$ and a linear mapping $\widetilde{\mathcal{P}}$ such that
\begin{align*}
\widetilde{\mathcal{N}}_L \circ \cdots \circ \widetilde{\mathcal{N}}_2 \circ \widetilde{\mathcal{N}}_1 \circ \widetilde{\mathcal{P}}(\operatorname{vec}(x[1:t_0])) & = \widetilde{\mathcal{N}}_L \circ \cdots \circ \widetilde{\mathcal{N}}_2 \circ \widetilde{\mathcal{N}}_1 \begin{pmatrix}
\mathcal{P}(X)[1] \\
\mathcal{P}(X)[2] \\
\vdots \\
\mathcal{P}(X)[t_0] \\
\end{pmatrix} \\
& = \widetilde{\mathcal{N}}_L \circ \cdots \circ \widetilde{\mathcal{N}}_2 \begin{pmatrix}
\mathcal{R}_1 \mathcal{P}(X)[1] \\
\mathcal{R}_1 \mathcal{P}(X)[2] \\
\vdots \\
\mathcal{R}_1 \mathcal{P}(X)[t_0] \\
\end{pmatrix} \\
& \cdots \\
& = \begin{pmatrix}
\mathcal{R}_L \cdots \mathcal{R}_1 \mathcal{P}(X)[1] \\
\mathcal{R}_L \cdots \mathcal{R}_1 \mathcal{P}(X)[2] \\
\vdots \\
\mathcal{R}_L \cdots \mathcal{R}_1 \mathcal{P}(X)[t_0] \\
\end{pmatrix}.
\end{align*}
In the last, another linear mapping $\widetilde{\mathcal{Q}}$ applies $\mathcal{Q}$ to the last segment to obtain
\begin{align*}
(\widetilde{\mathcal{Q}} \widetilde{\mathcal{N}}_L) \circ \widetilde{\mathcal{N}}_{L-1} \cdots \circ \widetilde{\mathcal{N}}_2 \circ (\widetilde{\mathcal{N}}_1 \widetilde{\mathcal{P}})(\operatorname{vec}(x[1:t_0])) = \mathcal{Q} \mathcal{R}_L \cdots \mathcal{R}_1 \mathcal{P}(X)[t_0] = \mathcal{N}(X)[t_0].
\end{align*}
$(\widetilde{\mathcal{Q}} \widetilde{\mathcal{N}}_L) \circ \widetilde{\mathcal{N}}_{L-1} \cdots \circ \widetilde{\mathcal{N}}_2 \circ (\widetilde{\mathcal{N}}_1 \widetilde{\mathcal{P}}) \in \mathcal{FNN}_{d_x \times t_0, d_y}((2t_0-1) W, t_0 L)$ concludes the proof.
\end{proof}

\subsection{Proof of Lemma \ref{lemma: 6}}\label{sec: A.5}
\begin{proof}[Proof of Lemma \ref{lemma: 6}]
A series of studies have indicated that feedforward neural networks can effectively approximate H\"older smooth functions \cite{yarotsky2017error, lu2021deep, shen2022optimal, jiao2023deep}. For instance, \cite[Corollary 3.1]{jiao2023deep} gives that for any $k \in \{1, \ldots, d_y\}$ and $I, J \in \mathbb{N}$, there exists a neural network $\widetilde{\mathcal{N}}_k \in \mathcal{FNN}_{d_x \times t_0,1}(W, L)$ with width
\begin{align*}
W=38(\lfloor\gamma\rfloor+1)^2 3^{d_x t_0} (d_x t_0)^{\lfloor\gamma\rfloor+1} J \lceil\log_2(8 J)\rceil
\end{align*}
and depth 
\begin{align*}
L=21(\lfloor\gamma\rfloor + 1)^2 I \lceil\log_2(8 I)\rceil+2 d_x t_0,
\end{align*}
such that
\begin{align*}
\sup_{x[1:t_0] \in [0,1]^{d_x \times t_0}} |f_k(x[1:t_0])-\widetilde{\mathcal{N}}_k (x[1:t_0])| \leq 19 K(\lfloor\gamma\rfloor+1)^2 (d_x t_0)^{\lfloor\gamma\rfloor+(\gamma \vee 1) / 2}(J I)^{-2 \gamma / (d_x t_0)}.
\end{align*}
By concatenation of neural networks, there exists an FNN $\widetilde{\mathcal{N}} = (\widetilde{\mathcal{N}}_1, \ldots, \widetilde{\mathcal{N}}_{d_y})^\top \in \mathcal{FNN}_{d_x \times t_0,d_y}(W, L)$ with width 
\begin{align*}
W=38(\lfloor\gamma\rfloor+1)^2 3^{d_x t_0} (d_x t_0)^{\lfloor\gamma\rfloor+1} d_y J \lceil\log_2(8 J)\rceil
\end{align*}
and depth 
\begin{align*}
L=21(\lfloor\gamma\rfloor + 1)^2 I \lceil\log_2(8 I)\rceil+2 d_x t_0,
\end{align*}
such that
\begin{align*}
\sup_{x[1:t_0] \in [0,1]^{d_x \times t_0}}  \|f(x[1:t_0])-\widetilde{\mathcal{N}} (x[1:t_0])\|_{\infty} \leq 19 K(\lfloor\gamma\rfloor+1)^2 (d_x t_0)^{\lfloor\gamma\rfloor+(\gamma \vee 1) / 2}(J I)^{-2 \gamma / (d_x t_0)}.
\end{align*}

By Proposition \ref{proposition: 3}, there exists an RNN $\mathcal{N} \in \mathcal{RNN}_{d_x, d_y}(W,L)$ with width 
\begin{align*}
W & = 38(\lfloor\gamma\rfloor+1)^2 3^{d_x t_0} (d_x+1) (d_x t_0)^{\lfloor\gamma\rfloor+1} d_y J \lceil\log_2(8 J)\rceil + 1 \\
& \leq 152 (\lfloor\gamma\rfloor+1)^2 3^{d_x t_0} d_x^{\lfloor\gamma\rfloor+2} d_y t_0^{\lfloor\gamma\rfloor+1} J \lceil\log_2(8 J)\rceil
\end{align*}
and depth 
\begin{align*}
L = 42(\lfloor\gamma\rfloor + 1)^2 I \lceil\log_2(8 I)\rceil + 4 d_x t_0 + 2N
\end{align*}
such that
\begin{align*}
\widetilde{\mathcal{N}}(x[1:t_0]) = \mathcal{N}(X)[t_0], \quad X \in [0,1]^{d_x \times N},
\end{align*}
and 
\begin{align*}
\sup_{X \in [0,1]^{d_x \times N}}  \|f(x[1:t_0])-\mathcal{N}(X)[t_0]\|_{\infty} \leq 19 K(\lfloor\gamma\rfloor+1)^2 (d_x t_0)^{\lfloor\gamma\rfloor+(\gamma \vee 1) / 2}(J I)^{-2 \gamma / (d_x t_0)},
\end{align*}
which completes the proof.
\end{proof}

\subsection{Proof of Lemma \ref{lemma: 8}}\label{sec: A.6}
\begin{definition}[Pseudo-dimension]
Let $\mathcal{H}$ be a class of real-valued functions defined on $\Omega$. The pseudo-dimension of $\mathcal{H}$, denoted by $\operatorname{Pdim}(\mathcal{H})$, is the largest integer $N$ for which there exist points $x_1, \ldots, x_N \in \Omega$ and constants $c_1, \ldots, c_N \in \mathbb{R}$ such that
\begin{align*}
|\{\operatorname{sgn}(h(x_1)-c_1), \ldots, \operatorname{sgn}(h(x_N)-c_N): h \in \mathcal{H}\}| = 2^N.
\end{align*}
\end{definition}

\begin{proof}[Proof of Lemma \ref{lemma: 8}]
By Proposition \ref{proposition: 3}, we have $\mathcal{V}_{t_0} \subseteq \mathcal{FNN}_{d_x \times t_0, 1}((2t_0-1) W, t_0 L)$. We next bound the covering number of $\mathcal{F} := \mathcal{FNN}_{d_x \times t_0, 1}((2t_0-1) W, t_0 L)$. By Theorem 7 of \cite{bartlett2019nearly}, we have
\begin{align*}
\operatorname{Pdim}(\mathcal{F}) & \leq c_1 ((2t_0-1) W)^2 (t_0 L)^2 \log ((2t_0-1) W \cdot t_0 L) \\
& \leq c_2 t_0^4 W^2 L^2 \log (t_0 W L).
\end{align*}

(1) By Theorem 12.2 of \cite{anthony2009neural}, 
\begin{align*}
\log \mathcal{N} (\delta, \mathcal{T}_K \mathcal{V}_{t_0}, d_{\mathcal{Z}, \infty}) & \leq \log \mathcal{N} (\delta, \mathcal{T}_K \mathcal{F}, d_{\mathcal{Z}, \infty}) \\
& \leq \operatorname{Pdim}(\mathcal{F}) \log \left(\frac{e n K}{\delta \operatorname{Pdim}(\mathcal{F})}\right) \\
& \leq c_3 t_0^4 W^2 L^2 \log (t_0 W L) \log (K n / \delta),
\end{align*}
where $c_3$ is a universal constant. Taking the supremum over all $\mathcal{Z}$ completes the proof.

(2) By Theorem 5.11 of \cite{zhang2023mathematical}, 
\begin{align*}
\log \mathcal{N} (\delta, \mathcal{T}_K \mathcal{V}_{t_0}, d_p) & \leq \log \mathcal{N} (\delta, \mathcal{T}_K \mathcal{F}, d_p) \\
& \leq 1 + \log (\operatorname{Pdim}(\mathcal{F}) + 1) + \operatorname{Pdim}(\mathcal{F}) \log (2 e K^p / \delta^p) \\
& \leq c_4 p t_0^4 W^2 L^2 \log (t_0 W L) \log (K / \delta),
\end{align*}
where $c_4$ is a universal constant.

\end{proof}

\subsection{Proof of Lemma \ref{lemma: 10}}\label{sec: A.7}
\begin{definition}[Packing number]
Let $\rho$ be a pseudo-metric on $\mathcal{M}$ and $S \subseteq \mathcal{M}$. For any $\delta>0$, a set $A \subseteq S$ is called a $\delta$-packing of $S$ if for any distinct $x,y \in A$ we have $\rho(x,y) > \delta$. The $\delta$-packing number of $S$, denoted by $\mathcal{M}(\delta, S, \rho)$, is the maximum cardinality of any $\delta$-packing of $S$.
\end{definition}

\begin{proof}[Proof of Lemma \ref{lemma: 10}]
Let $\{h_i\}$ be a $(4 \delta)$-packing of $\mathcal{H}_{d,1}^\gamma ([0,1]^{d}, 1)$, so that
\begin{align*}
|\{h_i\}| = \mathcal{M}(4 \delta, \mathcal{H}_{d,1}^\gamma ([0,1]^{d}, 1), d_p).
\end{align*}
By definition, $\|h_i - h_j\|_{L^p(\lambda)} > 4 \delta$ whenever $i \neq j$. By assumption, for each $h_i$, there exists $f_i \in \mathcal{F}$ such that $\|f_i - h_i\|_{L^p(\lambda)} \leq \delta$. It follows from the triangle inequality that
\begin{align*}
\|f_i - f_j\|_{L^p(\lambda)} 
& \geq \|h_i - h_j\|_{L^p(\lambda)} - \|f_i - h_i\|_{L^p(\lambda)} - \|f_j - h_j\|_{L^p(\lambda)} \\
& > 4 \delta - \delta - \delta = 2 \delta.
\end{align*}
Hence $\{f_i\}$ forms a $(2 \delta)$-packing of $\mathcal{F}$, which implies
\begin{align}\label{eq: 26}
\mathcal{M}(2 \delta, \mathcal{F}, d_p) 
\geq |\{h_i\}| 
= \mathcal{M}(4 \delta, \mathcal{H}_{d,1}^\gamma ([0,1]^{d}, 1), d_p).
\end{align}

On the one hand, the covering number upper bounds the packing number (see, e.g., \cite[Theorem 5.2]{zhang2023mathematical}),
\begin{align}\label{eq: 27}
\mathcal{N}(\delta, \mathcal{F}, d_p) \geq \mathcal{M}(2 \delta, \mathcal{F}, d_p).
\end{align}
On the other hand, since $\|f\|_{L^1(\lambda)} \leq \|f\|_{L^p(\lambda)}$ for any measurable $f$, the packing number in $L^p$ is lower bounded by that in $L^1$. Together with known lower bounds on the packing number of the H\"older class \cite{birman1967piecewise, carl1981entropy, triebel1975interpolation}, we obtain
\begin{align}\label{eq: 28}
\log \mathcal{M}(4 \delta, \mathcal{H}_{d,1}^\gamma ([0,1]^{d}, 1), d_p)
\geq \log \mathcal{M}(4 \delta, \mathcal{H}_{d,1}^\gamma ([0,1]^{d}, 1), d_1)
\geq c_{d, \gamma} \, \delta^{-d/\gamma}.
\end{align}
Combining (\ref{eq: 26}), (\ref{eq: 27}) and (\ref{eq: 28}) yields the desired result.
\end{proof}

\subsection{Proof of Theorem \ref{theorem: 3}}\label{sec: A.8}

To prove Theorem \ref{theorem: 3}, we use the sub-sample selection technique, which enables us to relate mixing sequences to independent ones. Given a stationary sequence $\mathcal{X} = \{x_i\}_{i=1}^n$ of $n$ tokens, we partition it into blocks of tokens such that each block has size $d_x \times N$ and can be the input to the target function and the neural network.

First, starting from the initial token, we divide every consecutive $N$ tokens into one block, discarding any remaining terms at the end. The resulting blocks are collected as $\mathcal{B} = \{B_1, \ldots, B_{\tilde{n}}\}$, where $B_i = (x_{(i-1)N+1}, \ldots, x_{iN})$, and $\tilde{n}=\lfloor n/N \rfloor$ is the possible maximum number of blocks. Recall that $\Pi$ denotes the joint distribution of the predictors. Under the assumption of stationarity, each block $B_i$ follows the same distribution $\Pi$. We further select sub-blocks, denoted by
\begin{align*}
\mathcal{B}^{(a)} = \{B_{a+kl}: k = 0, 1, \ldots, \lfloor\tilde{n}/l\rfloor - 1\}, \quad a = 1, \ldots, l,
\end{align*}
where $l \in \mathbb{N}$ is a parameter to be specified later.

Second, we adopt a sliding window partitioning, where each block is formed by any $N$ consecutive tokens. The resulting blocks are collected as $\mathcal{S} = \{S_N, \ldots, S_n\}$, where $S_t = (x_{t-N+1}, \ldots, x_t)$. Under the assumption of stationarity, each block $S_t$ follows the same distribution $\Pi$.

Third, in place of the dependent blocks, we construct another collection of blocks $\widetilde{\mathcal{B}} = \{\widetilde{B}_1, \ldots, \widetilde{B}_m\}$ consisting of $m$ independent blocks. Each block $\widetilde{B}_i$ contains $N$ tokens and follows the same distribution $\Pi$ as $B_i$, and the blocks are mutually independent.

For convenience, we define the mean squared error at the population level as
\begin{align*}
\mathcal{L}(f) & = \mathbb{E}_{(x_1, \ldots, x_N) \sim \Pi} [(f(x_1, \ldots, x_N) - f^*(x_1, \ldots, x_N))^2].
\end{align*}
Given a set of blocks $\mathcal{C}$ (which may be chosen as $\mathcal{B}$ or any of the other block collections defined above), the empirical counterpart is given by
\begin{align*}
\mathcal{L}_{|\mathcal{C}|}(f; \mathcal{C}) = \frac{1}{|\mathcal{C}|} \sum_{(x_1, \ldots, x_N) \in \mathcal{C}} (f(x_1, \ldots, x_N) - f^*(x_1, \ldots, x_N))^2,
\end{align*}
where $|\mathcal{C}|$ denotes the cardinality of the block set $\mathcal{C}$.

\begin{proof}[Proof of Theorem \ref{theorem: 3}]

To bound $\mathbb{E}[\|\mathcal{T}_K \hat{f} - f^*\|_{L^2(\Pi)}^2] = \mathbb{E}[\mathcal{L}(\mathcal{T}_K \hat{f})]$, technically we divide our proof into 5 steps. In Step 4 we decompose the error into two terms. In Step 1 and Step 2 we bound the first term. In Step 3 we bound the second term. Finally, in Step 5 we combine all components to prove the claim.

$\bullet$ Step 1: Let $\widetilde{\mathcal{B}}$ be a collection of $m$ independent blocks. For any $\delta \in (0,1)$, we have
\begin{align}
\label{eq: 16}
\mathbb{E}_{\widetilde{\mathcal{B}}} \left[\sup_{f \in \mathcal{V}} \mathcal{L}(\mathcal{T}_K f) - 3 \mathcal{L}_m (\mathcal{T}_K f; \widetilde{\mathcal{B}})\right] \leq \frac{16 K^2}{m} \sup_{|\widetilde{\mathcal{B}}|=m} \log \mathcal{N}(\delta, \mathcal{T}_K \mathcal{V}, d_{\widetilde{\mathcal{B}}, 1}) + 32 K \delta,
\end{align}
where 
\begin{align*}
d_{\widetilde{\mathcal{B}}, 1}(f,g) := \frac{1}{m} \sum_{i=1}^m |f(\widetilde{B}_i)-g(\widetilde{B}_i)|.
\end{align*}

\textit{Proof of step 1.}
Let us denote
\begin{align*}
\mathcal{H} = \{h = (\mathcal{T}_K f - f^*)^2: f \in \mathcal{V}\}.
\end{align*}
Since $h = (\mathcal{T}_K f - f^*)^2 \leq 2 (\mathcal{T}_K f)^2 + 2(f^*)^2 \leq 4 K^2$ for any $h \in \mathcal{H}$, we have
\begin{align*}
\begin{aligned}
& \mathbb{E}_{\widetilde{\mathcal{B}}} \left[\sup_{f \in \mathcal{V}} \mathcal{L}(\mathcal{T}_K f) - 3 \mathcal{L}_m (\mathcal{T}_K f; \widetilde{\mathcal{B}})\right] \\
& = \mathbb{E}_{\widetilde{\mathcal{B}}} \left[\sup_{f \in \mathcal{V}} \mathbb{E}[(\mathcal{T}_K f-f^*)^2] - \frac{3}{m} \sum_{i=1}^m \left(\mathcal{T}_K f(\widetilde{B}_i)-f^*(\widetilde{B}_i)\right)^2\right] \\
& = \mathbb{E}_{\widetilde{\mathcal{B}}} \left[\sup_{h \in \mathcal{H}} \mathbb{E}[h] - \frac{3}{m} \sum_{i=1}^m h(\widetilde{B}_i)\right] \\
& = \mathbb{E}_{\widetilde{\mathcal{B}}} \left[\sup_{h \in \mathcal{H}} 2 \mathbb{E}[h]-\mathbb{E}[h]-\frac{2}{m} \sum_{i=1}^m h(\widetilde{B}_i)-\frac{1}{m} \sum_{i=1}^m h(\widetilde{B}_i)\right] \\
& \leq \mathbb{E}_{\widetilde{\mathcal{B}}} \left[\sup_{h \in \mathcal{H}} 2 \mathbb{E}[h]-\frac{1}{4 K^2} \mathbb{E}[h^2]-\frac{2}{m} \sum_{i=1}^m h(\widetilde{B}_i)-\frac{1}{4 K^2 m} \sum_{i=1}^m h^2(\widetilde{B}_i)\right].
\end{aligned}
\end{align*}
We can then use the standard symmetrization technique to bound it by Rademacher complexity. We introduce a ghost dataset $\widetilde{\mathcal{B}}^{\prime}=\{\widetilde{B}_i^{\prime}\}_{i=1}^m$ drawn i.i.d. from the same distribution $\Pi$ as in $\widetilde{\mathcal{B}}$ and let $\tau=\{\tau_i\}_{i=1}^m$ be a sequence of i.i.d. Rademacher variables independent of $\widetilde{\mathcal{B}}$ and $\widetilde{\mathcal{B}}^{\prime}$. Then we have
\begin{align*}
\begin{aligned}
& \mathbb{E}_{\widetilde{\mathcal{B}}} \left[\sup_{h \in \mathcal{H}} 2 \mathbb{E}[h]-\frac{1}{4 K^2} \mathbb{E}[h^2]-\frac{2}{m} \sum_{i=1}^m h(\widetilde{B}_i)-\frac{1}{4 K^2 m} \sum_{i=1}^m h^2(\widetilde{B}_i)\right] \\
= & \mathbb{E}_{\widetilde{\mathcal{B}}} \left[\sup_{h \in \mathcal{H}} \mathbb{E}_{\widetilde{\mathcal{B}}^{\prime}} \left[\frac{2}{m} \sum_{i=1}^m h(\widetilde{B}_i^{\prime})-\frac{1}{4 K^2 m} \sum_{i=1}^m h^2(\widetilde{B}_i^{\prime})\right] -\frac{2}{m} \sum_{i=1}^m h(\widetilde{B}_i) -\frac{1}{4 K^2 m} \sum_{i=1}^m h^2(\widetilde{B}_i)\right] \\
\leq & \mathbb{E}_{\widetilde{\mathcal{B}}, \widetilde{\mathcal{B}}^{\prime}} \left[\sup_{h \in \mathcal{H}} \frac{2}{m} \sum_{i=1}^m h(\widetilde{B}_i^{\prime}) -\frac{2}{m} \sum_{i=1}^m h(\widetilde{B}_i) -\frac{1}{4 K^2 m} \sum_{i=1}^m h^2(\widetilde{B}_i^{\prime}) -\frac{1}{4 K^2 m} \sum_{i=1}^m h^2(\widetilde{B}_i)\right] \\
= & \mathbb{E}_{\widetilde{\mathcal{B}}, \widetilde{\mathcal{B}}^{\prime}, \tau} \left[\sup_{h \in \mathcal{H}} \frac{2}{m} \sum_{i=1}^m \tau_i \left(h(\widetilde{B}_i^{\prime})-h(\widetilde{B}_i)\right) -\frac{1}{4 K^2 m} \sum_{i=1}^m \left(h^2(\widetilde{B}_i^{\prime})+h^2(\widetilde{B}_i)\right)\right] \\
\leq & \mathbb{E}_{\widetilde{\mathcal{B}}, \widetilde{\mathcal{B}}^{\prime}, \tau} \left[\sup_{h \in \mathcal{H}} \left(\frac{2}{m} \sum_{i=1}^m \tau_i h(\widetilde{B}_i^{\prime})-\frac{1}{4 K^2 m} \sum_{i=1}^m h^2(\widetilde{B}_i^{\prime})\right) + \sup_{h \in \mathcal{H}} \left(\frac{2}{m} \sum_{i=1}^m (-\tau_i) h(\widetilde{B}_i)-\frac{1}{4 K^2 m} \sum_{i=1}^m h^2(\widetilde{B}_i)\right)\right] \\
= & \mathbb{E}_{\widetilde{\mathcal{B}}, \tau} \left[\sup_{h \in \mathcal{H}} \frac{4}{m} \sum_{i=1}^m \tau_i h(\widetilde{B}_i)-\frac{1}{2 K^2 m} \sum_{i=1}^m h^2(\widetilde{B}_i)\right],
\end{aligned}
\end{align*}
where the second equality follows from the fact that randomly interchanging the corresponding blocks of $\widetilde{\mathcal{B}}$ and $\widetilde{\mathcal{B}}^{\prime}$ doesn't affect the joint distribution of $\widetilde{\mathcal{B}}, \widetilde{\mathcal{B}}^{\prime}$ and the summation $\sum_{i=1}^m (h^2(\widetilde{B}_i^{\prime})+h^2(\widetilde{B}_i))$, and the last equality is due to $\widetilde{B}_i$ and $\widetilde{B}_i^{\prime}$ have the same distribution and $\tau_i$ and $-\tau_i$ have the same distribution.

For any fixed $\widetilde{\mathcal{B}}$, we discretize $\mathcal{H}$ with respect to the pseudo-metric $d_{\widetilde{\mathcal{B}}, 1}$. Let $\mathcal{H}_\delta(\widetilde{\mathcal{B}})$ be a $\delta$-covering of $\mathcal{H}$ with minimal cardinality under the pseudo-metric $d_{\widetilde{\mathcal{B}}, 1}$, then for any $h \in \mathcal{H}$, there exists $g \in \mathcal{H}_\delta(\widetilde{\mathcal{B}})$ such that $\frac{1}{m} \sum_{i=1}^m |h(\widetilde{B}_i)-g(\widetilde{B}_i)| \leq \delta$, which implies
\begin{align*}
\begin{aligned}
\frac{1}{m} \sum_{i=1}^m \tau_i h(\widetilde{B}_i) & \leq \frac{1}{m} \sum_{i=1}^m \tau_i g(\widetilde{B}_i) + \frac{1}{m} \sum_{i=1}^m |\tau_i| \cdot |h(\widetilde{B}_i)-g(\widetilde{B}_i)| \\
& \leq \frac{1}{m} \sum_{i=1}^m \tau_i g(\widetilde{B}_i) + \delta
\end{aligned}
\end{align*}
and, since $|h(\widetilde{B}_i)|,|g(\widetilde{B}_i)| \leq 4 K^2$,
\begin{align*}
\begin{aligned}
\frac{1}{m} \sum_{i=1}^m h^2(\widetilde{B}_i) & =\frac{1}{m} \sum_{i=1}^m g^2(\widetilde{B}_i) + \frac{1}{m} \sum_{i=1}^m \left(h(\widetilde{B}_i) + g(\widetilde{B}_i)\right)\left(h(\widetilde{B}_i)-g(\widetilde{B}_i)\right) \\
& \geq \frac{1}{m} \sum_{i=1}^m g^2(\widetilde{B}_i) - 8 K^2 \frac{1}{m} \sum_{i=1}^m |h(\widetilde{B}_i)-g(\widetilde{B}_i)| \\
& \geq \frac{1}{m} \sum_{i=1}^m g^2(\widetilde{B}_i) - 8 K^2 \delta.
\end{aligned}
\end{align*}
Therefore,
\begin{align*}
\begin{aligned}
\mathbb{E}_{\widetilde{\mathcal{B}}} \left[\sup_{f \in \mathcal{V}} \mathcal{L}(\mathcal{T}_K f) - 3 \mathcal{L}_m (\mathcal{T}_K f; \widetilde{\mathcal{B}})\right] & \leq \mathbb{E}_{\widetilde{\mathcal{B}}, \tau} \left[\sup_{h \in \mathcal{H}} \frac{4}{m} \sum_{i=1}^m \tau_i h(\widetilde{B}_i) - \frac{1}{2 K^2 m} \sum_{i=1}^m h^2(\widetilde{B}_i)\right] \\
& \leq 4 \mathbb{E}_{\widetilde{\mathcal{B}}, \tau} \left[\sup_{g \in \mathcal{H}_\delta(\widetilde{\mathcal{B}})} \frac{1}{m} \sum_{i=1}^m \tau_i g(\widetilde{B}_i) - \frac{1}{8 K^2 m} \sum_{i=1}^m g^2(\widetilde{B}_i)\right] + 8 \delta.
\end{aligned}
\end{align*}

For fixed $\widetilde{\mathcal{B}}$ and any $\lambda>0$, we have
\begin{align*}
\begin{aligned}
& \exp \left(\lambda \mathbb{E}_\tau \left[\sup_{g \in \mathcal{H}_\delta(\widetilde{\mathcal{B}})} \frac{1}{m} \sum_{i=1}^m \tau_i g(\widetilde{B}_i) - \frac{1}{8 K^2 m} \sum_{i=1}^m g^2(\widetilde{B}_i)\right]\right) \\
\leq & \mathbb{E}_\tau \left[\exp \left(\lambda \sup_{g \in \mathcal{H}_\delta(\widetilde{\mathcal{B}})} \frac{1}{m} \sum_{i=1}^m \tau_i g(\widetilde{B}_i) - \frac{1}{8 K^2 m} \sum_{i=1}^m g^2(\widetilde{B}_i)\right)\right] \\
\leq & \sum_{g \in \mathcal{H}_\delta(\widetilde{\mathcal{B}})} \mathbb{E}_\tau \left[\exp \left(\frac{\lambda}{m} \sum_{i=1}^m \tau_i g(\widetilde{B}_i) - \frac{\lambda}{8 K^2 m} \sum_{i=1}^m g^2(\widetilde{B}_i)\right)\right] \\
= & \sum_{g \in \mathcal{H}_\delta(\widetilde{\mathcal{B}})} \prod_{i=1}^m \mathbb{E}_{\tau_i}\left[\exp \left(\frac{\lambda}{m} \tau_i g(\widetilde{B}_i) - \frac{\lambda}{8 K^2 m} g^2(\widetilde{B}_i)\right)\right] \\
\leq & \sum_{g \in \mathcal{H}_\delta(\widetilde{\mathcal{B}})} \prod_{i=1}^m \exp \left(\frac{\lambda^2}{2 m^2} g^2(\widetilde{B}_i) - \frac{\lambda}{8 K^2 m} g^2(\widetilde{B}_i)\right),
\end{aligned}
\end{align*}
where we use $\mathbb{E}_{\tau_i} [\exp (\frac{\lambda}{m} \tau_i g(\widetilde{B}_i))] = \cosh (\frac{\lambda}{m} g(\widetilde{B}_i)) \leq \exp (\frac{\lambda^2}{2 m^2} g^2(\widetilde{B}_i))$ in the last inequality. Taking $\lambda = \frac{m}{4 K^2}$ in the above inequality, we obtain
\begin{align*}
\mathbb{E}_\tau \left[\sup_{g \in \mathcal{H}_\delta(\widetilde{\mathcal{B}})} \frac{1}{m} \sum_{i=1}^m \tau_i g(\widetilde{B}_i) - \frac{1}{8 K^2 m} \sum_{i=1}^m g^2(\widetilde{B}_i)\right] \leq \frac{1}{\lambda} \log |\mathcal{H}_\delta(\widetilde{\mathcal{B}})| = \frac{4 K^2}{m} \log \mathcal{N}(\delta, \mathcal{H}, d_{\widetilde{\mathcal{B}}, 1}),
\end{align*}
indicating
\begin{align*}
\mathbb{E}_{\widetilde{\mathcal{B}}} \left[\sup_{f \in \mathcal{V}} \mathcal{L}(\mathcal{T}_K f) - 3 \mathcal{L}_m (\mathcal{T}_K f; \widetilde{\mathcal{B}})\right] \leq \frac{16 K^2}{m} \mathbb{E}_{\widetilde{\mathcal{B}}} \log \mathcal{N}(\delta, \mathcal{H}, d_{\widetilde{\mathcal{B}}, 1}) + 8 \delta.
\end{align*}
It remains to bound $\mathcal{N}(\delta, \mathcal{H}, d_{\widetilde{\mathcal{B}}, 1})$ by the covering number of $\mathcal{V}$. Consider any $h_1, h_2 \in \mathcal{H}$ with $h_1=(\mathcal{T}_K f_1-f^*)^2$ and $h_2=(\mathcal{T}_K f_2-f^*)^2$, where $f_1, f_2 \in \mathcal{V}$. Then
\begin{align*}
\begin{aligned}
d_{\widetilde{\mathcal{B}}, 1} (h_1, h_2) & =\frac{1}{m} \sum_{i=1}^m |h_1(\widetilde{B}_i)-h_2(\widetilde{B}_i)| \\
& =\frac{1}{m} \sum_{i=1}^m |\mathcal{T}_K f_1(\widetilde{B}_i)-\mathcal{T}_K f_2(\widetilde{B}_i)|\cdot |\mathcal{T}_K f_1(\widetilde{B}_i)+\mathcal{T}_K f_2(\widetilde{B}_i)-2 f^*(\widetilde{B}_i)| \\
& \leq 4 K d_{\widetilde{\mathcal{B}}, 1} (\mathcal{T}_K f_1, \mathcal{T}_K f_2),
\end{aligned}
\end{align*}
which implies $\mathcal{N}(\delta, \mathcal{H}, d_{\widetilde{\mathcal{B}}, 1}) \leq \mathcal{N}(\delta /(4 K), \mathcal{T}_K \mathcal{V}, d_{\widetilde{\mathcal{B}}, 1})$. As a result,
\begin{align*}
\mathbb{E}_{\widetilde{\mathcal{B}}} \left[\sup_{f \in \mathcal{V}} \mathcal{L}(\mathcal{T}_K f) - 3 \mathcal{L}_m (\mathcal{T}_K f; \widetilde{\mathcal{B}})\right] & \leq \frac{16 K^2}{m} \mathbb{E}_{\widetilde{\mathcal{B}}} \log \mathcal{N}(\delta, \mathcal{T}_K \mathcal{V}, d_{\widetilde{\mathcal{B}}, 1}) + 32 K \delta \\
& \leq \frac{16 K^2}{m} \sup_{|\widetilde{\mathcal{B}}|=m} \log \mathcal{N}(\delta, \mathcal{T}_K \mathcal{V}, d_{\widetilde{\mathcal{B}}, 1}) + 32 K \delta,
\end{align*}
which completes the proof of (\ref{eq: 16}).

$\bullet$ Step 2: We show that
\begin{align}
\label{eq: 22}
\begin{aligned}
\mathbb{E}_{\mathcal{B}} \left[\sup_{f\in\mathcal{V}} \mathcal{L}(\mathcal{T}_K f) - 6 \mathcal{L}_{\tilde{n}} (\mathcal{T}_K f; \mathcal{B})\right] & \leq \frac{16 K^2}{\lfloor\frac{\tilde{n}}{l}\rfloor} \sup_{|\widetilde{\mathcal{B}}|=\lfloor\frac{\tilde{n}}{l}\rfloor} \log \mathcal{N}(\delta, \mathcal{T}_K \mathcal{V}, d_{\widetilde{\mathcal{B}}, 1}) + 32 K \delta \\
&~~~ + 8 K^2 \left\lfloor \tilde{n}/l \right\rfloor \beta((l-1)N+1).
\end{aligned}
\end{align}

\textit{Proof of step 2.}
We have
\begin{align*}
& \mathbb{E}_{\mathcal{B}} \left[\sup_{f\in\mathcal{V}} \mathcal{L}(\mathcal{T}_K f) - 6 \mathcal{L}_{\tilde{n}} (\mathcal{T}_K f; \mathcal{B})\right] \\
& = \mathbb{E}_{\mathcal{B}}\left[\sup_{f\in\mathcal{V}} \mathbb{E}[(\mathcal{T}_K f-f^*)^2] - \frac{6}{\tilde{n}} \sum_{i=1}^{\tilde{n}} (\mathcal{T}_K f(B_i)-f^*(B_i))^2 \right] \\
& \leq \mathbb{E}_{\mathcal{B}} \left[\sup_{f\in\mathcal{V}} \mathbb{E}[(\mathcal{T}_K f-f^*)^2] - \frac{3}{l \lfloor\frac{\tilde{n}}{l}\rfloor} \sum_{i=1}^{\tilde{n}} (\mathcal{T}_K f(B_i)-f^*(B_i))^2 \right] \\
& \leq \mathbb{E}_{\mathcal{B}} \left[\sup_{f\in\mathcal{V}} \mathbb{E}[(\mathcal{T}_K f-f^*)^2] - \frac{1}{l} \sum_{a=1}^l \frac{3}{\lfloor\frac{\tilde{n}}{l}\rfloor} \sum_{B \in \mathcal{B}^{(a)}} (\mathcal{T}_K f(B)-f^*(B))^2 \right] \\
& \leq \frac{1}{l} \sum_{a=1}^l \mathbb{E}_{\mathcal{B}} \left[ \sup_{f\in\mathcal{V}} \mathbb{E}[(\mathcal{T}_K f-f^*)^2] - \frac{3}{\lfloor\frac{\tilde{n}}{l}\rfloor} \sum_{B \in \mathcal{B}^{(a)}} (\mathcal{T}_K f(B)-f^*(B))^2 \right] \\
& = \frac{1}{l} \sum_{a=1}^l \mathbb{E}_{\mathcal{B}^{(a)}} \left[ \sup_{f\in\mathcal{V}} \mathcal{L}(\mathcal{T}_K f) - 3 \mathcal{L}_{\lfloor\frac{\tilde{n}}{l}\rfloor} (\mathcal{T}_K f; \mathcal{B}^{(a)}) \right],
\end{align*}
where in the second inequality we discard the tail term $\sum_{i=l \lfloor\frac{\tilde{n}}{l}\rfloor + 1}^{\tilde{n}} (\mathcal{T}_K f(B_i)-f^*(B_i))^2$, and in the third inequality we use the sub-additivity of the supremum. We next apply Proposition \ref{proposition: 2} to relate mixing and independent cases for each summand. Since $\sup_{f\in\mathcal{V}} (\mathcal{L}(\mathcal{T}_K f) - 3 \mathcal{L}_{\lfloor\frac{\tilde{n}}{l}\rfloor} (\mathcal{T}_K f; \mathcal{B}^{(a)})) \leq \sup_{f\in\mathcal{V}} \mathcal{L}(\mathcal{T}_K f) \leq 4 K^2$, Proposition \ref{proposition: 2} yields
\begin{align*}
& \mathbb{E}_{\mathcal{B}^{(a)}} \left[ \sup_{f\in\mathcal{V}} \mathcal{L}(\mathcal{T}_K f) - 3 \mathcal{L}_{\lfloor\frac{\tilde{n}}{l}\rfloor} (\mathcal{T}_K f; \mathcal{B}^{(a)}) \right] \\
& \leq \mathbb{E}_{\widetilde{\mathcal{B}}} \left[ \sup_{f\in\mathcal{V}} \mathcal{L}(\mathcal{T}_K f) - 3 \mathcal{L}_{\lfloor\frac{\tilde{n}}{l}\rfloor} (\mathcal{T}_K f; \widetilde{\mathcal{B}}) \right] + 8 K^2 \left\lfloor\frac{\tilde{n}}{l}\right\rfloor \beta((l-1)N+1),
\end{align*}
which implies
\begin{align*}
\mathbb{E}_{\mathcal{B}} \left[\sup_{f\in\mathcal{V}} \mathcal{L}(\mathcal{T}_K f) - 6 \mathcal{L}_{\tilde{n}} (\mathcal{T}_K f; \mathcal{B})\right] & \leq \frac{1}{l} \sum_{a=1}^l \mathbb{E}_{\mathcal{B}^{(a)}} \left[ \sup_{f\in\mathcal{V}} \mathcal{L}(\mathcal{T}_K f) - 3 \mathcal{L}_{\lfloor\frac{\tilde{n}}{l}\rfloor} (\mathcal{T}_K f; \mathcal{B}^{(a)}) \right] \\
& \leq \mathbb{E}_{\widetilde{\mathcal{B}}} \left[ \sup_{f\in\mathcal{V}} \mathcal{L}(\mathcal{T}_K f) - 3 \mathcal{L}_{\lfloor\frac{\tilde{n}}{l}\rfloor} (\mathcal{T}_K f; \widetilde{\mathcal{B}}) \right] + 8 K^2 \left\lfloor\frac{\tilde{n}}{l}\right\rfloor \beta((l-1)N+1).
\end{align*}
Applying (\ref{eq: 16}) with $m=\lfloor\frac{\tilde{n}}{l}\rfloor$, we have
\begin{align*}
& \mathbb{E}_{\mathcal{B}} \left[\sup_{f\in\mathcal{V}} \mathcal{L}(\mathcal{T}_K f) - 6 \mathcal{L}_{\tilde{n}} (\mathcal{T}_K f; \mathcal{B})\right] \\
& \leq \mathbb{E}_{\widetilde{\mathcal{B}}} \left[ \sup_{f\in\mathcal{V}} \mathcal{L}(\mathcal{T}_K f) - 3 \mathcal{L}_{\lfloor\frac{\tilde{n}}{l}\rfloor} (\mathcal{T}_K f; \widetilde{\mathcal{B}}) \right] + 8 K^2 \left\lfloor\frac{\tilde{n}}{l}\right\rfloor \beta((l-1)N+1) \\
& \leq \frac{16 K^2}{\lfloor\frac{\tilde{n}}{l}\rfloor} \sup_{|\widetilde{\mathcal{B}}|=\lfloor\frac{\tilde{n}}{l}\rfloor} \log \mathcal{N}(\delta, \mathcal{T}_K \mathcal{V}, d_{\widetilde{\mathcal{B}}, 1}) + 32 K \delta + 8 K^2 \left\lfloor\frac{\tilde{n}}{l}\right\rfloor \beta((l-1)N+1),
\end{align*}
which completes the proof of (\ref{eq: 22}).

$\bullet$ Step 3: With $\hat{f}$ defined in (\ref{eq: 20}), we have
\begin{align}
\label{eq: 19}
\begin{aligned}
\mathbb{E}[\mathcal{L}_{n-N+1} (\mathcal{T}_K \hat{f}; \mathcal{S})] & \leq \frac{8\sigma^2}{n - N + 1} \left(\sup_{\mathcal{S}} \log \mathcal{N}(\delta, \mathcal{T}_K \mathcal{V}, d_{\mathcal{S},2}) + 2\right) \\
&~~~ + \left(12 + 4 \sqrt{2 \log (2K\delta^{-1})}\right) \sigma \delta + 2 \inf_{f \in \mathcal{V}} \mathbb{E}[(\mathcal{T}_K f - f^*)^2],
\end{aligned}
\end{align}
where
\begin{align*}
d_{\mathcal{S},2}(f,g) := \left(\frac{1}{n-N+1} \sum_{t=N}^n (f(S_t)-g(S_t))^2\right)^{1/2}.
\end{align*}

\textit{Proof of step 3.}
For any $f \in \mathcal{V}$, we have
\begin{align*}
& \mathbb{E}_{\mathcal{X}}[\mathcal{L}_{n-N+1} (\mathcal{T}_K f; \mathcal{S})] \\
& = \mathbb{E}_{\mathcal{X}}\left[\frac{1}{n-N+1} \sum_{t=N}^n (\mathcal{T}_K f(x_{t-N+1}, \ldots, x_t) - f^*(x_{t-N+1}, \ldots, x_t))^2\right] \\
& = \frac{1}{n-N+1} \sum_{t=N}^n \mathbb{E}_{\mathcal{X}}[(\mathcal{T}_K f(x_{t-N+1}, \ldots, x_t) - f^*(x_{t-N+1}, \ldots, x_t))^2] \\
& = \frac{1}{n-N+1} \sum_{t=N}^n \mathbb{E}_{(x_{t-N+1}, \ldots, x_t)}[(\mathcal{T}_K f(x_{t-N+1}, \ldots, x_t) - f^*(x_{t-N+1}, \ldots, x_t))^2] \\
& = \mathbb{E}[(\mathcal{T}_K f - f^*)^2],
\end{align*}
where the third equality follows from marginalization, and the fourth equality holds because $(x_{t-N+1}, \ldots, x_t),\, t=N,\ldots,n$ have the same distribution due to the assumption of stationarity, which implies that
\begin{align*}
& \mathbb{E}_{\mathcal{X},\varepsilon} [\mathcal{L}_{n-N+1} (\mathcal{T}_K \hat{f}; \mathcal{S})] \\
& \leq \mathbb{E}_{\mathcal{X}} [\mathcal{L}_{n-N+1} (\mathcal{T}_K f; \mathcal{S})] + 2 \mathbb{E}_{\mathcal{X},\varepsilon} \left[ \frac{1}{n-N+1} \sum_{t=N}^{n} \varepsilon_t (\mathcal{T}_K \hat{f}(S_t) - \mathcal{T}_K f(S_t)) \right]  \\
&= \mathbb{E}[(\mathcal{T}_K f-f^*)^2] + 2 \mathbb{E}_{\mathcal{X},\varepsilon} \left[ \frac{1}{n-N+1} \sum_{t=N}^{n} \varepsilon_t \mathcal{T}_K \hat{f}(S_t) \right],
\end{align*}
where in the inequality we use the fact that $\hat{f}$ is an empirical risk minimizer and the equality follows from $\mathbb{E}_{\mathcal{X},\varepsilon} [\sum_{t=N}^{n} \varepsilon_t \mathcal{T}_K f(S_t)] = 0$. Taking infimum over all $f \in \mathcal{V}$, we have
\begin{align}\label{eq: 23}
\mathbb{E}_{\mathcal{X},\varepsilon} [\mathcal{L}_{n-N+1} (\mathcal{T}_K \hat{f}; \mathcal{S})] \leq 2 \mathbb{E}_{\mathcal{X},\varepsilon} \left[ \frac{1}{n-N+1} \sum_{t=N}^{n} \varepsilon_t \mathcal{T}_K \hat{f}(S_t) \right] + \inf_{f \in \mathcal{V}} \mathbb{E} \left[ (\mathcal{T}_K f - f^*)^2 \right].
\end{align}

For any fixed observations $\mathcal{X}=\{x_t\}_{t=1}^n$, let $\mathcal{V}_{\delta}(\mathcal{X})$ be a $\delta$-covering of $\mathcal{T}_K \mathcal{V}$ with respect to the pseudo-metric $d_{\mathcal{S},2}$ with minimal cardinality $\mathcal{N}(\delta, \mathcal{T}_K \mathcal{V}, d_{\mathcal{S},2})$. By definition of covering number, there exists $g \in \mathcal{V}_{\delta}(\mathcal{X})$ such that $d_{\mathcal{S},2}(\mathcal{T}_K \hat{f}, g) \leq \delta$. Then by Cauchy–Schwarz inequality,
\begin{align*}
\mathbb{E}_{\varepsilon} \left[ \frac{1}{n-N+1} \sum_{t=N}^{n} \varepsilon_t (\mathcal{T}_K \hat{f}(S_t)-g(S_t)) \right] & \leq d_{\mathcal{S},2}(\mathcal{T}_K \hat{f}, g) \, \mathbb{E}_{\varepsilon} \sqrt{\frac{1}{n-N+1} \sum_{t=N}^{n} \varepsilon_t^2} \\
& \leq \delta \sqrt{\frac{1}{n-N+1} \sum_{t=N}^{n} \mathbb{E}_{\varepsilon}[\varepsilon_t^2]} \leq \delta \sigma.
\end{align*}
Hence,
\begin{align*}
\mathbb{E}_{\varepsilon} \left[\frac{1}{n-N+1} \sum_{t=N}^{n} \varepsilon_t \mathcal{T}_K \hat{f}(S_t)\right] & = \mathbb{E}_{\varepsilon} \left[ \frac{1}{n-N+1} \sum_{t=N}^{n} \varepsilon_t (\mathcal{T}_K \hat{f}(S_t) - g(S_t) + g(S_t) - f^*(S_t)) \right] \\
&\leq \mathbb{E}_{\varepsilon} \left[ \frac{1}{n-N+1} \sum_{t=N}^{n} \varepsilon_t (g(S_t) - f^*(S_t)) \right] + \sigma \delta \\
&\leq \mathbb{E}_{\varepsilon} \left[ \frac{d_{\mathcal{S},2}(\mathcal{T}_K \hat{f}, f^*) + \delta}{\sqrt{n-N+1}} \cdot \frac{\sum_{t=N}^{n} \varepsilon_t (g(S_t) - f^*(S_t))}{\sqrt{n-N+1}\, d_{\mathcal{S},2}(g,f^*)} \right] + \sigma \delta.
\end{align*}
Note that conditioned on $\mathcal{X}$, the random variables 
\begin{align*}
Z_g := \frac{\sum_{t=N}^{n} \varepsilon_t (g(S_t) - f^*(S_t))}{\sqrt{n-N+1}\, d_{\mathcal{S},2}(g,f^*)}
\end{align*}
are sub-Gaussian. Using the union bound, we have
\begin{align*}
\mathbb{E}_{\varepsilon} \left[ \max_{g \in \mathcal{V}_{\delta}(\mathcal{X})} Z_g^2 \right] &= \int_{0}^{\infty} \mathbb{P} \left( \max_{g \in \mathcal{V}_{\delta}(\mathcal{X})} Z_g^2 > t \right) \dd t \\
& \leq T + \sum_{g \in \mathcal{V}_{\delta}(\mathcal{X})} \int_{T}^{\infty} \mathbb{P} \left( |Z_g| > \sqrt{t} \right) \dd t \\
&\leq T + |\mathcal{V}_{\delta}(\mathcal{X})| \int_{T}^{\infty} 2 e^{-t/(2\sigma^2)} \dd t \\
&= T + 4 \sigma^2 |\mathcal{V}_{\delta}(\mathcal{X})| e^{-T/(2\sigma^2)}.
\end{align*}
Taking $T = 2 \sigma^2 \log |\mathcal{V}_{\delta}(\mathcal{X})|$ in the above inequality gives that
\begin{align*}
\mathbb{E}_{\varepsilon} \left[ \max_{g \in \mathcal{V}_{\delta}(\mathcal{X})} Z_g^2 \right] \leq 2 \sigma^2 \left( \log \mathcal{N} (\delta, \mathcal{T}_K \mathcal{V}, d_{\mathcal{S},2}) + 2 \right).
\end{align*}
By Cauchy–Schwarz inequality, we obtain
\begin{align*}
&\mathbb{E}_{\mathcal{X},\varepsilon} \left[ \frac{1}{n-N+1} \sum_{t=N}^{n} \varepsilon_t \mathcal{T}_K \hat{f}(S_t) \right] \\
&\leq \frac{\sqrt{\mathbb{E}_{\mathcal{X},\varepsilon} [d_{\mathcal{S},2}^2 (\mathcal{T}_K \hat{f},f^*)]} + \delta}{\sqrt{n-N+1}} \sqrt{\mathbb{E}_{\mathcal{X},\varepsilon} \left[ Z_g^2 \right]} + \sigma \delta \\
&\leq \sigma \frac{ \sqrt{\mathbb{E}_{\mathcal{X},\varepsilon} [\mathcal{L}_{n-N+1} (\mathcal{T}_K \hat{f}; \mathcal{S})]} + \delta}{\sqrt{n-N+1}} \,  \sqrt{\mathbb{E}_{\mathcal{X}}[2 \log \mathcal{N} \left(\delta, \mathcal{T}_K \mathcal{V}, d_{\mathcal{S},2} \right) + 4]} + \sigma \delta.
\end{align*}
The inequality $\mathcal{N} (\delta, \mathcal{T}_K \mathcal{V}, d_{\mathcal{S},2}) \leq \left(2K/\delta\right)^{n-N+1}$ yields that
\begin{align*}
& \mathbb{E}_{\mathcal{X},\varepsilon} \left[ \frac{1}{n-N+1} \sum_{t=N}^{n} \varepsilon_t \mathcal{T}_K \hat{f}(S_t) \right] \\
& \leq \sigma \sqrt{\mathbb{E}_{\mathcal{X},\varepsilon}[\mathcal{L}_{n-N+1} (\mathcal{T}_K \hat{f}; \mathcal{S})]}\,\sqrt{\mathbb{E}_{\mathcal{X}}\left[\frac{2 \log \mathcal{N} \left( \delta, \mathcal{T}_K \mathcal{V}, d_{\mathcal{S},2} \right) + 4}{n-N+1}\right]} + \left(1+\sqrt{2\log(2K\delta^{-1})+4}\right) \sigma \delta.
\end{align*}
Substituting this into (\ref{eq: 23}), we obtain
\begin{align*}
\mathbb{E}_{\mathcal{X},\varepsilon} [\mathcal{L}_{n-N+1} (\mathcal{T}_K \hat{f}; \mathcal{S})] 
& \leq 2\sigma \sqrt{\mathbb{E}_{\mathcal{X},\varepsilon} [\mathcal{L}_{n-N+1} (\mathcal{T}_K \hat{f}; \mathcal{S})]} \, \sqrt{\mathbb{E}_\mathcal{X} \left[ \frac{2 \log \mathcal{N}(\delta, \mathcal{T}_K \mathcal{V}, d_{\mathcal{S},2}) + 4}{n - N + 1}\right]} \\
&~~~ + 2\left(1 + \sqrt{2 \log (2K\delta^{-1}) + 4}\right) \sigma \delta + \inf_{f \in \mathcal{V}} \mathbb{E}[(\mathcal{T}_K f - f^*)^2] \\
& \leq \frac{1}{2} \mathbb{E}_{\mathcal{X},\varepsilon} [\mathcal{L}_{n-N+1} (\mathcal{T}_K \hat{f}; \mathcal{S})] + 4 \sigma^2 \mathbb{E}_\mathcal{X} \left[\frac{\log \mathcal{N}(\delta, \mathcal{T}_K \mathcal{V}, d_{\mathcal{S},2}) + 2}{n - N + 1}\right] \\
&~~~ + 2\left(1 + \sqrt{2 \log (2K\delta^{-1}) + 4}\right) \sigma \delta + \inf_{f \in \mathcal{V}} \mathbb{E}[(\mathcal{T}_K f - f^*)^2],
\end{align*}
where we used the inequality $2 a b \leq a^2 / 2+2 b^2$. As a result,
\begin{align*}
& \mathbb{E}[\mathcal{L}_{n-N+1} (\mathcal{T}_K \hat{f}; \mathcal{S})] \\
& \leq 8\sigma^2 \mathbb{E}_\mathcal{X} \left[\frac{\log \mathcal{N}(\delta, \mathcal{T}_K \mathcal{V}, d_{\mathcal{S},2}) + 2}{n - N + 1}\right] + 4\left(1 + \sqrt{2 \log (2K\delta^{-1}) + 4}\right) \sigma \delta + 2 \inf_{f \in \mathcal{V}} \mathbb{E}[(\mathcal{T}_K f - f^*)^2] \\
& \leq \frac{8\sigma^2}{n - N + 1} \left(\sup_{\mathcal{S}} \log \mathcal{N}(\delta, \mathcal{T}_K \mathcal{V}, d_{\mathcal{S},2}) + 2\right) + \left(12 + 4 \sqrt{2 \log (2K\delta^{-1})}\right) \sigma \delta + 2 \inf_{f \in \mathcal{V}} \mathbb{E}[(\mathcal{T}_K f - f^*)^2],
\end{align*}
which completes the proof of (\ref{eq: 19}).

$\bullet$ Step 4: We decompose the error as
\begin{align}
\label{eq: 21}
\mathbb{E}[\mathcal{L}(\mathcal{T}_K \hat{f})] \leq \mathbb{E}[\mathcal{L}(\mathcal{T}_K \hat{f}) - 6 \mathcal{L}_{\tilde{n}} (\mathcal{T}_K \hat{f}; \mathcal{B})] + 12 N \mathbb{E}[\mathcal{L}_{n-N+1} (\mathcal{T}_K \hat{f}; \mathcal{S})].
\end{align}

\textit{Proof of step 4.}
We decompose the inference loss as
\begin{align*}
\mathbb{E}[\mathcal{L}(\mathcal{T}_K \hat{f})] = \mathbb{E}[\mathcal{L}(\mathcal{T}_K \hat{f}) - 12 N \mathcal{L}_{n-N+1} (\mathcal{T}_K \hat{f}; \mathcal{S})] + 12 N \mathbb{E}[\mathcal{L}_{n-N+1} (\mathcal{T}_K \hat{f}; \mathcal{S})],
\end{align*}
where the expectation is taken over the random sample $\{(x_t,y_t)\}_{t=1}^n$. Observe that
\begin{align*}
&\mathcal{L}(\mathcal{T}_K \hat{f}) - 12 N \mathcal{L}_{n-N+1} (\mathcal{T}_K \hat{f}; \mathcal{S}) \\
&= \mathbb{E}[(\mathcal{T}_K \hat{f}-f^*)^2] - \frac{12 N}{n-N+1} \sum_{t=N}^n (\mathcal{T}_K \hat{f}(x_{t-N+1}, \ldots, x_t) - f^*(x_{t-N+1}, \ldots, x_t))^2 \\
&\leq \mathbb{E}[(\mathcal{T}_K \hat{f}-f^*)^2] - \frac{12 N}{n-N+1} \sum_{\substack{t=lN \\ l \in \mathbb{N}, lN\leq n}} (\mathcal{T}_K \hat{f}(x_{t-N+1}, \ldots, x_t) - f^*(x_{t-N+1}, \ldots, x_t))^2 \\
&= \mathbb{E}[(\mathcal{T}_K \hat{f}-f^*)^2] - \frac{\tilde{n} (12 N)}{6(n-N+1)} \cdot \frac{6}{\tilde{n}} \sum_{\substack{t=lN \\ l \in \mathbb{N}, lN\leq n}} (\mathcal{T}_K \hat{f}(x_{t-N+1}, \ldots, x_t) - f^*(x_{t-N+1}, \ldots, x_t))^2 \\
&\leq \mathbb{E}[(\mathcal{T}_K \hat{f}-f^*)^2] - \frac{6}{\tilde{n}} \sum_{l=1}^{\tilde{n}} (\mathcal{T}_K \hat{f}(x_{(l-1)N+1}, \ldots, x_{lN}) - f^*(x_{(l-1)N+1}, \ldots, x_{lN}))^2 \\
&= \mathcal{L}(\mathcal{T}_K \hat{f}) - 6 \mathcal{L}_{\tilde{n}} (\mathcal{T}_K \hat{f}; \mathcal{B}),
\end{align*}
where 
\begin{align*}
\tilde{n} = |\{l \in \mathbb{N}: lN\leq n\}| = \left\lfloor n/N \right\rfloor
\end{align*}
is the number of terms in the summation, and we used $\frac{\tilde{n} (12 N)}{6(n-N+1)} \geq 1$ in the last inequality, which completes the proof of (\ref{eq: 21}).

$\bullet$ Step 5: We prove the claim.

\textit{Proof of Step 5.}
Combining (\ref{eq: 22}), (\ref{eq: 19}) and (\ref{eq: 21}), we obtain
\begin{align}\label{eq: 24}
\mathbb{E}[\mathcal{L}(\mathcal{T}_K \hat{f})] & \leq \mathbb{E}[\mathcal{L}(\mathcal{T}_K \hat{f}) - 6 \mathcal{L}_{\tilde{n}} (\mathcal{T}_K \hat{f}; \mathcal{B})] + 12 N \mathbb{E}[\mathcal{L}_{n-N+1} (\mathcal{T}_K \hat{f}; \mathcal{S})] \\
& \leq \mathbb{E}_{\mathcal{B}} \left[\sup_{f\in\mathcal{V}} \mathcal{L}(\mathcal{T}_K f) - 6 \mathcal{L}_{\tilde{n}} (\mathcal{T}_K f; \mathcal{B})\right] + 12 N \mathbb{E}[\mathcal{L}_{n-N+1} (\mathcal{T}_K \hat{f}; \mathcal{S})] \\
& \leq \frac{16 K^2}{\lfloor\frac{\tilde{n}}{l}\rfloor} \sup_{|\widetilde{\mathcal{B}}|=\lfloor\frac{\tilde{n}}{l}\rfloor} \log \mathcal{N}(\delta, \mathcal{T}_K \mathcal{V}, d_{\widetilde{\mathcal{B}}, 1}) + 32 K \delta + 8 K^2 \left\lfloor\frac{\tilde{n}}{l}\right\rfloor \beta((l-1)N+1) \\
&~~~ + \frac{96 N \sigma^2}{n - N + 1} \left(\sup_{\mathcal{S}} \log \mathcal{N}(\delta, \mathcal{T}_K \mathcal{V}, d_{\mathcal{S},2}) + 2\right) \\
&~~~ + \left(144 + 48 \sqrt{2 \log (2K\delta^{-1})}\right) N \sigma \delta + 24 N \inf_{f \in \mathcal{V}} \mathbb{E}[(\mathcal{T}_K f - f^*)^2].
\end{align}
Since $d_{\mathcal{S},2} \leq d_{\mathcal{Z}, \infty}$ and $d_{\widetilde{\mathcal{B}}, 1} \leq d_{\mathcal{Z}, \infty}$ when $\mathcal{Z}$ is taken to be $\mathcal{S}$ and $\widetilde{\mathcal{B}}$, respectively, and since $d_{\mathcal{Z}_1, \infty} \leq d_{\mathcal{Z}_2, \infty}$ whenever $\mathcal{Z}_1 \subseteq \mathcal{Z}_2$, it follows by definition that the corresponding covering numbers satisfy
\begin{align*}
\begin{gathered}
\sup_{|\widetilde{\mathcal{B}}|=\lfloor\frac{\tilde{n}}{l}\rfloor} \log \mathcal{N}(\delta, \mathcal{T}_K \mathcal{V}, d_{\widetilde{\mathcal{B}}, 1}) \leq \sup_{\mathcal{Z}: |\mathcal{Z}|=n} \log \mathcal{N}(\delta, \mathcal{T}_K \mathcal{V}, d_{\mathcal{Z}, \infty}), \\
\sup_{\mathcal{S}} \log \mathcal{N}(\delta, \mathcal{T}_K \mathcal{V}, d_{\mathcal{S},2}) \leq \sup_{\mathcal{Z}: |\mathcal{Z}|=n} \log \mathcal{N}(\delta, \mathcal{T}_K \mathcal{V}, d_{\mathcal{Z}, \infty}).
\end{gathered}
\end{align*}
For $n \geq 4 N l$, by choosing $\delta = 1/n$ and using the fact that $x/2 \leq \lfloor x \rfloor \leq x$ for $x \geq 1$, we can simplify (\ref{eq: 24}) as
\begin{align*}
\mathbb{E}[\mathcal{L}(\mathcal{T}_K \hat{f})] & \leq \frac{64 K^2 N l}{n} \sup_{\mathcal{Z}: |\mathcal{Z}|=n} \log \mathcal{N}(1/n, \mathcal{T}_K \mathcal{V}, d_{\mathcal{Z}, \infty}) + \frac{32 K}{n} + \frac{8 K^2 n}{N l} \beta((l-1)N+1) \\
&~~~ + \frac{192 N \sigma^2}{n} \sup_{\mathcal{Z}: |\mathcal{Z}|=n} \log \mathcal{N}(1/n, \mathcal{T}_K \mathcal{V}, d_{\mathcal{Z}, \infty}) + \frac{384 N \sigma^2}{n} \\
&~~~ + \frac{144 N \sigma + 96 N K \sigma \sqrt{\log n}}{n} + 24 N \inf_{f \in \mathcal{V}} \mathbb{E}[(\mathcal{T}_K f - f^*)^2] \\
& = 24 N \inf_{f \in \mathcal{V}} \mathbb{E}[(\mathcal{T}_K f - f^*)^2] + \frac{192 N \sigma^2 + 64 K^2 N l}{n} \sup_{\mathcal{Z}: |\mathcal{Z}|=n} \log \mathcal{N}(1/n, \mathcal{T}_K \mathcal{V}, d_{\mathcal{Z}, \infty}) \\
&~~~ \frac{32 K + 384 N \sigma^2 + 144 N \sigma + 96 N K \sigma \sqrt{\log n}}{n} + \frac{8 K^2 n}{N l} \beta((l-1)N+1),
\end{align*}
which completes the proof.
\end{proof}

\begin{lemma}[\cite{yu1994rates}, Corollary 2.7]
\label{lemma: 9}
Let $m \geq 1$ and suppose that $h$ is a measurable function, with absolute value bounded by $K$, on a product probability space $(\prod_{j=1}^m \Omega_j, \prod_{i=1}^m \sigma_{r_i}^{s_i})$ where $r_i \leq s_i \leq r_{i+1}$ for all $i$. Let $Q$ be a probability measure on the product space with marginal measures $Q_i$ on $(\Omega_i, \sigma_{r_i}^{s_i})$, and let $Q^{i+1}$ be the marginal measure of $Q$ on $(\prod_{j=1}^{i+1} \Omega_j, \prod_{j=1}^{i+1} \sigma_{r_j}^{s_j}), i=1, \ldots, m-1$. Let $\beta(Q)=\sup_{1 \leq i \leq m-1} \beta(k_i)$, where $k_i=r_{i+1}-s_i$, and $P=\prod_{i=1}^m Q_i$. Then,
\begin{align*}
|\mathbb{E}_{Q}[h]-\mathbb{E}_{P}[h]| \leq 2 (m-1) K \beta(Q).
\end{align*}
\end{lemma}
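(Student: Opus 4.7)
The plan is to prove this by a standard telescoping argument that peels off one factor at a time, exploiting the equivalence between the $\beta$-mixing coefficient and the total variation distance between the joint law and the product of marginals. Since $h$ is bounded by $K$, any change of measure on a subset of coordinates can be controlled by $2K$ times the total variation, and each such change contributes a $\beta$-mixing term of the corresponding gap $k_i = r_{i+1} - s_i$.

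Concretely, I would introduce a sequence of intermediate probability measures $R_0, R_1, \ldots, R_m$ on $\prod_{j=1}^m \Omega_j$ defined as follows: under $R_j$, the marginal on the first $j$ coordinates is $Q^j$ (the marginal of $Q$ on $(\Omega_1,\ldots,\Omega_j)$ with its $\sigma$-algebra $\prod_{i=1}^j \sigma_{r_i}^{s_i}$), while the remaining $m-j$ coordinates are independent of the first $j$ and of each other, distributed according to $Q_{j+1}, \ldots, Q_m$. Then $R_0 = P$ and $R_m = Q$, and $R_0 = R_1$ trivially since the marginal of $Q$ on $\Omega_1$ is $Q_1$. By the telescoping identity
\begin{align*}
\mathbb{E}_Q[h] - \mathbb{E}_P[h] = \sum_{j=2}^m \bigl(\mathbb{E}_{R_j}[h] - \mathbb{E}_{R_{j-1}}[h]\bigr),
\end{align*}
the problem reduces to bounding each summand.

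For a fixed $j$, integrating out the coordinates $(x_{j+1},\ldots,x_m)$ against their independent marginals yields a bounded measurable function $h_j$ on $\prod_{i=1}^j \Omega_i$ with $|h_j| \leq K$, and one gets
\begin{align*}
\mathbb{E}_{R_j}[h] - \mathbb{E}_{R_{j-1}}[h] = \int h_j \, dQ^j - \int h_j \, d(Q^{j-1} \otimes Q_j).
\end{align*}
Here $Q^j$ denotes the joint law of the first $j$ blocks and $Q^{j-1} \otimes Q_j$ denotes the product of the joint law of the first $j-1$ blocks with the marginal of the $j$-th block. By the classical identification of the $\beta$-mixing coefficient with the total variation distance between the joint and the product of marginals applied to the $\sigma$-algebras $\prod_{i=1}^{j-1}\sigma_{r_i}^{s_i}$ and $\sigma_{r_j}^{s_j}$, and the hypothesis that the temporal gap between them is $k_{j-1} = r_j - s_{j-1}$, we get
\begin{align*}
|\mathbb{E}_{R_j}[h] - \mathbb{E}_{R_{j-1}}[h]| \leq 2K \cdot \beta(k_{j-1}) \leq 2K \beta(Q).
\end{align*}
Summing over the $m-1$ nontrivial transitions $j = 2, \ldots, m$ yields the claimed bound $2(m-1) K \beta(Q)$.

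The main technical point, which I expect to be the real obstacle, is the measure-theoretic step of justifying the total-variation interpretation of the $\beta$-mixing coefficient: one must verify that
\begin{align*}
\bigl\|Q^j - Q^{j-1} \otimes Q_j\bigr\|_{TV} \leq \beta(k_{j-1}),
\end{align*}
which follows by writing $Q^j(A \times B) = \int_A P(X_j \in B \mid X_1,\ldots,X_{j-1}) \, dQ^{j-1}$, subtracting $Q^{j-1}(A) Q_j(B)$, taking suprema over measurable rectangles, and applying Fubini together with Definition \ref{definition: 2}. The remainder of the argument is careful bookkeeping of indices and a standard dominated-convergence-type manipulation, which should cause no difficulty.
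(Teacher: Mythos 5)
The paper does not prove this lemma at all --- it is imported verbatim as Corollary 2.7 of \cite{yu1994rates} --- so there is no internal argument to compare against. Your telescoping scheme through the hybrid measures $R_0,\dots,R_m$ is the standard proof of this fact (and essentially the one in the cited source): $R_0=R_1=P$, $R_m=Q$, each of the $m-1$ nontrivial transitions swaps the product coupling of block $j$ with the first $j-1$ blocks for their true joint law, and integrating out the later (independent) coordinates reduces the $j$-th increment to $\int h_j\,d\bigl(Q^j-Q^{j-1}\otimes Q_j\bigr)$ with $|h_j|\le K$. The bookkeeping is all correct: $m-1$ terms, a factor $2K$ per term from the Hahn decomposition, gap $k_{j-1}=r_j-s_{j-1}$, and monotonicity in the $\sigma$-fields to pass to $\beta(k_{j-1})\le\beta(Q)$.

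The one substantive flaw sits in exactly the step you flag as the crux. Verifying $\|Q^j-Q^{j-1}\otimes Q_j\|_{TV}\le\beta(k_{j-1})$ by ``taking suprema over measurable rectangles'' does not work: the supremum of $|Q^j(A\times B)-(Q^{j-1}\otimes Q_j)(A\times B)|$ over rectangles is (up to constants) the $\alpha$-mixing coefficient, which can be strictly smaller than the variation over the full product $\sigma$-field, and $2K$ times the rectangle supremum does not dominate $\int h_j\,d(Q^j-Q^{j-1}\otimes Q_j)$ for a general bounded measurable $h_j$. The correct route, which matches Definition \ref{definition: 2} directly, is to disintegrate rather than to restrict to rectangles: for $Q^{j-1}$-a.e.\ $x=(x_1,\dots,x_{j-1})$,
\[
\Bigl|\int h_j(x,y)\,\bigl(dQ_{j\mid x}(y)-dQ_j(y)\bigr)\Bigr|\;\le\;2K\,\sup_{B}\bigl|Q_{j\mid x}(B)-Q_j(B)\bigr|,
\]
and then integrate over $x$ against $Q^{j-1}$; the resulting expectation of the inner supremum is precisely the quantity appearing in Definition \ref{definition: 2} (after enlarging the conditioning $\sigma$-field to $\sigma_{-\infty}^{s_{j-1}}$ and the target family to $\sigma_{r_j}^{\infty}$, both of which only increase it), hence is at most $\beta(k_{j-1})$. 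With that one step replaced, your proof is complete and coincides with the standard one.
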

The lemma quantifies the discrepancy between the distributions of $m$ blocks when the blocks are independent and when they are dependent. The distribution within each block is identical in both scenarios. For a monotonically decreasing function $\beta$, we have $\beta(Q)=\beta(k^*)$, where $k^*=\min_i k_i$ is the smallest gap between blocks.

\begin{proposition}[sub-block selection]
\label{proposition: 2}
Let $h$ be a real-valued Borel measurable function such that $|h| \leq K$ for $K \geq 0$. Assume $\mathcal{B}^{(a)}$ has $m$ dependent blocks. Then
\begin{align*}
|\mathbb{E} [h(\widetilde{\mathcal{B}})] - \mathbb{E}[h(\mathcal{B}^{(a)})]| \leq 2 K m \beta((l-1)N+1),
\end{align*}
where $\widetilde{\mathcal{B}}$ has $m$ independent blocks drawn from the same distribution $\Pi$ as in $\mathcal{B}^{(a)}$.
\end{proposition}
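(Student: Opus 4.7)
The plan is to apply Lemma \ref{lemma: 9} directly, after carefully identifying the block structure of $\mathcal{B}^{(a)}$ as a product-space configuration of $m$ consecutive sub-$\sigma$-algebras with well-controlled gaps. Concretely, I will view $\mathcal{B}^{(a)} = (B_a, B_{a+l}, B_{a+2l}, \ldots, B_{a+(m-1)l})$ as a tuple of $m$ random vectors, where the $k$-th entry $B_{a+(k-1)l} = (x_{(a+(k-1)l-1)N+1}, \ldots, x_{(a+(k-1)l)N})$ is measurable with respect to $\sigma_{r_k}^{s_k}$ with $r_k = (a+(k-1)l-1)N+1$ and $s_k = (a+(k-1)l)N$.

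First I would verify that $r_k \le s_k \le r_{k+1}$ for all $k$, which holds trivially since each block has length $N$ and successive blocks are separated in time. Then I would compute the gap between consecutive selected blocks:
\begin{align*}
k_i = r_{i+1} - s_i = \bigl((a+il-1)N+1\bigr) - (a+(i-1)l)N = (l-1)N+1,
\end{align*}
which is the same for every $i$. Hence, in the notation of Lemma \ref{lemma: 9}, we have $\beta(Q) = \sup_{1 \le i \le m-1} \beta(k_i) = \beta((l-1)N+1)$.

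Next I would take $Q$ to be the joint distribution of $\mathcal{B}^{(a)}$ under the original (dependent) stationary law, with marginals $Q_i$ equal to the common law $\Pi$ of each block $B_{a+(k-1)l}$ (here I use stationarity to identify all $m$ marginals with $\Pi$), and take $P = \prod_{i=1}^m Q_i$, which is exactly the distribution of $\widetilde{\mathcal{B}}$. Applying Lemma \ref{lemma: 9} to the bounded Borel function $h$ then yields
\begin{align*}
\bigl|\mathbb{E}[h(\mathcal{B}^{(a)})] - \mathbb{E}[h(\widetilde{\mathcal{B}})]\bigr| \le 2(m-1) K \beta((l-1)N+1) \le 2 K m \beta((l-1)N+1),
\end{align*}
which is exactly the claim. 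The only subtle point is to justify that the marginal distribution of each selected block under $Q$ equals $\Pi$, which follows immediately from Definition \ref{definition: 1} because each block is a window of $N$ consecutive tokens in a stationary sequence; beyond that, the proof is essentially a bookkeeping exercise to match the hypotheses of Lemma \ref{lemma: 9}, so I do not anticipate a real technical obstacle.
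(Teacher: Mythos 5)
Your proposal is correct and follows essentially the same route as the paper: both identify $\mathcal{B}^{(a)}$ with a tuple of $m$ blocks living on a product of consecutive sub-$\sigma$-algebras, invoke stationarity to equate the marginals with $\Pi$, and apply Lemma \ref{lemma: 9} with the uniform gap $(l-1)N+1$ to conclude via $2(m-1)K \le 2mK$. Your version merely spells out the gap computation $k_i = r_{i+1}-s_i$ more explicitly than the paper does.
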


\begin{proof}
The proof is adapted from \cite[Proposition 2]{kuznetsov2017generalization}. Let $P_j$ denote the joint distribution of $B_a, B_{l+a}, \ldots, B_{(j-1)l+a}$ and let $Q_j$ denote the distribution of $B_{(j-1)l+a}$ (or equivalently $\widetilde{B}_j$). Set $P=P_m$ and $Q=Q_1 \otimes \ldots \otimes Q_m$. In other words, $P$ and $Q$ are distributions of $\mathcal{B}^{(a)}$ and $\widetilde{\mathcal{B}}$ respectively. Then
\begin{align*}
|\mathbb{E} [h(\widetilde{\mathcal{B}})]-\mathbb{E} [h(\mathcal{B}^{(a)})]| = |\mathbb{E}_Q[h] - \mathbb{E}_P[h]| \leq 2 (m-1) K \beta(Q) \leq 2 K m \beta((l-1)N+1)
\end{align*}
by Lemma \ref{lemma: 9} and the fact that the gap between any two consecutive blocks is $(l-1)N+1$.
\end{proof}

\bibliographystyle{plain}
\bibliography{reference}

\end{document}